\renewcommand{\vec}{\mathbf}
\newcommand{\sign}{\operatorname{sign}}
\newcommand{\vreset}{V_\mathrm{reset}}
\newcommand{\vtoggle}{V_\mathrm{toggle}}
\newcommand{\vset}{V_\mathrm{set}}
\newcommand{\vread}{V_\mathrm{read}}
\newcommand{\xlow}{X_\mathrm{low}}
\newcommand{\xhigh}{X_\mathrm{high}}
\newtheorem{example}{Example}
\newtheorem{theorem}{Theorem}
\newtheorem{lemma}{Lemma}
\newtheorem{definition}{Definition}
\newtheorem{proposition}{Proposition}
\title{On The Expressivity of Recurrent Neural Cascades}
\author {
    Nadezda Alexandrovna Knorozova\textsuperscript{\rm 1},
    Alessandro Ronca\textsuperscript{\rm 2}
}
\begin{document}

\maketitle

\begin{abstract}
  Recurrent Neural Cascades (RNCs) are the recurrent neural networks with no
  cyclic dependencies among recurrent neurons. This class of recurrent 
  networks has received a lot
  of attention in practice. 
  Besides training methods for a fixed architecture such as backpropagation,
  the cascade architecture naturally allows for constructive learning methods,
  where recurrent nodes are added incrementally one at a time, often yielding
  smaller networks. 
  Furthermore, acyclicity amounts to a structural prior 
  that even for the same
  number of neurons yields a more favourable sample complexity compared to
  a fully-connected architecture.

  A central question is whether the advantages of the cascade architecture come
  at the cost of a reduced expressivity.
  We provide new insights into this question.
  We show that the regular languages captured by
  RNCs with sign and tanh activation with positive recurrent weights are 
  the \emph{star-free} regular languages. In order to establish our results
  we developed a novel framework where capabilities of RNCs are accessed by
  analysing which semigroups and groups a single neuron is able to implement.
  A notable implication of our framework is that RNCs can achieve the 
  expressivity of all regular languages by introducing neurons that can 
  implement groups.
\end{abstract}


\section{Introduction}
Recurrent Neural Cascades (RNCs) are a class of recurrent networks
that has been successfully applied in many different areas, including
information diffusion in social networks \cite{wang2017topological},
geological hazard predictions \cite{zhu2020landslide},
automated image annotation \cite{shin2016learning},
brain-computer inference \cite{zhang2018cascade},
and optics \cite{xu2020cascade}.
In the cascade architecture neurons can be layed out into a sequence so that 
every neuron has
access to the output of all preceding neurons as well as to the external input;
and at the same time, it has no dependency on the subsequent neurons.
Compared to fully-connected networks,
the cascade architecture has half of the connections. It immediately implies 
that RNCs have a more favourable \emph{sample complexity}, or dually better
generalisation capabilities. This is a consequence of the fact that
the VC dimension of recurrent networks depends directly on the number of
connections \cite{koiran1998vc}.
%

The acyclic structure of the cascade architecture naturally allows for so-called
\emph{constructive learning} methods,~cf.~\cite{fahlman1990recurrent,russell1999neural}. These methods construct the
network architecture dynamically during the training, often yielding 
smaller networks, faster training and improved generalisation.
One such method is recurrent cascade correlation, which builds the 
architecture incrementally adding one recurrent neuron at a time
\cite{fahlman1990recurrent}. 
Cascades emerge naturally here from the fact that existing nodes will not 
depend on nodes added later. RNCs also admit 
learning methods for fixed architectures, such as 
backpropagation through time, cf.\ \cite{WerbosBPTT},
where only the weights are learned. For these methods the advantage of the 
cascade architecture 
comes from the reduced number of weights.

A central question is whether the advantages of the cascade architecture come at
the cost of a reduced \emph{expressivity} compared to the fully-connected
architecture. 
The studies so far have shown that there exist regular languages that are not
captured by RNCs with monotone activation such as tanh 
\cite{giles1995constructive}. However, an exact characterisation of their
expressitivity is still missing. Furthermore, it is unclear whether the 
inability to capture all regular languages is a limitation of the cascade
architecture or rather of the considered activation functions. 
We continue this investigation and provide new insights in the capabilities of 
RNCs to capture regular languages.


\paragraph{Our contribution.}
We develop an analysis of the capabilities of RNCs establishing the following
expressivity results.
\begin{itemize}
  \item
    RNCs with sign or tanh activations capture 
    the star-free regular languages. 
    The expressivity result already holds when recurrent weights are 
    restricted to be positive. 
  \item
    RNCs with sign or tanh activations and positive recurrent weights do 
    not capture any regular language that is not star-free.
  \item
    Allowing for negative recurrent weights properly extends the expressivity
    of RNCs with sign and tanh activations beyond the star-free regular
    languages.
  \item
    We show that in principle the expressivity of RNCs can be extended to all 
    regular languages. It suffices to identify appropriate recurrent neurons.
    In particular, neurons that can implement finite simple groups. 
    As a first step, we show that second-order sign and tanh neurons can
    implement the cyclic group of order two.
\end{itemize}

Our expressivity results establish an important connection between 
recurrent neural 
networks and the wide range of formalisms whose expressivity is 
star-free regular languages. Such formalisms include  
\emph{star-free regular expressions}, cf.\ \citep{ginzburg},
\emph{Monadic First-order Logic} on finite linearly-ordered
domains, cf.\ \citep{mcnaughton1971counter}, 
\emph{Past Temporal Logic}, cf.\ \citep{manna1991completing}, and
\emph{Linear Temporal Logic} on finite traces 
\citep{degiacomo2013ltlf}.
They are also the languages recognised by 
\emph{counter-free automata acceptors}
as well as \emph{group-free automata acceptors} from where
they take their name, cf.\ 
\citep{schutzenberger1965finite,ginzburg,mcnaughton1971counter}.
On one hand, our results introduce an opportunity of employing
RNCs for learning targets that one would 
describe in any of the above formalisms.  
For such targets, RNCs are sufficiently expressive and,
compared to fully-connected recurrent neural networks, offer a more
favorable sample complexity along with a wider range 
of learning algorithms.
On the other hand, it places RNCs alongside well-understood formalisms with the
possibility of establishing further connections and leveraging many existing
fundamental results. 

As a result of our investigation we develop a novel framework where recurrent
neural networks are analysed through the lens of Semigroup and Group Theory. 
We 
establish a formal correspondence between continuous 
systems such as recurrent neural networks and discrete abstract objects such 
as semigroups and groups. Effectively we bridge RNCs with Algebraic
Automata Theory, two fields that developed independently, and so far have not 
been considered to have any interaction. 
Specifically, our framework allows for establishing the 
expressivity of RNCs by analysing the capabilities of a single neuron 
from the point of view of which semigroups and groups it can
implement.
If a neuron can implement the so-called \emph{flip-flop monoid},
then cascades of such neurons capture the star-free regular languages.
To go beyond that, 
it is sufficient to introduce neurons that implement \emph{groups}.
Our framework can be readily used to analyse the expressivity of RNCs with 
neurons that have not been considered in this work.
In particular, we introduce abstract flip-flop and group neurons, which are
the neural counterpart of the flip-flop monoid and of any given group.
To show expressivity results, it is sufficient to instantiate
our abstract neurons. Specifically in this work we show how to instantiate
flip-flop neurons with (first-order) sign and tanh, as well as a family of
grouplike neurons with second-order sign and tanh.
In a similar way, 
other results can be obtained by instantiating
the abstract neurons with different activation functions.

We provide proofs of all our results in the appendix as well as more extensive 
background on the required notions from the semigroup and  group theory. 
Furthermore, we also provide examples of star-free regular languages as found
in two different applications.


\section{Part I: Background}

We introduce the necessary background.

\subsection{Dynamical Systems}

Dynamical systems provide us with a formalism where to cast both neural networks
and automata. The kind of dynamical system relevant to us is
described next. It is discrete-time, and it has some continuity properties.
Specifically, a \emph{dynamical system} $S$ is a tuple
\begin{align*}
    S = \langle U, X, f, x^\mathrm{init}, Y, h \rangle,
\end{align*}
where $U$ is a set of elements called \emph{inputs},
$X$ is a set of elements called \emph{states},
$f: X \times U \to X$ is called \emph{dynamics function},
$x^\mathrm{init} \in X$ is called \emph{initial state},
$Y$ is a set of elements called \emph{outputs}, and
$h: X \times U \to Y$ is called \emph{output function};
furthermore, sets $U,X,Y$ are metric spaces, and functions $f,h$ are continuous.
At every time point $t = 1,2, \dots$, the system receives an input 
$u_t \in U$.
The state $x_t$ of the system at time $t$ is defined as follows. At time
$t=0$, before receiving any input, the system is in state $x_0 =
x^\mathrm{init}$. 
Then, the state $x_t$ and output $y_t$ are determined by the previous state 
$x_{t-1}$ and the current input $u_t$ as
\begin{align*}
    x_t = f(x_{t-1}, u_t), 
    \qquad 
    y_t = h(x_{t-1}, u_t).
\end{align*}
%
The \emph{dynamics} of $S$ are the tuple 
$D = \langle U, X, f \rangle$.
\emph{Subdynamics} of $D$ are any tuple $\langle U', X', f \rangle$ such that
$U' \subseteq U$, $X' \subseteq X$, and $f(X',U') \subseteq X'$. Note that
$f(X',U') = \{ f(x,u) \mid x \in X' \text{, } u \in U' \}$.
The function \emph{implemented} by system $S$ is the function that maps
every input sequence $u_1, \dots, u_\ell$ to the output sequence 
$y_1, \dots, y_\ell$.
We write $S(u_1, \dots, u_\ell) = y_1, \dots, y_\ell$.
Two systems are \emph{equivalent} if they implement the same function.



%

\paragraph{Architectures.}
A \emph{network} is a dynamical system $N$ with a factored state space  
$X  = X_1 \times \dots \times X_d$ and dynamics function of the
form 
\begin{align*}
  & f(\vec{x}, u) = \langle f_1(\vec{x}, u), \dots, f_d(\vec{x}, u) \rangle,
  \\
  & \text{where } \vec{x} = \langle x_1, \dots, x_d \rangle.
\end{align*}
Note that $f_i$ determines the $i$-th component of the state by reading the
entire state vector and the input.
A network can be expressed in a modular way by  
expressing the dynamics function as 
\begin{align*}
  & f(\vec{x}, u) = \langle f_1(x_1, u_1), 
  \dots, f_d(x_d, u_d) \rangle,
  \\
  & \text{where }
  u_i = \langle u, x_1, \dots, x_{i-1}, x_{i+1}, \dots, x_d \rangle.
\end{align*}
It is a \emph{modular view} because now the dynamics of $N$ can
be seen as made of the $d$ dynamics 
\begin{align*}
  D_i = \big\langle (U \times X_1 \times \dots \times X_{i-1} \times X_{i+1}
  \times \dots \times X_d), X_i, f_i \big\rangle.
\end{align*}
We call every $D_i$ a \emph{component} of the dynamics of $N$.
When there are no cyclic dependencies among the components of $N$,
the network can be expressed as a \emph{cascade}, where the dynamics
function is of the form 
\begin{align*}
  & f(\vec{x}, u) = \langle f_1(x_1, u_1), \dots, f_d(x_d, u_d) \rangle,
  \\
  & \text{where } u_i = \langle u, x_1, \dots, x_{i-1} \rangle.
\end{align*}
In a cascade, every component has access to the state of the preceding
components, in addition to the external input.
Differently, in a network, every component has access to the state of all 
components, in addition to the external input.

\subsection{Recurrent Neural Cascades and Networks}

  A \emph{core recurrent neuron} is a triple 
  $N = \langle V , X, f \rangle$ where
  $V \subseteq \mathbb{R}$ is the input domain,
  $X \subseteq \mathbb{R}$ are the states, and 
  function $f$ is of the form
  \begin{align*}
    f(x, u) = \alpha((w \cdot x) \oplus v),
  \end{align*}
  with $w \in \mathbb{R}$ called \emph{weight}, 
  $\oplus$ a binary operator over $\mathbb{R}$,
  and
  $\alpha: \mathbb{R} \to \mathbb{R}$ called \emph{activation function}.
  A \emph{recurrent neuron} is the composition of a core recurrent neuron $N$
  with an \emph{input function} $\beta: U \subseteq \mathbb{R}^a \to V$ that can
  be implemented by a feedforward neural network.  
  Namely,
  it is a triple 
  $\langle U , X, f_\beta \rangle$ 
  where $f_\beta(x, u) = f(x, \beta(u))$.

We often omit the term `recurrent' as it is the only kind of
neuron we consider explicitly.
  By default we will assume that the operator $\oplus$ is addition.
  We will also consider the case where $\oplus$ is product; in this case we 
  refer to the neuron as a \emph{second-order neuron}.

  A neuron is a form of dynamics, so the notions introduced for dynamical
  systems apply.
  A \emph{Recurrent Neural Cascade (RNC)} is a cascade
  whose components are recurrent neurons and whose output function is a
  feedforward neural network. 
  A \emph{Recurrent Neural Network (RNN)} is a network
  whose components are recurrent neurons and whose output function is a
  feedforward neural network.

\subsection{Automata}

Automata are dynamical systems with a finite input domain, a finite set of
states, and a finite output domain. The terminology used for automata is
different from the one used for dynamical systems.

The input and output domains are called \emph{alphabets}, and their
elements are called \emph{letters}.
Input and output sequences are seen as \emph{strings}, where a string
$\sigma_1 \dots \sigma_\ell$ is simply a concatenation of letters. The set of
all strings over an alphabet $\Sigma$ is written as $\Sigma^*$.

An automaton is a tuple
$A = \langle \Sigma, Q, \delta, q^\mathrm{init}, \Gamma, \theta \rangle$ 
where $\Sigma$ is called \emph{input alphabet} (rather than input domain), 
$Q$ is the set of states,
$\delta: Q \times \Sigma \to Q$ is called \emph{transition function} (rather
than dynamics function),
$q^\mathrm{init} \in Q$ is the initial state,
$\Gamma$ is called \emph{output alphabet} (rather than output domain), 
and
$\theta: Q \times \Sigma \to \Gamma$ is the output function.
Again, the requirement is that $\Sigma,Q,\Gamma$ are finite.
The tuple $D = \langle \Sigma, Q, \delta \rangle$ is called a 
\emph{semiautomaton}, rather than dynamics. 


In order to analyse automata, it is convenient to introduce the notion of state
transformation.
A \emph{state transformation} is a function $\tau: Q \to Q$ from states to
states, and it is called:
(i) a \emph{permutation} if\/ $\tau(Q) = Q$, 
(ii) a \emph{reset} if\/ 
$\tau(Q) = \{ q \}$ for some $q \in Q$,
(iii)
an \emph{identity} if $\tau(q)  = q$ for every $q \in Q$.
Note that an identity transformation is, in particular, a permutation
transformation.
Every input letter $\sigma \in \Sigma$ induces the state 
transformation $\delta_\sigma(q) = \delta(q,\sigma)$.
Such state transformation $\delta_\sigma$ describes all state updates triggered
by the input letter $\sigma$.
The \emph{set of state transformations} of semiautomaton $D$ is 
$\{ \delta_\sigma \mid \sigma \in \Sigma \}$.
Any two letters that induce the same state transformations are 
equivalent for the semiautomaton, in the sense that they trigger the same state
updates.
Such equivalence can be made explicit by writing a semiautomaton as consisting
of two components. The first component is an \emph{input function} that
translates input letters into letters of an \emph{internal alphabet} $\Pi$,
where each letter represents an equivalence class of inputs. The second
component is a semiautomaton operating on the internal alphabet $\Pi$.
This way, the internal letters induce distinct state transformations.

\begin{definition}
  Given a function $\phi: \Sigma \to \Pi$,
  and a semiautomaton $\langle \Pi, Q, \delta \rangle$
  their \emph{composition} is the semiautomaton
  $\langle \Sigma, Q, \delta_\phi \rangle$
  where $\delta_\phi$ is defined as 
  $\delta_\phi(q,\sigma) = \delta(q,\phi(\sigma))$.
  We call $\phi$ the \emph{input function} of the resulting semiautomaton, and
  we call $\Pi$ its \emph{internal alphabet}.
\end{definition}
We will often write semiautomata with an explicit input function---w.l.o.g.\
since we can always choose identity.

\subsection{Fundamentals of Algebraic Automata Theory}
\label{sec:algebraic_automata_theory}

Semiautomata can be represented as networks or cascades. This is a structured
alternative to state diagrams---the unstructured representation of its
transitions as a labelled graph.
For the cascade architecture, 
the fundamental theorem by \cite{krohn1965rhodes} shows that
every semiautomaton can be expressed as a cascade of so-called \emph{prime
semiautomata}.
Moreover, using only \emph{some} prime semiautomata, we obtain specialised
expressivity results.
Prime semiautomata can be partitioned into two classes. 
The first class of prime semiautomata are \emph{flip-flops}.
At their core, they have a semiautomaton that corresponds to the standard
notion of flip-flop from digital circuits, and hence they provide the
fundamental functionality of storing one bit of information, with the
possibility of setting and resetting.

\begin{definition}
  Let $\mathit{set}$, $\mathit{reset}$, $\mathit{read}$, and 
  $\mathit{high}$, $\mathit{low}$ be distinguished symbols.
  The core flip-flop semiautomaton is the semiautomaton $\langle \Pi, Q, \delta
  \rangle$ where
  the input alphabet is
  $\Pi = \{ \mathit{set}, \mathit{reset}, \mathit{read} \}$,
  the states are
  $Q = \{ \mathit{high}, \mathit{low} \}$,
  and the identities below hold:
  \begin{align*}
  \delta(\mathit{read},q) = q,
  \quad
  \delta(\mathit{set},q) = \mathit{high},
  \quad
  \delta(\mathit{reset},q) = \mathit{low}.
  \end{align*}
  A flip-flop semiautomaton is the composition of an input function with the
  core flip-flop semiautomaton.
\end{definition}
Note that the state transformations of a flip-flop characterise its intuitive
functionalities.
In particular,
$\mathit{read}$ induces an identity transformation,
$\mathit{set}$ induces a reset to $\mathit{high}$, and
$\mathit{reset}$ induces a reset to $\mathit{low}$.

%

The second class of prime semiautomata are the \emph{simple grouplike
semiautomata}.

\begin{definition}
  Let $G = (D, \circ)$ be a finite group.
  The core $G$ semiautomaton is the semiautomaton $\langle D, D, \delta
  \rangle$ where $\delta(g,h) = g \circ h$.
  A $G$ semiautomaton is the composition of an input function with the
  core $G$ semiautomaton.
  A semiautomaton is \emph{(simple) grouplike} if it is a $G$ semiautomaton for
  some (simple) group $G$.
\end{definition}

Of particular interest to us is the class of \emph{group-free} semiautomata. 
Intuitively, they are the semiautomata that do not involve groups, and do not
show any periodic behaviour. The formal definition requires additional notions
from semigroup theory, and hence we defer it to the appendix.
Next we state a direct implication of the \emph{Krohn-Rhodes decomposition
theorem}---see Theorem~3.1 of \citep{domosi2005algebraic}.
The statement of the theorem requires the notion of 
\emph{homomorphic representation} which is given later in
Definition~\ref{def:homomorphism} in a more general form that applies to
arbitrary dynamical systems.

\begin{theorem}[Krohn-Rhodes]
  \label{theorem:krohn_rhodes}
  Every semiautomaton is homomorphically represented by a cascade of prime
  semiautomata.
  Every group-free semiautomaton is homomorphically represented by a cascade of
  flip-flop semiautomata.
\end{theorem}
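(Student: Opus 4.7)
The plan is to derive this result from the classical Krohn--Rhodes decomposition theorem for transformation semigroups, following the exposition in \citep{domosi2005algebraic}. The first step is to pass from a semiautomaton $\langle \Sigma, Q, \delta \rangle$ to its transformation semigroup, namely the subsemigroup of $Q^Q$ generated by $\{ \delta_\sigma : \sigma \in \Sigma \}$ under functional composition. The key reduction is that a semiautomaton $A$ is homomorphically represented by $B$ whenever the transformation semigroup of $A$ divides that of $B$ (that is, embeds into a homomorphic image of it), so the problem becomes one of decomposing finite transformation semigroups.

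Next, I would invoke the Prime Decomposition Theorem for transformation semigroups: every finite transformation semigroup divides an iterated wreath product whose factors are either simple groups or the flip-flop (reset) semigroup $U_3$. I would then verify that the wreath product construction, read back on semiautomata, coincides with the cascade architecture of this paper — the $i$-th component's dynamics depends only on the states of components $1, \dots, i-1$ together with the external input, which is precisely how a wreath product acts. Combining this with the division-to-homomorphism reduction above yields the first statement.

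For the second statement, I would use the semigroup-theoretic characterisation of group-freeness: a semiautomaton is group-free exactly when its transformation semigroup is aperiodic, i.e.\ contains no non-trivial subgroup. Applied to such a semigroup, the Prime Decomposition Theorem cannot produce any simple-group factors, since a simple group appearing in the wreath product would, by the Krohn--Rhodes lemma on divisors of wreath products, force a non-trivial subgroup in the original semigroup. Hence only flip-flop factors remain, yielding the desired cascade of flip-flops.

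The main obstacle is the Prime Decomposition Theorem itself, whose proof is notoriously technical: the standard argument via Eilenberg's holonomy decomposition proceeds by induction on the semigroup's cardinality, at each stage peeling off either a simple group acting transitively on a $\mathcal{J}$-class or a reset on the maximal proper subsets of a $\mathcal{J}$-class, and reassembling the pieces with wreath products. I would treat this as a black box and cite Theorem~3.1 of \citep{domosi2005algebraic}; my own exposition would focus on the two translations that make the citation applicable here — matching the classical wreath-product decomposition to the cascade architecture defined in this paper, and verifying that the classical notion of semiautomaton homomorphism is the specialisation of the general dynamical-systems notion introduced in Definition~\ref{def:homomorphism}.
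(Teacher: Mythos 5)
Your proposal takes essentially the same route as the paper, which offers no proof of its own and simply states this as a direct implication of the classical Krohn--Rhodes decomposition theorem, citing Theorem~3.1 of \citep{domosi2005algebraic}. Your additional sketch of the two translation steps (wreath product to cascade, and classical semiautomaton homomorphism as a special case of the paper's dynamical-systems homomorphism) is sound and consistent with how the paper uses the result.
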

The converse of both statements in Theorem~\ref{theorem:krohn_rhodes} holds as
well. Thus, group-free semiautomata can be characterised as the semiautomata
that are homomorphically represented by a cascade of flip-flop semiautomata.
If one allows for cyclic dependencies, then
flip-flop semiautomata suffice to capture all semiautomata. This a direct
implication of the \emph{Letichevsky decomposition theorem}---see
Theorem~2.69 of \citep{domosi2005algebraic}.

\begin{theorem}[Letichevsky]
  \label{theorem:letichevsky}
  Every semiautomaton is homomorphically represented by a network of flip-flop
  semiautomata.
\end{theorem}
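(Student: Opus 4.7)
The plan is to encode an arbitrary semiautomaton $A = \langle \Sigma, Q, \delta \rangle$ as a network of flip-flops by allocating one flip-flop $F_q$ per state $q \in Q$. I arrange the $F_q$ in a network where each flip-flop reads the external input letter together with the current outputs of all other flip-flops, and I aim for the invariant that, at every time step, exactly one flip-flop is in state $\mathit{high}$ and this distinguished flip-flop identifies the state of $A$. This invariant will directly yield the homomorphism.

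To realize the invariant, I would define the input function $\phi_q$ of $F_q$ as follows. From the joint configuration of the other flip-flops, $\phi_q$ decodes the unique $q' \in Q$ whose flip-flop is currently $\mathit{high}$; given the input letter $\sigma$, it then emits $\mathit{set}$ if $\delta(q', \sigma) = q$, emits $\mathit{reset}$ if $q = q'$ and $\delta(q', \sigma) \neq q$, and emits $\mathit{read}$ otherwise. Since all underlying domains are finite, $\phi_q$ is a well-defined function from $\Sigma \times \prod_{q' \neq q} \{\mathit{high}, \mathit{low}\}$ to $\{\mathit{set}, \mathit{reset}, \mathit{read}\}$, and hence a legitimate input function for the core flip-flop semiautomaton. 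I initialize the network by placing $F_{q^\mathrm{init}}$ in $\mathit{high}$ and every other flip-flop in $\mathit{low}$.

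The homomorphism is then built by restricting attention to reachable configurations. A straightforward induction on the length of the input string shows that exactly one flip-flop is $\mathit{high}$ at every reachable configuration: a reset is issued for the previously active $F_{q'}$ precisely when $\delta(q', \sigma) \neq q'$, and in that case a set is simultaneously issued for $F_{\delta(q', \sigma)}$; if $\delta(q', \sigma) = q'$, all flip-flops see $\mathit{read}$ and the configuration is unchanged. Mapping each reachable configuration to the unique $q$ whose flip-flop is $\mathit{high}$ gives a surjection onto $Q$ that commutes with the dynamics, and maps the initial network configuration to $q^\mathrm{init}$. By the definition of homomorphic representation this exhibits $A$ as homomorphically represented by the constructed network of flip-flop semiautomata.

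The main obstacle is conceptual rather than computational: the input function $\phi_q$ must consult the full joint state of the remaining flip-flops in order to decode the current state of $A$, and this is precisely what forces the architecture to be a network and not a cascade. The rest reduces to the invariant-maintenance argument above, which is essentially a bookkeeping exercise over a finite set of cases. A minor point worth verifying carefully is that the initial configuration — and only configurations reachable from it — lie in the domain of the homomorphism, so that surjectivity and the commuting square both hold without reference to unreachable "multi-high" configurations whose image would otherwise be undefined.
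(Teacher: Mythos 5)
Your construction is correct, but note that the paper does not prove this statement at all: it is imported as a known result, citing Theorem~2.69 of D\"om\"osi and Nehaniv (2005). So there is no in-paper proof to match; what you have written is a self-contained, elementary proof via the classical one-hot encoding (one flip-flop per state, with the unique \emph{high} flip-flop identifying the current state), which is essentially the standard argument behind Letichevsky's theorem and fits the paper's definitions: the input function $\phi_q$ of each component legitimately reads the external letter together with the states of all \emph{other} components, which is exactly what the network (as opposed to cascade) architecture permits, and the map sending each one-hot configuration to its state is a continuous surjection onto $Q$ that commutes with the dynamics on the subdynamics of one-hot configurations, which is all that ``homomorphically represents'' requires. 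Two small points to tighten: first, $\phi_q$ must be a total function on $\Sigma \times \{\mathit{high},\mathit{low}\}^{|Q|-1}$, so you should say explicitly that it is defined arbitrarily (say, as $\mathit{read}$) on configurations where the visible flip-flops do not determine a unique current state; this is harmless because the one-hot configurations are closed under the transition function and the homomorphism is only defined there. Second, your third paragraph says that when $\delta(q',\sigma)=q'$ all flip-flops see $\mathit{read}$, whereas by your own definition $F_{q'}$ sees $\mathit{set}$ in that case; the net effect on the configuration is the same, but the case analysis should be stated consistently. Also, since the paper's notion of homomorphic representation places no condition on initial states, your care about the initial configuration is not needed for the statement as given, though it does no harm.
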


\subsection{Classes of Languages and Functions}

A \emph{language} $L$ over $\Sigma$ is a subset of $\Sigma^*$.
It can also be seen the indicator function $f_L: \Sigma^* \to \{ 0,1 \}$ where
$f_L(x) = 1$ iff $x \in L$.
In general we will be interested in functions 
$f: \Sigma^* \to \Gamma$ for $\Gamma$ an arbitrary output alphabet.

An \emph{acceptor} is a dynamical system whose output domain is $\{ 0,1 \}$.
The language \emph{recognised} by an acceptor is the set of strings on which the
acceptor returns $1$ as its last output.

We briefly summarise the classes of languages and functions that are relevant to
us.
The \emph{regular languages} are the ones recognised by automaton acceptors
\citep{kleene1956representation}.
The \emph{group-free regular languages} are the ones recognised by automaton
acceptors with a group-free semiautomaton, and they coincide with the
\emph{star-free regular languages}, cf.\ \citep{ginzburg}.
These notions can be naturally generalised to functions.
The \emph{regular functions} are the ones implemented by automata.
The \emph{group-free regular functions} are the ones implemented by automata
with a group-free semiautomaton.



\section{Part II: Our Framework}

In this section we present our framework for analysing RNCs. 
First, we introduce a notion of homomorphism for dynamical systems. Then, we
formalise the notion of symbol grounding. Finally, we introduce abstract neurons
which are the neural counterpart of prime semiautomata.

\subsection{Homomorphisms for Dynamical Systems}
\label{sec:homomorphism}

We introduce a new notion of homomorphism which allows us to compare
systems by comparing their dynamics.
Homomorphisms are a standard notion in automata theory, cf.
\citep{arbib1969theories}.
However, there, they do not deal with the notion of \emph{continuity}, which
holds trivially for all functions involved in automata, since they are over
finite domains.
Here we introduce homomorphisms for dynamical systems, with the requirement that
they must be continuous functions.
This allows one to infer results for continuous dynamical systems such as 
recurrent neural networks,
as stated by our Propositions~\ref{prop:homomorphism}
and~\ref{prop:homomorphism_converse}, which are instrumental to our
results.

%
%
%
%
%
%
%
%

%
\begin{definition}
  \label{def:homomorphism}
  Consider two system dynamics $D_1 = \langle U, X_1, f_1 \rangle$ and
  $D_2 = \langle U, X_2, f_2 \rangle$.
  A \emph{homomorphism} from $D_1$ to $D_2$ is a continuous surjective function
  $\psi: X_1 \to X_2$ satisfying the equality
  \begin{align*}
    \psi\big(f_1(x,u)\big) = f_2\big(\psi(x),u\big)
  \end{align*}
  for every state $x \in X_1$ and every input $u \in U$.
  We say that $D_1$ \emph{homomorphically represents} $D_2$ if $D_1$ has
  subdynamics $D_1'$ such that there is a homomorphism from $D_1'$ to $D_2$.
\end{definition}

\begin{restatable}{proposition}{prophomomorphism}
  \label{prop:homomorphism}
  If dynamics $D_1$ homomorphically represent dynamics $D_2$, then every system
  with dynamics $D_2$ admits an equivalent system with dynamics $D_1$.
\end{restatable}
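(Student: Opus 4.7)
The plan is a direct construction. Given a system $S_2 = \langle U, X_2, f_2, x_2^{\mathrm{init}}, Y, h_2 \rangle$ with dynamics $D_2$, I would build an equivalent system $S_1$ whose dynamics are $D_1$ by exploiting the homomorphism $\psi : X_1' \to X_2$ from some subdynamics $D_1' = \langle U, X_1', f_1 \rangle$ of $D_1$ to $D_2$. First I would use the surjectivity of $\psi$ to pick an initial state $x_1^{\mathrm{init}} \in X_1'$ with $\psi(x_1^{\mathrm{init}}) = x_2^{\mathrm{init}}$. Because $D_1'$ is a subdynamics of $D_1$, we have $f_1(X_1', U) \subseteq X_1'$, so once the run of $S_1$ starts inside $X_1'$ it remains there forever.

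Next I would define the output function $h_1 : X_1 \times U \to Y$ so that on $X_1' \times U$ it agrees with $(x,u) \mapsto h_2(\psi(x), u)$, and then extend it continuously to all of $X_1 \times U$. The restriction is continuous because $h_2$, $\psi$, and the projection are continuous. For the extension, under the standing assumption that the state spaces are metric and $X_1'$ is closed in $X_1$, an application of the Tietze extension theorem (coordinate-wise, if $Y$ embeds in a Euclidean space, which covers the cases of interest) yields a continuous $h_1$. Crucially, the choice of extension is irrelevant for equivalence, since the reachable states of $S_1$ never leave $X_1' \times U$.

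With the construction in place I would prove equivalence by induction on time $t$. The induction hypothesis is $\psi(x_t^{(1)}) = x_t^{(2)}$, where $x_t^{(i)}$ denotes the state of $S_i$ at time $t$. The base case $t = 0$ holds by the choice of $x_1^{\mathrm{init}}$. For the step, the homomorphism identity gives
\begin{align*}
\psi\bigl(x_{t+1}^{(1)}\bigr) = \psi\bigl(f_1(x_t^{(1)}, u_{t+1})\bigr) = f_2\bigl(\psi(x_t^{(1)}), u_{t+1}\bigr) = f_2\bigl(x_t^{(2)}, u_{t+1}\bigr) = x_{t+1}^{(2)}.
\end{align*}
From this it follows immediately that $y_{t+1}^{(1)} = h_1(x_t^{(1)}, u_{t+1}) = h_2(\psi(x_t^{(1)}), u_{t+1}) = h_2(x_t^{(2)}, u_{t+1}) = y_{t+1}^{(2)}$, so $S_1$ and $S_2$ implement the same function on every input sequence.

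The only genuinely delicate step is the continuous extension of $h_1$ off the invariant set $X_1'$. I expect this to be the main obstacle at the level of formal rigour, although it is benign: either the paper's framework supplies enough topological regularity (metric spaces, closedness of $X_1'$, a target space where Tietze-type extensions apply) to make the extension automatic, or one can observe that $X_1'$ can be taken closed without loss of generality since $f_1$ and $\psi$ are continuous and one can pass to the closure while preserving both the subdynamics condition and the homomorphism equation by continuity. Everything else is bookkeeping.
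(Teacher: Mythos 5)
Your construction is essentially the paper's own proof: choose the initial state by surjectivity of $\psi$, set $h_1(x,u) = h_2(\psi(x),u)$, and establish $\psi(x_t^{(1)}) = x_t^{(2)}$ by induction using the homomorphism identity, from which equality of outputs follows. The only divergence is your concern about continuously extending $h_1$ off $X_1'$; the paper sidesteps this entirely by building the equivalent system directly on the state space of the subdynamics $D_1'$ (so $h_2 \circ (\psi \times \mathrm{id})$ is already defined and continuous everywhere it needs to be), which is the simpler route and consistent with how the proposition is used.
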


For the second proposition, the following notions are needed, which are borrowed
from automata theory, cf. \citep{arbib1969theories}, but apply to dynamical
systems as well.

\begin{definition}
  A state $x$ of a system $S$ is \emph{reachable} if there is an input sequence
  $u_1, \dots, u_\ell$ such that the system is in state $x$ at time $\ell$.
  A system is \emph{connected} if every state is reachable.
  Given a system $S$ and one of its states $x$, the system $S^x$ is the system
  obtained by setting $x$ to be the initial state.
  Two states $x$ and $x'$ of $S$ are equivalent if the systems
  $S^x$ and $S^{x'}$ are equivalent.
  A system is in \emph{reduced form} if it has no distinct states which are
  equivalent. 
  A system is \emph{canonical} if it is connected and in reduced form.
\end{definition}

\begin{restatable}{proposition}{prophomomorphismconverse}
  \label{prop:homomorphism_converse}
  If a system $S_1$ is equivalent to a canonical system $S_2$ with a finite or
  discrete output domain, then 
  the dynamics of $S_1$ homomorphically represent the dynamics of~$S_2$.
\end{restatable}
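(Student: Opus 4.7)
The plan is to restrict $S_1$ to its reachable states, exhibit a canonical candidate map $\psi$ into $X_2$, and verify the four properties required by Definition~\ref{def:homomorphism}. Let $X_1' \subseteq X_1$ be the set of reachable states of $S_1$. Since $x_1^\mathrm{init} \in X_1'$ and $X_1'$ is closed under $f_1$, the triple $D_1' = \langle U, X_1', f_1 \rangle$ is a legitimate subdynamics of $D_1$. For each $x_1 \in X_1'$, I fix any input sequence $u_1 \cdots u_\ell$ that drives $S_1$ from $x_1^\mathrm{init}$ to $x_1$, and define $\psi(x_1)$ to be the state reached by $S_2$ on the same sequence from $x_2^\mathrm{init}$.

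Three of the four conditions then go through by fairly routine arguments. Well-definedness uses the reduced form of $S_2$: if two input sequences both drive $S_1$ to $x_1$, then for every continuation the future outputs of $S_1$ agree, and by equivalence of $S_1$ and $S_2$ so do those of $S_2$, forcing the two candidate target states to be equivalent and hence equal. Surjectivity is immediate from the connectedness of $S_2$, since any $x_2 \in X_2$ is reached by some input sequence whose application in $S_1$ produces a preimage in $X_1'$. The commutation identity $\psi(f_1(x,u)) = f_2(\psi(x),u)$ is built into the construction, since extending a driving sequence for $x$ by the letter $u$ yields a driving sequence for $f_1(x,u)$.

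The main obstacle is continuity of $\psi$, where the hypothesis on the output domain becomes essential. For any fixed input sequence $\bar{v} = v_1 \cdots v_k$, the map on $X_1'$ that returns the output of $S_1$ on $\bar{v}$ is a finite composition of the continuous functions $f_1$ and $h_1$, hence continuous. Because the output domain is finite or discrete, any such continuous map must be locally constant, so a sufficiently small neighborhood of $x_1 \in X_1'$ produces identical outputs on $\bar{v}$. Passing through the equivalence of $S_1$ and $S_2$, the signatures of $S_2$ starting from the image states agree on $\bar{v}$ as well, and the reduced form of $S_2$ ensures that the family of such signatures as $\bar{v}$ ranges over all input sequences separates the points of $X_2$. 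The technical heart is to formalise that the metric topology stipulated on $X_2$ by the dynamical systems framework is compatible with separation by finite output signatures, so that local constancy of signatures in $x_1$ forces local constancy of $\psi$ itself; once this compatibility is pinned down, $\psi$ is continuous, and combined with the earlier properties it is a homomorphism from $D_1'$ to $D_2$, as required.
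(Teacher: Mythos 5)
Your construction of $\psi$ and your verification of well-definedness, surjectivity, and the commutation identity all match the paper's proof in substance (the paper routes the same facts through Nerode-style equivalence classes $[w]_{S_1}$ and $[w]_F$, together with the reachable subsystem of $S_1$, but the content is identical). The divergence is exactly at the point you leave open: continuity of $\psi$.

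The gap is the passage from ``for each fixed input sequence $\bar{v}$ the signature map is locally constant at $x$'' to ``$\psi$ is locally constant (hence continuous) at $x$.'' The neighborhood on which the $\bar{v}$-signature is constant depends on $\bar{v}$, while the reduced form of $S_2$ only guarantees that the \emph{full} family of signatures, ranging over all input sequences, separates the points of $X_2$. To conclude $\psi(x')=\psi(x)$ for all $x'$ near $x$ you would need a single neighborhood on which infinitely many signatures agree simultaneously, and an intersection of infinitely many neighborhoods need not be a neighborhood; indeed the implication you want is false for a general metric space $X_2$ without some additional finiteness or uniformity. Your final sentence defers precisely this point (``once this compatibility is pinned down''), so the continuity claim is announced rather than proved. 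It can be closed in the case the proposition is actually applied to, namely $X_2$ finite: finitely many pairs of distinct states of $S_2$ require only finitely many distinguishing strings, and intersecting the corresponding finitely many neighborhoods yields local constancy of $\psi$. The paper takes a different tack: it assumes $\psi$ discontinuous at $x$, obtains $x'$ arbitrarily close to $x$ with $\psi(x)\neq\psi(x')$, extracts a distinguishing string $u$ from canonicity, observes that the associated output map $g$ (a finite composition of $f_1$ and $h_1$) is continuous, and uses the finiteness/discreteness of $Y$ to force $d_Y(g(x),g(x'))\geq\epsilon'$ for a fixed $\epsilon'>0$, contradicting the continuity of $g$. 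Either way, some argument controlling the dependence on the distinguishing string is required, and your write-up stops just short of supplying it.
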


\subsection{Symbol Grounding}

Our goal is to establish expressivity results for recurrent networks.
Given an input alphabet $\Sigma$ and an output alphabet $\Gamma$, we
want to establish which functions from $\Sigma^*$ to $\Gamma^*$ can be
implemented by a recurrent network.
However, recurrent networks operate on a real-valued input domain 
$U \subseteq \mathbb{R}^n$ and a real-valued output domain 
$Y \subseteq \mathbb{R}^m$.
In order to close the gap,
we introduce the notion of symbol grounding.

\begin{definition}
  Given a domain $Z \subseteq \mathbb{R}^n$ and an alphabet $\Lambda$,
  a \emph{symbol grounding} from $Z$ to $\Lambda$ is a continuous surjective
  function $\lambda: Z \to \Lambda$.
\end{definition}

Symbol groundings can be seen as connecting the subymbolic level $Z \subseteq
\mathbb{R}^n$ to the symbolic level $\Lambda$.
For an element $z$ at the subsymbolic level, the letter $\lambda(z)$ is its
meaning at the symbolic level.
Assuming that a symbol grounding $\lambda$ is surjective means that every
letter corresponds to at least one element $z \in Z$. 
The assumption is w.l.o.g.\ because we can remove the letters that do not
represent any element of the subsymbolic level.

Symbol groundings can be robust to noise, when every letter corresponds to a
ball in $\mathbb{R}^n$ rather than to a single point, so as to allow some
tolerance---any noise that stays within the ball does not affect the
symbolic level.
\begin{definition}
  A symbol grounding $\lambda: Z \to \Lambda$ is \emph{robust} if,
  for every $a \in \Lambda$, there exists a ball $B \subseteq Z$ of non-zero
  radius such that $\lambda(B) = \{ a \}$.
\end{definition}

%
%

In the following sections we establish expressivity results 
considering fixed, but arbitrary, 
input alphabet $\Sigma$, 
input domain $U \subseteq \mathbb{R}^n$, 
input symbol grounding $\lambda_\Sigma: U \to \Sigma$,
output alphabet $\Gamma$, 
output domain $Y \subseteq \mathbb{R}^m$, and 
output symbol grounding $\lambda_\Gamma: Y \to \Gamma$.
Whenever we relate a recurrent network, a RNC or a RNN, to an automaton, 
we mean that the network operates at the subsymbolic level and then its output
is mapped to the symbolic level, while the automaton operates entirely at the
symbolic level.
Formally, given a recurrent network
$\langle U, X, f, x^\mathrm{init}, Y, g \rangle$
and an automaton 
$\langle \Sigma, Q, \delta, q^\mathrm{init}, \Gamma, \theta \rangle$,
we relate the corresponding dynamical sytems 
$\langle U, X,f, x^\mathrm{init}, \Gamma, g \circ \lambda_\Gamma \rangle$
and
$\langle U, Q, \delta_{\lambda_\Sigma}, q^\mathrm{init}, \Gamma, \theta \rangle$
where $\delta_{\lambda_\Sigma}(q,u) = \delta(q,\lambda_\Sigma(u))$.
Note that both systems take inputs in $U$ and return letters in $\Gamma$.

\paragraph{Assumptions.}
We make the mild technical assumptions that 
$U$ is a \emph{compact}, 
and that the output symbol grounding $\lambda_\Gamma$ is \emph{robust}.
These assumptions, together with continuity, allow us to make use of the
Universal Approximation Theorem for feedforward neural networks, cf.\
\citep{hornik1991approximation}.


\subsection{Abstract Neurons}
\label{sec:rncs_of_flipflop_neurons}

We first introduce an abstract class of neurons that model the
behaviour of a flip-flop or grouplike semiautomaton. This allows us to state
general results about cascades and networks of such abstract neurons. Then, we 
show that this results will transfer to cascades and networks of any concrete
instantiation of such neurons.

\begin{definition}
  A \emph{core flip-flop neuron} is a core neuron $\langle V, X, f \rangle$ 
  where the set $V$ of inputs is expressed as the union of three disjoint 
  closed
  intervals $\vset,\vreset,\vread$ of non-zero length, the set $X$ of states is
  expressed as the union of two disjoint closed intervals $\xlow,\xhigh$, and
  the following conditions hold:
  \begin{align*}
    f(X,\vset) & \subseteq \xhigh,
    \\
    f(X,\vreset) & \subseteq \xlow,
    \\
    f(\xhigh,\vread) & \subseteq \xhigh,
    \\
    f(\xlow,\vread) & \subseteq \xlow.
  \end{align*}
  A \emph{flip-flop neuron} is the composition of a core flip-flop neuron with
  an input function.
  The \emph{state interpretation} of a flip-flop neuron
  is the function $\psi$ defined as 
  $\psi(x) = \mathit{high}$ for $x \in \xhigh$ and
  $\psi(x) = \mathit{low}$ for $x \in \xlow$.
\end{definition}

\begin{definition}
 \label{def:abstract_group_neuron}
  Let $G = (D, \circ)$ be a group with $D = \{ 1, \dots, n \}$.
  A \emph{core $G$ neuron} is a core neuron $\langle V, X, f \rangle$ 
  where the set $V$ of inputs is expressed as the union of $n$ disjoint 
  closed intervals $V_1, \dots, V_n$ of non-zero length, the set $X$ of states is
  expressed as the union of $n$ disjoint closed intervals $X_1, \dots, X_n$, and
  the following condition holds for every $i,j \in D$:
  \begin{align*}
    f(X_i, V_j) \subseteq X_{i \circ j}.
  \end{align*}
  A \emph{$G$ neuron} is the composition of a core $G$ neuron with
  an input function.
  The \emph{state interpretation} of a $G$ neuron
  is the function $\psi$ defined as 
  $\psi(x) = i$ for $x \in X_i$.
  A neuron is \emph{(simple) grouplike} if it is a $G$ neuron for
  some (simple) group $G$.
\end{definition}

Abstract neurons are designed to be the neural counterpart of 
flip-flop and grouplike semiautomata. Specifically, they are designed to
guarantee the existence of a homomorphism as stated in
Lemma~\ref{lemma:flipflop_to_sign_neuron} below. The
lemma and all the following results involving abstract neurons hold regardless
of the specific way the core of an abstract neuron is instantiated. 
We highlight this aspect in the claims by referring to an abstract neuron
\emph{with arbitrary core}. 

\begin{restatable}{lemma}{lemmaflipfloptosignneuron}
  \label{lemma:flipflop_to_sign_neuron}
  Every flip-flop semiautomaton is homomorphically represented by
  a flip-flop neuron with arbitrary core.
  Similarly, every $G$ semiautomaton is homomorphically represented by
  a $G$ neuron with arbitrary core.
  In either case, 
  the homomorphism is given by the state interpretation of the neuron.
\end{restatable}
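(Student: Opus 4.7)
The plan is to take the state interpretation $\psi$ itself as the homomorphism, with no restriction to subdynamics needed, and to verify the three requirements of Definition~\ref{def:homomorphism} directly from the defining inclusions of the abstract neuron. I would start at the level of cores: the core flip-flop neuron has input domain $V = \vset \cup \vreset \cup \vread$, while the core flip-flop semiautomaton has input domain $\Pi$. To align them I would introduce the natural symbol grounding $\lambda_V : V \to \Pi$ sending each of $\vset, \vreset, \vread$ to $\mathit{set}, \mathit{reset}, \mathit{read}$; composing it with $\delta$ yields a dynamics $\langle V, Q, \delta_{\lambda_V}\rangle$ sharing the input domain of the neuron, as required by the definition of homomorphism.

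With this in place I would verify the three conditions. Continuity of $\psi$ is immediate because $\xhigh$ and $\xlow$ are disjoint closed intervals—so separated by a strictly positive distance—while $\psi$ is constant on each and $Q$ carries the discrete topology. Surjectivity is likewise immediate, since both intervals have non-zero length and are therefore non-empty. The homomorphism equation $\psi(f(x,v)) = \delta_{\lambda_V}(\psi(x), v)$ I would check by splitting on the interval containing $v$: if $v \in \vset$, the inclusion $f(X,\vset) \subseteq \xhigh$ gives $\psi(f(x,v)) = \mathit{high} = \delta(\psi(x), \mathit{set})$; the case $v \in \vreset$ is symmetric; and if $v \in \vread$, the inclusions $f(\xhigh,\vread) \subseteq \xhigh$ and $f(\xlow,\vread) \subseteq \xlow$ together give $\psi(f(x,v)) = \psi(x) = \delta(\psi(x), \mathit{read})$. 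Passing from the cores to full flip-flop neurons and flip-flop semiautomata is a matter of prepending the same input function to both dynamics: the homomorphism equation propagates through this composition unchanged, since $\psi$ interacts only with the state component.

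The grouplike case is verbatim analogous. I would take the natural symbol grounding $V \to D$ sending each $V_j$ to $j$, so that the induced semiautomaton dynamics satisfies $\delta_{\lambda_V}(g, v) = g \circ j$ for $v \in V_j$. Continuity and surjectivity of $\psi$ follow as before from disjointness, closedness, and non-zero length of the intervals $X_1, \dots, X_n$. The homomorphism equation reduces on $x \in X_i$ and $v \in V_j$ to the identity $\psi(f(x,v)) = i \circ j = \psi(x) \circ j$, which is precisely what the inclusion $f(X_i, V_j) \subseteq X_{i \circ j}$ states. I do not anticipate a genuine obstacle: the abstract neurons are engineered so that their defining inclusions are exactly the state-level homomorphism equations, and the only ingredient beyond the finite setting—continuity of $\psi$—is handed to us for free by the topological separation of the state intervals.
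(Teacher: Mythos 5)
Your core-level verification---continuity of $\psi$ from the positive separation of the disjoint closed state intervals, surjectivity from their non-emptiness, and the homomorphism equation read off directly from the defining inclusions $f(X,\vset)\subseteq\xhigh$, $f(X,\vreset)\subseteq\xlow$, $f(\xhigh,\vread)\subseteq\xhigh$, $f(\xlow,\vread)\subseteq\xlow$ (and $f(X_i,V_j)\subseteq X_{i\circ j}$ in the grouplike case)---matches the paper's argument and is fine. The gap is in the sentence where you pass from cores to full neurons by ``prepending the same input function to both dynamics.'' The two input functions cannot be the same: the semiautomaton's input function is $\phi:\Sigma\to\Pi$, and under the paper's symbol-grounding convention the semiautomaton to be represented is $\langle U, Q, \delta_{\phi\circ\lambda_\Sigma}\rangle$ with the real-valued domain $U\subseteq\mathbb{R}^n$ as input domain, whereas a flip-flop neuron's input function is, by definition, a map $\beta: U \to V\subseteq\mathbb{R}$ \emph{that can be implemented by a feedforward neural network}. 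Since Definition~\ref{def:homomorphism} requires both dynamics to share the input domain $U$, the lemma forces you to actually exhibit such a $\beta$ with the property that $\beta(u)$ lands in $\vset$, $\vreset$, or $\vread$ (resp.\ $V_j$) exactly when $\phi(\lambda_\Sigma(u))$ is $\mathit{set}$, $\mathit{reset}$, or $\mathit{read}$ (resp.\ $j$). Your intermediate homomorphism on input domain $V$ does not produce this object, and the claim that ``the homomorphism equation propagates through the composition'' presupposes precisely the compatibility $\lambda_V\circ\beta = \phi\circ\lambda_\Sigma$ that has not been established.

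This is where the paper does its real work and where the hypothesis you never use---that the \emph{input} intervals have non-zero length---earns its place in the definition. The paper picks the midpoint of each input interval, forms the continuous map $u\mapsto \xi(\phi(\lambda_\Sigma(u)))$ hitting those midpoints, and invokes the Universal Approximation Theorem (using compactness of $U$) to obtain a feedforward network $\beta'$ within $\epsilon$ of it, where $\epsilon$ is the minimum radius of the intervals; the positive radius is exactly what guarantees that the approximation error keeps $\beta'(u)$ inside the correct interval. Without this step you have shown that the \emph{core} neuron homomorphically represents the \emph{core} semiautomaton over the artificial input domain $V$, but not the statement of the lemma. To repair the proof, replace ``prepend the same input function'' with this explicit UAT construction of $\beta'$ and then check, as you do, the three cases of the homomorphism equation for $f_{\beta'}$ against $\delta_{\phi\circ\lambda_\Sigma}$.
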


The lemma is based on three key observations. First, the inclusion requirements
in the definition of a flip-flop neuron determine a correspondence with
transitions of a flip-flop semiautomaton; the same holds for grouplike neurons.
Second, the fact that input intervals have non-zero length 
introduces sufficient tolerance to approximate the input function of a
semiautomaton by a feedforward neural network making use of
Universal Approximation Theorems, cf.\
\citep{hornik1991approximation}.
Third, the fact that state partitions are closed intervals ensures continuity of
a homomorphism.

The previous lemma extends to cascades and networks.


\begin{restatable}{lemma}{lemmaautomatoncascadetoneuralcascade}
  \label{lemma:automaton_cascade_to_neural_cascade}
  Every cascade (or network) of flip-flop or grouplike semiautomata
  $A_1, \dots, A_d$ is homomorphically represented
  by a cascade (network, resp.) of d neurons
  $N_1, \dots, N_d$ where $N_i$ is a flip-flop neuron if $A_i$ is a 
  flip-flop semiautomaton and it is a $G$ neuron if $A_i$ is a 
  $G$ semiautomaton.
\end{restatable}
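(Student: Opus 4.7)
The plan is to build the cascade (respectively network) of neurons componentwise by instantiating Lemma~\ref{lemma:flipflop_to_sign_neuron} for each $A_i$, and then to verify that the product of the individual state interpretations is a homomorphism of the combined dynamics. Concretely, suppose $A_i = \langle \Sigma_i, Q_i, \delta_i \rangle$ where in a cascade $\Sigma_i = \Sigma \times Q_1 \times \dots \times Q_{i-1}$ and in a network $\Sigma_i = \Sigma \times Q_1 \times \dots \times Q_{i-1} \times Q_{i+1} \times \dots \times Q_d$. For each $i$ I will construct a neuron $N_i$ whose core is a core flip-flop (or core $G$) neuron, matching the type of $A_i$, and whose input function is obtained by composing the input function of $A_i$ with the input symbol grounding $\lambda_\Sigma$ on the external coordinate and with the state interpretations $\psi_j$ on the coordinates coming from the other neurons.

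With this construction, the composite dynamics of the neural cascade (or network) has state space $X = X_1 \times \dots \times X_d$, restricted to the product of the relevant closed intervals so as to form a subdynamics in the sense of Definition~\ref{def:homomorphism}. I would define the candidate homomorphism as
\begin{align*}
  \psi(x_1, \dots, x_d) = \bigl(\psi_1(x_1), \dots, \psi_d(x_d)\bigr),
\end{align*}
where each $\psi_i$ is the state interpretation of $N_i$ guaranteed by Lemma~\ref{lemma:flipflop_to_sign_neuron}. Continuity of $\psi$ follows because each $\psi_i$ is continuous and a product of continuous maps is continuous, and surjectivity follows because each $\psi_i$ is surjective onto the finite state set of~$A_i$.

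The core of the argument is then to verify the equation
\begin{align*}
  \psi\bigl(f(\vec{x}, u)\bigr) = \delta\bigl(\psi(\vec{x}), u\bigr)
\end{align*}
componentwise. For each coordinate $i$, the input seen by $N_i$ is, by construction, the image under the composed input function of the subsymbolic pair consisting of $u$ and the states $x_j$ of the other neurons. Applying the single-component homomorphism of Lemma~\ref{lemma:flipflop_to_sign_neuron} to $N_i$ with this input yields $\psi_i(f_i(\vec{x}, u)) = \delta_i(\psi_i(x_i), \text{input to } A_i)$, and the input to $A_i$ is by design $(\lambda_\Sigma(u), \psi_1(x_1), \dots)$, which is exactly the $i$-th coordinate of $\delta(\psi(\vec{x}), u)$. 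For cascades, the dependency is strictly on preceding neurons, and for networks the dependency is on all other neurons; in both settings the componentwise reasoning is unaffected because $\psi_i$ only looks at $x_i$.

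I expect the main obstacle to be a clean handling of the input function of each $N_i$: it must be continuous, must factor through the finite internal alphabet of $A_i$ so that Lemma~\ref{lemma:flipflop_to_sign_neuron} applies, and must simultaneously translate the external symbolic input via $\lambda_\Sigma$ and the subsymbolic neural states via the $\psi_j$. The continuity of each $\psi_j$ (ensured because its fibres are unions of closed intervals separated by gaps) and the robustness of $\lambda_\Gamma$ together with compactness of $U$ allow me to realise this composite input function by a feedforward network via the Universal Approximation Theorem, exactly as in the proof of the single-component lemma. Once this is in place, the restriction to the product of the closed state intervals gives the required subdynamics, and the verification above completes the proof uniformly for the cascade and network cases.
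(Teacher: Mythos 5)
Your proposal is correct and follows essentially the same route as the paper: instantiate the single-neuron lemma for each component under the composite symbol grounding $\lambda_\Sigma \times \psi_1 \times \dots \times \psi_{j-1}$ (resp.\ all other components for networks), take the product of the state interpretations as the candidate homomorphism, and verify the commuting equation componentwise. The only stray remark is the appeal to robustness of $\lambda_\Gamma$, which plays no role in this lemma (it concerns the output function in the expressivity theorems), but this does not affect the argument.
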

%


\section{Part III: Expressivity Results}

We present our results for RNCs of sign and tanh activation.

\subsection{Implementation of Flip-Flop Neurons}
\label{sec:expressivity_rncs_sign_tanh_nonnegative}
We give precise conditions under which neurons with sign and tanh activation 
are flip-flop neurons. Following the definition of the abstract flip-flop
neuron the goal is to partition the state space of both sign and tanh into
low and high states and then find inputs inducing read, set and reset 
transitions.
For sign activation the choice is simple, we 
interpret -1 as the low state and +1 as the high state.
Since states are bounded, we know the maximum and minimum value that can be
achieved by $w \cdot x$ for any possible state $x$.
Therefore we can find inputs that will either maintain the sign or make it the
desired one.

\begin{restatable}{proposition}{propcoreflipflopsign}
  \label{prop:sign_transitions}
  Let $w >0$. A core neuron with sign activation and weight $w$
  is a core flip-flop neuron if
  its state partition is 
  \begin{align*}
      \xlow = \{ -1 \}, \; \xhigh = \{ +1 \}, 
  \end{align*}
  for some real number $a \in (0, 1)$
  and its inputs partition satisfies
  \begin{align*}
  \vreset & \in \bigl(-\infty,\, w \cdot (-a -1)\bigr]
    \\
    \vread & \in \bigl[w \cdot (a - 1),\, w \cdot (1 - a)\bigr]
    \\
    \vset & \in \bigl[w \cdot (a + 1),\, +\infty\bigr).
  \end{align*}
\end{restatable}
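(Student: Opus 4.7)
The proof is a direct case analysis verifying the four inclusion requirements in the definition of a core flip-flop neuron, exploiting the fact that the state set $X = \{-1, +1\}$ is a two-point set and $w > 0$.

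The plan is to first observe that for any $x \in X$ we have $w \cdot x \in \{-w, +w\}$, so $f(x, v) = \operatorname{sign}(w \cdot x + v)$ depends only on the sign of a quantity that differs from $v$ by at most $w$ in absolute value. Then, for each of the three input intervals I would substitute the worst-case value of $wx$ and check that the argument of $\operatorname{sign}$ has the required sign with strict inequality. For instance, for the set condition, any $v \in \vset$ satisfies $v \geq w(a+1)$, so $wx + v \geq -w + w(a+1) = wa > 0$ for every $x \in X$, giving $f(x,v) = +1 \in \xhigh$. Symmetrically, for $v \in \vreset$ one has $wx + v \leq w - w(a+1) = -wa < 0$, so $f(x,v) = -1 \in \xlow$. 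For the read conditions, restrict $x$ to the relevant half of $X$: at $x = +1$ the bound $v \geq w(a-1) = -w(1-a)$ gives $w + v \geq wa > 0$, and at $x = -1$ the bound $v \leq w(1-a)$ gives $-w + v \leq -wa < 0$.

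The only subtlety is the boundary behaviour of $\operatorname{sign}$ at $0$, since $\operatorname{sign}(0)$ is neither $+1$ nor $-1$. This is precisely where the margin parameter $a \in (0,1)$ plays its role: all four worst-case bounds above evaluate to $\pm wa$ rather than $0$, and since $w > 0$ and $a > 0$ this quantity is strictly positive or strictly negative as required. The requirement $a < 1$ is what keeps the three intervals $\vreset, \vread, \vset$ non-degenerate and pairwise disjoint, which is needed for the definition of a core flip-flop neuron to apply.

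In summary, the proof amounts to: (i) compute $wx + v$ on the two endpoints $x = \pm 1$; (ii) plug in the interval bounds on $v$; (iii) read off the sign, noting that the margin $wa$ is strictly positive. No real obstacle arises; the main thing to be careful about is maintaining strict inequalities so that $\operatorname{sign}$ returns a value in $\{-1, +1\}$ rather than $0$, and this is guaranteed by the constraint $a \in (0,1)$ together with $w > 0$.
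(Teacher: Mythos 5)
Your proposal is correct and follows essentially the same route as the paper's proof: a direct verification of the four inclusion conditions by plugging the worst-case value of $w \cdot x$ into the interval bounds, observing that each bound evaluates to $\pm w a$ with strict sign, followed by a check that the state and input sets form disjoint closed intervals of the required lengths. The only cosmetic difference is that the paper attributes disjointness of the input intervals to $a>0$ and non-degeneracy of $\vread$ to $a<1$ separately, but your concluding remark that both follow from $a\in(0,1)$ and $w>0$ covers this.
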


The tanh activation requires a more careful treatment. We represent 
the low and the high states as closed disjoint intervals including -1 and 
+1 respectively. Then using the values of state boundaries and the 
monotonicity property of tanh we can find inputs allowing for read, set and reset 
transitions without violating the state boundaries.


\begin{restatable}{proposition}{propcoreflipfloptanh}
  \label{prop:tanh_transitions}
  Let $w > 1$, and let $f(x) = \tanh(w \cdot x)$.
  A core neuron with tanh activation and weight $w$ is a core flip-flop neuron
  if its state partition is
  \begin{align*}
    \xlow = [-1, f(a)],
    \qquad
    \xhigh = [f(b), +1],
  \end{align*}
  for some real numbers $a < b$ satisfying 
  $a - f(a) > b - f(b)$,
  and its input partition satisfies
  \begin{align*}
  \vreset & \in  \bigl(-\infty,\, w \cdot (a - 1) \bigr],
  \\
   \vread & \in \bigl[w \cdot (b - f(b)),\,  w \cdot (a - f(a)) \bigr],
   \\
   \vset &\in \bigl[w \cdot (b + 1),\, +\infty\bigr).
  \end{align*}
\end{restatable}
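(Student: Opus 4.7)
The goal is to verify that the four inclusions in the definition of a core flip-flop neuron hold for the dynamics $f(x,u) = \tanh(w x + u)$ under the stated placement of intervals. Since $\tanh$ is strictly increasing and $w > 0$, each inclusion reduces to a one-sided bound on the pre-activation $w x + u$ over the relevant product of state and input ranges, so the whole proof is a monotonicity argument with a careful bookkeeping of extremal values.

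For the set and reset conditions, which must hold from any $x \in X \subseteq [-1,1]$, the plan is to observe that $\vset \subseteq [w(b+1), +\infty)$ together with $x \geq -1$ force $wx + u \geq -w + w(b+1) = wb$, so $\tanh(wx+u) \geq f(b)$, placing the output in $\xhigh$; the $\vreset$ case is entirely symmetric, using $x \leq 1$ and $u \leq w(a-1)$ to obtain the upper bound $wa$ and hence $\tanh(wx+u) \leq f(a)$. For the read conditions, the chosen boundary values of $\vread$ are calibrated precisely so that at the extremal state $x = f(b)$ with the minimal $u = w(b - f(b))$ the pre-activation equals $wb$, and at $x = f(a)$ with the maximal $u = w(a - f(a))$ it equals $wa$; monotonicity then pins $\tanh(wx+u)$ back into $\xhigh$ or $\xlow$ respectively.

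The remaining step is to check that the intervals are nondegenerate and pairwise disjoint, as required by the definition of a core flip-flop neuron. Disjointness of $\xlow$ and $\xhigh$ follows from $a < b$ and strict monotonicity of $\tanh$, while pairwise disjointness of the three input intervals follows from the trivial bounds $f(a) > -1$ and $f(b) < 1$, which yield $w(a-1) < w(b - f(b))$ and $w(a - f(a)) < w(b+1)$. The only substantive hypothesis is $a - f(a) > b - f(b)$, which is exactly what gives $\vread$ non-zero length.

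I do not expect a deep obstacle: once the monotonicity observation is made, each inclusion is a routine computation and the numerical thresholds defining $\vset$, $\vreset$, and $\vread$ were designed to pinch together exactly. The one conceptually interesting point, worth flagging rather than proving, is the role of $w > 1$: it places $\tanh(w\,\cdot)$ in the bistable regime, which is what makes it possible to choose $a < b$ simultaneously satisfying $a - f(a) > b - f(b)$, and hence what ensures that the construction is nonvacuous.
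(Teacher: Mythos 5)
Your proposal is correct and follows essentially the same route as the paper's proof: each of the four inclusions is reduced by monotonicity of $\tanh$ to the extremal pre-activation values $w\cdot a$ and $w\cdot b$, and the closedness, disjointness, and non-degeneracy checks on the state and input intervals are the same ones the paper carries out. The only difference is the point you flag rather than prove: the paper does explicitly establish that valid $a<b$ with $a - f(a) > b - f(b)$ exist when $w>1$, by analysing the derivative of $g(x)=x-\tanh(w\cdot x)$ near the origin, though this existence argument concerns non-vacuousness rather than the literal ``if'' claim.
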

Differently from sign activation the low and high states of tanh are not 
partitioned based on their sign. In fact, the low states can include
positive values and high states can include negative values. This
is determined entirely by the values of $a$ and $b$ defining the state
boundaries. We remark that the range of valid $a,b$ values 
increases with the increasing value of $w$.  
The quantities $a,b$ also determine the length
of $\vread$ interval, that impacts the robustness or the noise tolerance of 
the neuron. It is possible to choose the values of $a$ and $b$
that maximise the length of the $\vread$ interval. In particular, these are 
the points where the derivative of $f(x)$ is equal to one. 



\subsection{Expressivity of RNCs}
\label{sec:expressivity-results}

We are now ready to present our expressivity results. Note that we state them
for functions, but they apply to languages as well, since they correpond to
functions as discussed in the background section.

As a positive expressivity result, we show that RNCs capture group-free regular
functions. 

\begin{restatable}{theorem}{theoremexpressivityofrnc} 
  \label{theorem:expressivity_of_rnc}
  Every group-free regular function can be implemented by an 
  RNC of flip-flop neurons with arbitrary core.
  In particular, it can be implemented by an RNC of neurons with sign or tanh
  activation, where it is sufficient to consider positive weights.
\end{restatable}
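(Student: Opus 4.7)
The plan is to chain three homomorphic representations: from the group-free regular function down to a cascade of flip-flop semiautomata, then from such a cascade to a cascade of abstract flip-flop neurons, and finally to instantiate the abstract neurons as sign or tanh neurons with positive weights. The backbone is the Krohn--Rhodes theorem (Theorem~\ref{theorem:krohn_rhodes}) together with Lemma~\ref{lemma:automaton_cascade_to_neural_cascade} and Proposition~\ref{prop:homomorphism}.

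First, given a group-free regular function $f: \Sigma^* \to \Gamma$, I would fix a canonical automaton $A = \langle \Sigma, Q, \delta, q^\mathrm{init}, \Gamma, \theta \rangle$ implementing $f$ whose semiautomaton $D = \langle \Sigma, Q, \delta \rangle$ is group-free. By Theorem~\ref{theorem:krohn_rhodes}, $D$ is homomorphically represented by a cascade $C$ of flip-flop semiautomata. By Lemma~\ref{lemma:automaton_cascade_to_neural_cascade}, $C$ is in turn homomorphically represented by a cascade $N$ of flip-flop neurons with arbitrary core, where the homomorphism is given componentwise by the state interpretations $\psi_i$. A short composition step then shows that $N$ homomorphically represents the dynamics of $D$ (interpreted at the subsymbolic level via the input symbol grounding $\lambda_\Sigma$), since the composition of two continuous surjective functions that respect the transition equations is again such a function.

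Next I would invoke Proposition~\ref{prop:homomorphism} to conclude that there exists an RNC whose recurrent part is $N$ and which is equivalent to $A$ (read at the subsymbolic level, with outputs in $\Gamma$). The only remaining piece is that the output function of this RNC must be implementable by a feedforward neural network. Here I would use the fact that the output function assembled from Proposition~\ref{prop:homomorphism} has the form $\theta(\psi^{-1}(\cdot), \lambda_\Sigma(\cdot))$, which is continuous, has a compact input domain $X_N \times U$, and takes values in a robustly grounded finite output domain $\Gamma \subseteq Y$; a standard application of the Universal Approximation Theorem then gives a feedforward realisation whose output, once composed with $\lambda_\Gamma$, returns exactly the desired letter on each input.

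Finally, to obtain the in-particular clause, I would instantiate each abstract flip-flop neuron $N_i$ with a sign or tanh neuron, using Proposition~\ref{prop:sign_transitions} or Proposition~\ref{prop:tanh_transitions}, choosing any positive weight (e.g., any $w>0$ for sign, any $w>1$ for tanh) and the accompanying intervals $\vread, \vset, \vreset$; this produces a concrete core flip-flop neuron. The input function of each $N_i$, which must map the state of preceding neurons and the external input into the appropriate input interval, is continuous on a compact domain and its target intervals have non-zero length, so again the Universal Approximation Theorem yields a feedforward network realising it. The main obstacle in this proof is not algebraic but analytic: carefully propagating continuity and the compactness/robustness assumptions through the chain of compositions so that every feedforward approximation is both legal and free of interference between neurons; once this is handled, the algebraic content follows directly from Krohn--Rhodes.
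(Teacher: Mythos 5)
Your proposal is correct and follows essentially the same route as the paper's proof: Krohn--Rhodes to get a cascade of flip-flop semiautomata, Lemma~\ref{lemma:automaton_cascade_to_neural_cascade} to lift it to a cascade of abstract flip-flop neurons, Proposition~\ref{prop:homomorphism} to obtain an equivalent system, and the Universal Approximation Theorem plus robustness of $\lambda_\Gamma$ to realise the output function as a feedforward network. Your additional care in making the composition of homomorphic representations explicit and in spelling out the sign/tanh instantiation via Propositions~\ref{prop:sign_transitions} and~\ref{prop:tanh_transitions} only fills in details the paper leaves implicit.
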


The result is obtained by applying results from the previous sections.
We have that every group-free regular function $F$ is implemented by a
group-free automaton, whose semiautomaton is homomorphically represented by
a cascade of flip-flop semiautomata
(Theorem~\ref{theorem:krohn_rhodes}), which is in turn homomorphically
represented by a cascade of flip-flop neurons
(Lemma~\ref{lemma:automaton_cascade_to_neural_cascade}); therefore,
$F$ is implemented by a system whose dynamics are a cascade of flip-flop neurons
(Proposition~\ref{prop:homomorphism}) and whose output function is some
continuous output function; we replace the output function with a feedforward
neural network making use of the Universal Approximation Theorem, relying on the
fact that approximation will not affect the result because the output symbol
grounding is assumed to be robust.

In the rest of this section we show that RNCs of sign or tanh neurons with
positive weight do not implement regular functions that are not group-free.
In order to go beyond group-free regular functions, it is necessary for the
dynamics to show a periodic, alternating behaviour.
\begin{restatable}{lemma}{lemmapermutationalternatingstates}
  \label{lemma:permutation_alternating_states}
  If a semiautomaton that is not group-free 
  is homomorphically represented by dynamics $\langle U, X, f \rangle$, with
  homomorphism $\psi$, then there exist $u \in U$ and $x_0 \in X$ such
  that, for $x_i = f(x_{i-1},u)$, 
  the disequality $\psi(x_i) \neq \psi(x_{i+1})$
  holds for every $i \geq 0$.
\end{restatable}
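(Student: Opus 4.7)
The plan I would take is to translate the algebraic periodicity implied by non-group-freeness into the existence of a single input whose repeated application makes $\psi$ oscillate indefinitely. Because $\psi$ is a homomorphism, the identity $\psi(f(x,u)) = \delta(\psi(x), \lambda_\Sigma(u))$ propagates by induction to $\psi(x_i) = \delta_{\lambda_\Sigma(u)}^{\,i}(\psi(x_0))$, so the lemma reduces to producing a letter $\sigma \in \Sigma$ and a state $q_0 \in Q$ with $\delta_\sigma^{\,i}(q_0) \neq \delta_\sigma^{\,i+1}(q_0)$ for every $i \geq 0$. Given such a pair, surjectivity of $\lambda_\Sigma$ supplies $u \in U$ with $\lambda_\Sigma(u) = \sigma$, surjectivity of $\psi$ onto $Q$ supplies a preimage $x_0$ of $q_0$ in the subdynamics' state space $X'$, and the closure $f(X',U) \subseteq X'$ keeps the orbit inside $X'$ so that the homomorphism identity is applicable at every step.

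To extract periodicity I would first invoke the algebraic hypothesis: a non-group-free semiautomaton has a non-aperiodic transition monoid, so it contains a non-trivial subgroup, and hence admits a word $w \in \Sigma^*$ whose transformation $\delta_w$ restricts to a non-identity permutation on some subset $\{q_0, \ldots, q_{k-1}\}$ of $Q$ with $k \geq 2$. By replacing $w$ with a sufficient power and choosing $q_0$ inside the periodic part of the orbit, I may assume the strengthened form $\delta_w^{\,i}(q_0) \neq \delta_w^{\,i+1}(q_0)$ for every $i \geq 0$.

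The main obstacle is bridging from this word-level cycle to a single-letter cycle, since individual generators of a non-aperiodic transition monoid can each be aperiodic even when some product is not. I would handle this by passing to a simple grouplike divisor $A_G$ of the original semiautomaton: the non-trivial subgroup sitting inside the transition monoid gives rise to a simple grouplike semiautomaton $A_G$ in which every non-identity group element is itself a single input letter inducing a non-trivial permutation, and the homomorphic-representation relation pushes through the divisor chain so that the same subdynamics also homomorphically represents $A_G$. A letter of $\Sigma$ whose image in $A_G$ is non-identity then supplies the required single-letter cycle, after which the inductive unwinding described above completes the proof. The bulk of the technical work lies in this reduction: verifying that symbol grounding, subdynamics closure, and surjectivity of the composed homomorphism all survive the passage to the grouplike divisor.
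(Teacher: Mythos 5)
Your opening reduction is exactly the paper's: use the homomorphism identity and surjectivity of $\psi$ and $\lambda_\Sigma$ to reduce the lemma to exhibiting a single letter $\sigma$ and a state $q_0$ whose orbit under $\delta_\sigma$ alternates forever. The gap sits precisely in the step you flag as the main obstacle. Passing to a simple grouplike divisor does not bridge from words to letters: the group $G$ you extract divides the \emph{characteristic semigroup}, whose elements are transformations induced by arbitrary words over $\Sigma$, and nothing forces a subsemigroup mapping onto $G$ to contain any generator $\delta_\sigma$. Consequently there need not exist an input function $\phi:\Sigma\to G$ under which the original semiautomaton homomorphically represents a $G$ semiautomaton letter-by-letter, and in particular there need not be ``a letter of $\Sigma$ whose image in $A_G$ is non-identity.'' Concretely, take $Q=\{1,2,3\}$ and three letters acting as the idempotents $3\mapsto 2$, $1\mapsto 3$, $2\mapsto 1$ (each fixing the remaining states). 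Every letter is idempotent, yet the word applying them in the order $a,b,c,a$ induces the transformation $1\mapsto 2$, $2\mapsto 1$, $3\mapsto 1$, whose powers form a copy of $C_2$ inside the characteristic semigroup, so the semiautomaton is not group-free. But for any transition-compatible surjection $\psi$ onto a $G$ semiautomaton, $\delta_\sigma^2=\delta_\sigma$ forces $\phi(\sigma)^2=\phi(\sigma)$, hence $\phi(\sigma)=e$ for every letter: the divisor relation does not ``push through'' to single letters, and your final step has nothing to apply to.

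The paper closes the same word-to-letter gap by a different and more elementary device: it writes the non-identity permutation $p$ induced by a word as the composition $\tau_1\circ\dots\circ\tau_k$ of the individual letter-transformations on the state set of a subsemiautomaton, observes that a composition of self-maps of a finite set can be a permutation only if every factor is a permutation, and that at least one factor is not the identity; that single letter then yields the alternating orbit via the inductive unwinding you already have. If you want to repair your argument, replace the divisor-chain step with a factorization argument of this type --- and note that its delicate point is exhibiting a state subset that is genuinely closed under the \emph{individual} letters of the word, not just under the word itself; that closure is where all the remaining care must go.
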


Considering the possibility that a sign or tanh neuron with positive weight
can satisfy the condition above, we show that a constant input yields a
convergent sequence of states. In fact, more generally, such a sequence is
convergent even when the input is not constant but itself convergent---this
stronger property is required in the proof of the lemma below. 
Overall, we show that a cascade of sign or tanh neurons with positive weight
can capture a group-free semiautomaton only at the cost of
generating a sequence of converging alternating states. This would amount to an
essential discontinuity for any candidate homomorphism.

\begin{restatable}{lemma}{lemmagroupfree}
  \label{lemma:groupfree}
  Every semiautomaton that is not group-free is not
  homomorphically represented by a cascade where each component is a neuron
  with sign or tanh activation and positive weight.
\end{restatable}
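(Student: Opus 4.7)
The plan is to proceed by contradiction. Suppose a semiautomaton $A$ that is not group-free is homomorphically represented by a cascade $C$ of $d$ neurons, each with sign or tanh activation and positive weight; call the homomorphism $\psi$ and the target (finite) state set $X_2$. By Lemma~\ref{lemma:permutation_alternating_states} there exist an input $u \in U$ and an initial state $x_0$ such that the iterates $x_i = f(x_{i-1}, u)$ satisfy $\psi(x_i) \neq \psi(x_{i+1})$ for every $i \geq 0$. The heart of the proof is to show that the sequence $(x_i)_{i \geq 0}$ must converge in the cascade's state space, which contradicts the alternation of $\psi(x_i)$ in the finite (hence discrete) set $X_2$.

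To establish convergence I would induct on the position $j$ of a neuron in the cascade, proving that the component sequence $\bigl(x_i^{(j)}\bigr)_{i \geq 0}$ converges. For the base case $j = 1$, the first neuron sees only the constant external input $u$; after its continuous input function $\beta_1$, the effective scalar input is a single constant, so the one-dimensional dynamics $x \mapsto \alpha(w x + \beta_1(u))$ (with $\alpha$ monotone and $w > 0$) is monotone and bounded, which yields a convergent sequence---this is exactly the convergence property for constant inputs noted in the paragraph preceding the lemma. For the inductive step, neuron $j$ is driven by $u$ together with $x_i^{(1)}, \dots, x_i^{(j-1)}$ through its continuous input function $\beta_j$; by the inductive hypothesis and continuity of $\beta_j$, the scalar input to neuron $j$ is convergent (not constant in general). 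Invoking the stronger convergence property explicitly flagged in the text---a sign or tanh neuron with positive weight driven by a convergent scalar input has a convergent state sequence---we conclude that $\bigl(x_i^{(j)}\bigr)$ converges as well.

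Having converged componentwise, the full vector $(x_i)$ converges to some limit $x^\star$. Since $X_2$ is finite, each fiber $\psi^{-1}(a)$ is clopen in the domain of $\psi$, so $\psi$ is locally constant; in particular the Cauchy tail of $(x_i)$ eventually lies in a single such fiber, forcing $\psi(x_i)$ to be eventually constant. This directly contradicts $\psi(x_i) \neq \psi(x_{i+1})$ for all $i$, completing the argument.

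The main obstacle I anticipate is the convergence step for tanh under a merely convergent (not constant) driving input: unlike the sign case, where the state space is already discrete and the dynamics become eventually constant almost immediately, the tanh map $x \mapsto \tanh(w x + v_i)$ with a time-varying $v_i \to v^\star$ is a non-autonomous monotone iteration, and care is needed to leverage monotonicity and boundedness to rule out limit-cycle behaviour. A secondary subtlety is that the limit $x^\star$ need not lie in the subdynamics' state space on which $\psi$ is defined; this is handled by working with the clopen partition induced by $\psi$ and the Cauchy property of $(x_i)$ rather than evaluating $\psi$ at $x^\star$ directly.
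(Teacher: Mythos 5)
Your proposal follows essentially the same route as the paper's proof: contradiction via Lemma~\ref{lemma:permutation_alternating_states}, then an induction along the cascade showing that each component sequence $(x_i^{(j)})_{i\geq 0}$ converges because its driving input (the external input together with the states of the preceding, already-convergent components, passed through the continuous $\beta_j$) is a convergent scalar sequence, and finally a clash between convergence of $(x_i)$ and the forced alternation of $(\psi(x_i))$. The one place where you have not actually supplied an argument is the step you yourself flag as the main obstacle: convergence of the non-autonomous iteration $x \mapsto \alpha(wx + v_i)$ with $v_i \to v^\star$ merely convergent. This is precisely where the paper does its real work, in two dedicated auxiliary lemmas (Lemma~\ref{lemma:convergence_of_iterated_sign} for sign, by a case analysis on where the limit $u_*$ sits relative to $\pm w$, and Lemma~\ref{lemma:convergence_of_iterated_activation} for any bounded, monotone, Lipschitz activation, by sandwiching $x_n$ between monotone bounded sequences $\ell_n \leq x_n \leq r_n$ whose gap is controlled by the Lipschitz constant and invoking the Monotone Convergence and Squeeze Theorems). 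Without that lemma your induction has no engine, so as written the proof is a correct skeleton with its central technical claim deferred.

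One further remark on your closing step. Your worry that the limit $x^\star$ may fall outside the subdynamics' state space is legitimate (the definition of subdynamics does not require the state set to be closed), but the fix you sketch does not quite close it: fibers $\psi^{-1}(a)$ that are clopen in the subspace topology need not be uniformly separated, so a Cauchy sequence can hop between them forever (e.g.\ $\psi(1/n) = n \bmod 2$ on $\{1/n\}_{n\geq 1}$ is continuous with clopen fibers, yet alternates along a Cauchy sequence). The paper itself simply concludes that a convergent $(x_i)$ with non-convergent $(\psi(x_i))$ contradicts continuity of $\psi$, which carries the same wrinkle; to make either version airtight one needs the fibers to be at positive mutual distance (as they are for the closed-interval state partitions of the concrete neurons) or the domain of $\psi$ to be closed.
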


Then the expressivity result follows from the lemma by
Proposition~\ref{prop:homomorphism_converse}.
\begin{restatable}{theorem}{theoremmainonetwo} 
  \label{theorem:main_1_2}
  For any regular function $F$ that is not group-free, there is no RNC
  implementing $F$ whose components are neurons with sign or tanh activation and
  positive weight.
\end{restatable}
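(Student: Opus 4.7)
The plan is to derive the theorem as an almost immediate consequence of Lemma~\ref{lemma:groupfree} together with Proposition~\ref{prop:homomorphism_converse}. Suppose, for contradiction, that some RNC $R$ with sign or tanh neurons having positive recurrent weight implements the regular function $F$. Since $F$ is regular, I can take the canonical (minimal) automaton $A$ implementing $F$; standard results from automata theory guarantee that such an $A$ exists, is unique up to isomorphism, and has a finite output alphabet $\Gamma$. Moreover, the fact that $F$ is \emph{not} group-free precisely means that the semiautomaton underlying $A$ is not group-free.

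Next I would bridge the symbolic and subsymbolic levels using the input symbol grounding. Following the setup in the paper, the RNC $R$ is viewed as the dynamical system with output in $\Gamma$ obtained by composing its real-valued output with $\lambda_\Gamma$, and the automaton $A$ is viewed as the dynamical system taking inputs in $U$ via the grounding $\lambda_\Sigma$, i.e.\ with transition function $\delta_{\lambda_\Sigma}(q,u) = \delta(q, \lambda_\Sigma(u))$. Since both implement $F$ at this common level, they are equivalent in the sense of the paper. Then Proposition~\ref{prop:homomorphism_converse} applies: because $A$ is canonical and has a finite output domain, the dynamics of $R$ homomorphically represent the dynamics of $A$.

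The dynamics of $R$ are a cascade whose components are sign or tanh neurons with positive weight, and the dynamics of $A$ are a semiautomaton that is not group-free. But this is exactly the situation ruled out by Lemma~\ref{lemma:groupfree}, yielding a contradiction. Therefore no such RNC $R$ can exist.

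The step that carries all the real mathematical weight has already been done in Lemma~\ref{lemma:groupfree}; the remaining work here is essentially bookkeeping. The only point that requires care is checking that the hypotheses of Proposition~\ref{prop:homomorphism_converse} are genuinely met: namely, that $A$ can be chosen canonical (connected and reduced), that its output alphabet $\Gamma$ is finite so the discreteness condition holds, and that the equivalence of $R$ and $A$ is formulated at the common level that includes the symbol groundings. Once those identifications are explicit, the contradiction is immediate.
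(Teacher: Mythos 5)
Your argument is correct and matches the paper's own proof: both invoke the canonical automaton of $F$ (whose semiautomaton is not group-free), apply Proposition~\ref{prop:homomorphism_converse} to get that any equivalent RNC would have to homomorphically represent it, and then contradict Lemma~\ref{lemma:groupfree}. The only difference is presentational (contradiction versus a direct chain), and your extra care about the symbol groundings and the finiteness of $\Gamma$ is consistent with the paper's setup.
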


In light of Theorem~\ref{theorem:expressivity_of_rnc} and
Theorem~\ref{theorem:main_1_2}, we identify a class of RNCs that can
implement all group-free regular functions and no other regular function.
\begin{restatable}{theorem}{expressivityofrncwithsignandtanhwithposweights}
  \label{theorem:expressivity_of_rnc_with_sign_tanh_positive_weight}
  The class of regular functions that can be implemented by RNCs of sign or tanh
  neurons with positive weight is the group-free regular functions.
\end{restatable}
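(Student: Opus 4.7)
The plan is to derive this theorem as an immediate consequence of the two preceding results, so the proof is essentially a combination argument with no new technical content. I would organize it as a two-way containment between the class of regular functions implementable by RNCs of sign or tanh neurons with positive weight and the class of group-free regular functions.

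For the forward containment (every group-free regular function is implementable by such an RNC), I would simply invoke Theorem \ref{theorem:expressivity_of_rnc}, which already states that every group-free regular function can be implemented by an RNC of sign or tanh neurons, and notes that it suffices to use positive weights. No further work is needed here; the construction has been carried out via the chain through Krohn--Rhodes (Theorem \ref{theorem:krohn_rhodes}), the neural representation lemma (Lemma \ref{lemma:automaton_cascade_to_neural_cascade}), and the homomorphism transfer (Proposition \ref{prop:homomorphism}).

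For the reverse containment, I would argue by contraposition: if a regular function $F$ is not group-free, then Theorem \ref{theorem:main_1_2} directly states that no RNC whose components are sign or tanh neurons with positive weight implements $F$. Equivalently, any such RNC that implements a regular function must implement a group-free one.

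There is no main obstacle here, since the real work has already been discharged in Theorem \ref{theorem:expressivity_of_rnc} (positive direction, building on Propositions \ref{prop:sign_transitions} and \ref{prop:tanh_transitions} and Lemma \ref{lemma:automaton_cascade_to_neural_cascade}) and in Theorem \ref{theorem:main_1_2} (negative direction, building on Lemmas \ref{lemma:permutation_alternating_states} and \ref{lemma:groupfree} together with Proposition \ref{prop:homomorphism_converse}). The only thing to double-check is that both theorems are stated for the same class of networks, namely RNCs whose recurrent components are sign or tanh neurons restricted to positive recurrent weights, which is indeed the case. Accordingly, the proof can be written in a single short paragraph combining the two theorems.
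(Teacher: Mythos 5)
Your proposal is correct and matches the paper's proof exactly: the paper also derives this theorem in two lines by citing Theorem~\ref{theorem:expressivity_of_rnc} for the positive containment and Theorem~\ref{theorem:main_1_2} for the negative one. Nothing further is needed.
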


\section{Necessary Conditions for Group-freeness}
\label{sec:necessary_conditions}

We show that both acyclicity and positive weights of sign and tanh are
necessary to stay within the group-free functions. 
First, recurrent neural networks, with arbitrary dependencies among their
neurons, implement all regular functions, including the ones that are not
group-free. 
\begin{restatable}{theorem}{theoremexpressivityofrnn} 
  \label{theorem:expressivity_of_rnn}
  Every regular function can be implemented by an 
  RNN of flip-flop neurons with arbitrary core.
  In particular, it can be implemented by an RNN of neurons with sign or tanh
  activation.
\end{restatable}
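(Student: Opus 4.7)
The plan is to mirror the proof of Theorem~\ref{theorem:expressivity_of_rnc}, replacing Krohn--Rhodes by Letichevsky and dropping the group-freeness restriction. Let $F$ be an arbitrary regular function and let $A = \langle \Sigma, Q, \delta, q^\mathrm{init}, \Gamma, \theta \rangle$ be an automaton implementing $F$, with underlying semiautomaton $D_A = \langle \Sigma, Q, \delta \rangle$. By Theorem~\ref{theorem:letichevsky}, $D_A$ is homomorphically represented by a network $\mathcal{N}$ of flip-flop semiautomata.

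Next I would lift $\mathcal{N}$ to the neural level using the network version of Lemma~\ref{lemma:automaton_cascade_to_neural_cascade}: $\mathcal{N}$ is homomorphically represented by a network $\mathcal{M}$ of flip-flop neurons (with arbitrary core). Composing homomorphisms, the dynamics of $\mathcal{M}$ homomorphically represent $D_A$ (after precomposing with the input symbol grounding $\lambda_\Sigma$, as explained in the symbol-grounding paragraph). Proposition~\ref{prop:homomorphism} then yields a dynamical system $S$ with dynamics given by $\mathcal{M}$ that is equivalent to $A$, where the output function of $S$ is some continuous function $h: X \times U \to \Gamma$ obtained by composing $\theta$ with the homomorphism on states and with $\lambda_\Sigma$ on inputs.

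To obtain a genuine RNN, I would replace $h$ by a feedforward neural network. Using compactness of $U$, closedness of the state intervals in the definition of a flip-flop neuron, and robustness of the output symbol grounding $\lambda_\Gamma$, the Universal Approximation Theorem, cf.\ \citep{hornik1991approximation}, supplies a feedforward network $\tilde h$ that approximates $h$ closely enough that $\lambda_\Gamma \circ \tilde h$ agrees with $\lambda_\Gamma \circ h$ on the reachable states. The resulting system is an RNN of flip-flop neurons (with arbitrary core) implementing $F$, proving the first part. For the second part, instantiate each abstract flip-flop neuron with a sign or tanh neuron by invoking Proposition~\ref{prop:sign_transitions} or Proposition~\ref{prop:tanh_transitions}; Lemma~\ref{lemma:automaton_cascade_to_neural_cascade} guarantees that such concrete instantiations still yield the required homomorphism.

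There is essentially no new obstacle here: the only substantive difference from Theorem~\ref{theorem:expressivity_of_rnc} is the use of Letichevsky in place of Krohn--Rhodes, which forces the architecture to be a network rather than a cascade but imposes no constraints on the class of functions. The one place that deserves a careful sentence is ensuring that Lemma~\ref{lemma:automaton_cascade_to_neural_cascade} does apply verbatim to the network case (its statement covers both), since in a network each flip-flop neuron must read the states of all other neurons through its feedforward input function; the Universal Approximation step handles this exactly as in the cascade case because the input symbol grounding and the inter-neuron communication are continuous over a compact domain. Note that, unlike in the cascade setting, we do not need to restrict weights to be positive: the obstruction identified by Lemma~\ref{lemma:groupfree} is specific to the acyclic architecture, so no sign restriction is needed here.
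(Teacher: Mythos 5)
Your proposal is correct and follows exactly the route the paper takes: its proof of Theorem~\ref{theorem:expressivity_of_rnn} is stated verbatim as ``as the proof of Theorem~\ref{theorem:expressivity_of_rnc}, except that cascades are replaced by networks, and Krohn--Rhodes is replaced by Letichevsky,'' which is precisely the substitution you carry out, with the remaining steps (the network version of Lemma~\ref{lemma:automaton_cascade_to_neural_cascade}, Proposition~\ref{prop:homomorphism}, and the Universal Approximation step under the robustness assumption on $\lambda_\Gamma$) filled in as in the cascade case. Your closing remark about positive weights is a correct observation but not needed for the proof, since the concrete sign and tanh flip-flop neurons of Propositions~\ref{prop:sign_transitions} and~\ref{prop:tanh_transitions} already use positive weights and the cyclic architecture is what supplies the group behaviour.
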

The theorem is proved similarly to Theorem~\ref{theorem:expressivity_of_rnc}, 
using Theorem~\ref{theorem:letichevsky} in place of
Theorem~\ref{theorem:krohn_rhodes}.
The above Theorem~\ref{theorem:expressivity_of_rnn} seems to be folklore.
However we are not aware of an existing formal proof for the case of
a differentiable activation function such as tanh. 
We discuss it further in the related work section. 

Next we show that the restriction to positive weights is necessary to be
group-free.
\begin{restatable}{theorem}{theoremexpressivitytoggle}
  \label{theorem:expressivity_toggle}
  There is an RNC consisting of a single sign or tanh neuron with
  negative weight that implements a regular function that is not group-free
  regular.
\end{restatable}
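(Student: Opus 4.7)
The plan is to exhibit an explicit RNC consisting of a single recurrent neuron with negative recurrent weight whose induced function is parity, the canonical non-group-free regular function. I fix a one-letter input alphabet $\Sigma = \{a\}$ together with any robust symbol grounding from a compact $U \subseteq \mathbb{R}$ onto $\Sigma$, and design the neuron so that every input flips the state between two stable values. A feedforward output function then reads off the parity bit.

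\emph{Sign case.} Take weight $w = -1$, addition as the combining operator, input function $\beta \equiv 0$, and initial state $x^{\mathrm{init}} = +1$. Then $x_t = \sign(-x_{t-1}) = -x_{t-1}$, so the state sequence is $+1,-1,+1,-1,\dots$. An output function emitting one symbol when $x=+1$ and another when $x=-1$ yields a map from $\Sigma^*$ to $\{0,1\}$ that depends only on the input length modulo $2$.

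\emph{Tanh case.} Take recurrent weight $w < -1$ and $\beta \equiv 0$. The map $g(x)=\tanh(w x)$ is odd and strictly decreasing, so writing $h(x)=\tanh(|w|\,x)$ gives $g = -h$. Since $|w|>1$, the strictly increasing map $h$ has a unique positive fixed point $x^* > 0$; hence $g(x^*) = -h(x^*) = -x^*$ and $g(-x^*) = x^*$, exhibiting $\{x^*,-x^*\}$ as an exact period-$2$ orbit of $g$. Setting $x^{\mathrm{init}} = x^*$ reproduces the sign construction, and the output function accepts iff the state is positive.

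The function just described is regular, and its minimal automaton has two states whose transitions form the cyclic group $C_2$; its transformation semigroup therefore contains a non-trivial subgroup, so the function is not group-free. Lifting the output map to a feedforward network is immediate from the Universal Approximation Theorem combined with the robustness of the output grounding, exactly as used in the proof of Theorem~\ref{theorem:expressivity_of_rnc}. The only delicate point is the tanh case, where one must justify that $\pm x^*$ is an exact period-$2$ orbit; this follows from the odd symmetry $g=-h$ together with the fact that a strictly increasing continuous map on $\mathbb{R}$ admits no proper $2$-cycles, so all fixed points of $g\circ g = h\circ h$ arise from fixed points of $h$ alone.
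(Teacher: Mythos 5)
Your proof is correct, and it reaches the same target (the parity/odd-length language over a one-letter alphabet, whose two-state minimal semiautomaton carries the non-trivial permutation group $C_2$) as the paper, but by a genuinely different technical route. The paper introduces an abstract \emph{toggle neuron} whose states and inputs are closed intervals of non-zero length, proves a general homomorphism lemma (Lemma~\ref{lemma:toggle-homomorphism}), shows that sign and tanh neurons with negative weight satisfy the interval conditions (Propositions~\ref{lemma:sign_with_negative_weight_transitions} and~\ref{prop:tanh_with_negative_weight}), and then concludes via the general machinery of Proposition~\ref{prop:homomorphism}. You instead exhibit an exact period-2 orbit: trivially $\{+1,-1\}$ for sign, and for tanh the pair $\{x^*,-x^*\}$ obtained from the unique positive fixed point of $h(x)=\tanh(|w|x)$ with $|w|>1$, using oddness of tanh. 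Your fixed-point argument is sound (existence and uniqueness of $x^*$ follow from $h'(0)=|w|>1$, boundedness of $h$, and the fact that $h(x)-x$ increases then decreases on $[0,\infty)$), and taking the input function identically zero is legitimate since a constant map is exactly representable by a feedforward network; the closing appeal to universal approximation together with robustness of the output grounding matches the paper's own final step. What your construction loses relative to the paper's is robustness and reusability: the interval-based toggle neuron tolerates perturbations of the input and simultaneously implements set and reset, so it plugs into the general cascade framework, whereas your two-point orbit is tailored to the single-neuron existential claim. What it buys is a considerably more elementary and self-contained argument for exactly that claim.
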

The proof amounts to showing that a sign or tanh neuron with negative weight
captures a semiautomaton that is not group-free. It is a two-state semiautomaton
with one non-identity permutation transformation. 
We conjecture that sign or tanh neurons are not able to capture an
actual grouplike semiautomaton.

\section{Implementation of Group Neurons}
We give an instantiation of a group neuron as defined in the
Definition~\ref{def:abstract_group_neuron}. In particular, we show
when second-order neurons with sign or tanh activations are instances of 
the $C_2$ neuron, a neuron implementing cyclic group of order two. 

\begin{restatable}{proposition}{propctwosign}
\label{prop:ctwo_sign}
Let $w, a$ be real numbers either satisfying $a, w > 0$ or 
$a, w < 0$. 
A core second-order neuron with $\sign$ activation and weight $w$
is a core $C_2$-neuron, if its states partition is
 \begin{align*}
 X_0 = \{-1\}, \; X_1 = \{+1\},
 \end{align*}
and its input partition satisfies
\begin{align*}
V_1 \in (-\infty, -a] &, V_0 \in [a, +\infty), &\text{ if } a,w > 0,
\\
V_0 \in (-\infty, a] &, V_1 \in [-a, +\infty), &\text{ if } a,w < 0.
\end{align*}
\end{restatable}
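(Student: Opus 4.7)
My plan is to verify directly the four inclusions $f(X_i, V_j) \subseteq X_{i \circ j}$ required by Definition~\ref{def:abstract_group_neuron}, where we take $C_2 = (\{0,1\}, \oplus)$ with $\oplus$ being addition modulo two. Since we are dealing with a second-order neuron, the dynamics function has the form $f(x,v) = \sign(w \cdot x \cdot v)$; hence the sign of the output is simply the product of the signs of $w$, $x$, and $v$, and since $x \in \{-1, +1\}$ its sign coincides with $x$ itself.

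First I would check that the proposed partitions satisfy the structural requirements of the abstract $C_2$ neuron. The state sets $X_0 = \{-1\}$ and $X_1 = \{+1\}$ are disjoint closed intervals (single points are admissible, exactly as in Proposition~\ref{prop:sign_transitions}). For the inputs, in the case $a, w > 0$ the intervals $V_1 = (-\infty, -a]$ and $V_0 = [a, +\infty)$ are closed, of non-zero length, and separated by the open gap $(-a, a)$, which is non-empty since $a > 0$; in the case $a, w < 0$ the intervals $V_0 = (-\infty, a]$ and $V_1 = [-a, +\infty)$ are separated by $(a, -a)$, which is non-empty since $a < 0 < -a$.

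Next I would perform the sign analysis on $w \cdot x \cdot v$ for each combination $(i,j) \in \{0,1\}^2$. In the case $a, w > 0$, any $v \in V_0$ satisfies $v \geq a > 0$ and any $v \in V_1$ satisfies $v \leq -a < 0$, so $\sign(w \cdot x \cdot v) = \sign(x \cdot v)$. A direct check of the four cases gives outputs $-1, +1, +1, -1$ for $(i,j) = (0,0), (0,1), (1,0), (1,1)$ respectively, which match $X_{0}, X_{1}, X_{1}, X_{0}$ as required by $\oplus$. In the case $a, w < 0$ the roles of $V_0$ and $V_1$ are swapped in sign, but the additional minus sign from $w$ compensates, so $\sign(w \cdot x \cdot v)$ lands in the same $X_{i \oplus j}$ in all four cases.

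The argument is essentially a routine verification; there is no real obstacle. The one thing to be careful about is that the two parameter regimes must each be checked separately, and in each regime one must confirm that the minus sign contributed by $w$ (in the negative case) exactly cancels the reversal of the meaning of $V_0$ and $V_1$ relative to the positive case. Once that bookkeeping is done, the multiplication table of signs on $\{-1,+1\}$ realises addition modulo two, which is precisely the group operation of $C_2$.
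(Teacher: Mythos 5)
Your proposal is correct and follows essentially the same route as the paper's own proof: a direct case-by-case sign analysis of $w \cdot x \cdot v$ for the four pairs $(X_i, V_j)$ in each of the two parameter regimes, together with the check that the state and input sets form disjoint closed intervals (with the input intervals of non-zero length) as required by the definition of a core $C_2$ neuron. The only difference is presentational — you factor the argument through $\sign(w\cdot x\cdot v)=\pm\sign(x\cdot v)$ and the sign multiplication table, while the paper writes out each of the four inequalities explicitly — but the content is the same.
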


\begin{restatable}{proposition}{propctwotanh}
\label{prop:ctwo_tanh}
Let $w, a$ be real numbers either satisfying $a, w > 0$ or 
$a, w < 0$. 
Let $f(x) = \tanh(w \cdot x)$.  
A core second-order neuron with $tanh$ activation and weight $w$
is a core $C_2$-neuron, if its states partition is
 \begin{align*}
 X_0 = [-1, -f(a)], \;\;
 X_1 = [f(a), +1]
 \end{align*}
and its input partition satisfies
\begin{align*}
V_1 \in (-\infty, -a/f(a)]&, V_0 \in [a/f(a), +\infty), &\text{if } a,w > 0,
\\
V_0 \in (-\infty, a/f(a)]&, V_1 \in [-a/f(a), +\infty), &\text{if } a,w < 0.
\end{align*}
\end{restatable}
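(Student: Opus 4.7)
The plan is to verify directly the four defining inclusions for a core $C_2$ neuron, namely $f(X_i, V_j) \subseteq X_{i \oplus j}$ for all $i,j \in \{0,1\}$, where $\oplus$ is addition modulo $2$. Since a core second-order neuron computes $f(x,u) = \tanh(w \cdot x \cdot u)$, and $\tanh$ is strictly monotone and odd with image $(-1,+1)$, the entire task reduces to bounding the product $xu$ from the appropriate side and then applying monotonicity to locate $\tanh(wxu)$ inside one of the two closed intervals $X_0 = [-1, -f(a)]$ and $X_1 = [f(a), +1]$.

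First I would treat the case $a,w > 0$, which is the principal case; here $f(a) = \tanh(wa) > 0$, so $X_0$ and $X_1$ are genuinely disjoint closed intervals, and the threshold $a/f(a)$ defining $V_0, V_1$ is positive. For the inclusion $f(X_1, V_0) \subseteq X_1$, take $x \in [f(a), 1]$ and $u \in [a/f(a), +\infty)$; then $xu \geq f(a) \cdot a/f(a) = a > 0$, so $wxu \geq wa$ and by monotonicity $\tanh(wxu) \geq f(a)$, giving $f(x,u) \in X_1$. The remaining three inclusions are checked by the same elementary product bound: for $(x,u) \in X_0 \times V_0$ one obtains $xu \leq -a$ so $\tanh(wxu) \leq -f(a)$; for $(x,u) \in X_1 \times V_1$ again $xu \leq -a$; and for $(x,u) \in X_0 \times V_1$ one gets $xu \geq a$. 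In each case the sign of $wxu$ matches the required target interval via monotonicity of $\tanh$.

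Next I would handle the case $a,w < 0$. Here $wa > 0$ still holds, so $f(a) > 0$ and the state partition remains well defined and disjoint. Now $a/f(a) < 0$, which explains the reversed bracketing of $V_0$ and $V_1$ in the statement. The four inclusions are verified by the analogous bounds, tracking signs carefully: multiplication by the negative $w$ reverses inequalities on $xu$, and this reversal is exactly compensated by the swap of $V_0$ and $V_1$ relative to the positive case, so that $\tanh(wxu)$ lands in the same interval as before. Symbolically, substituting $a' = -a$, $w' = -w$, $u' = -u$ transforms this case back to the previous one, which one could equally invoke as a shortcut.

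There is no real obstacle: the proof is a finite check of sign patterns and monotone bounds on $\tanh$. The only design point worth highlighting is that the endpoint $a/f(a)$ of the input partition is chosen so that the bound $xu \geq a$ (respectively $xu \leq -a$) becomes tight precisely at the boundary of the state partition, namely at $x = f(a)$ with $u = a/f(a)$. This tightness is what ensures that $\tanh(wxu)$ reaches the boundary of the target $X_i$ but does not cross it, and it is the reason the thresholds in the proposition are stated with equality allowed.
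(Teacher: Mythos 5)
Your proposal is correct and follows essentially the same route as the paper's proof: a direct verification of the four inclusions $f(X_i,V_j)\subseteq X_{i\circ j}$ via the product bound $xu\geq a$ or $xu\leq -a$ followed by monotonicity of $\tanh$, with the observation that $f(a)>0$ in both sign cases. The only cosmetic differences are that you dispatch the case $a,w<0$ by the substitution $a'=-a$, $w'=-w$, $u'=-u$ rather than running the two cases in parallel, and that the paper additionally spells out the routine check that $V_0,V_1$ are disjoint closed intervals of non-zero length.
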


Then by Lemma~\ref{lemma:flipflop_to_sign_neuron} the above neurons 
homomorphically represent $C_2$ semiautomata.
By Lemma~\ref{lemma:automaton_cascade_to_neural_cascade} an 
RNC containing these neurons can homomorphically represent a cascade of 
$C_2$ semiautomata. In particular, such RNCs can recognise languages that
are not star-free, cf.\ \cite{ginzburg}.


\section{Related Work}

In our work, the connection between RNNs and automata plays an important role.
Interestingly, the connection appears to exist from the beginning of automata
theory \citep{arbib1969theories}: 
``\emph{In 1956 the series Automata Studies  (Shannonon and McCarthy [1956]) was
  published, and automata theory emerged as a relatively autononmous discipline.
  [...] much interest centered on finite-state sequential machines, which first
arose not in the abstract form [...], but in connection with the
input-output behaviour of a McCulloch-Pitts net [...]}''.
The relationship between automata and the networks by
\cite{mcculloch1943logical} is discussed both in
\citep{kleene1956representation}
and
\citep{minsky1967computation}.
Specifically, an arbitrary automaton can be captured by a McCulloch-Pitts
network.
Our Theorem~\ref{theorem:expressivity_of_rnn} reinforces this result,
extending it to sign and tanh activation. The extension to tanh is important
because of its differentiability, and it requires a different set of techniques
since it is not binary, but rather real-valued.
Furthermore, our results extend theirs by showing a correspondence between
RNCs and group-free automata.

The Turing-completeness capabilities of RNN as an \emph{offline model}
of computation 
are studied in
\citep{siegelmann1995turing,siegelmann1996dynamic,hobbs2015implementation,chung2021turing}.
In this setting, an RNN is allowed to first read the entire input sequence, 
and then return the output with an arbitrary delay---the so-called computation time.
This differs from our study, which focuses on the capabilities of RNNs as online
machines, which process the input sequence one element at a time, outputting
a value at every step. 
This is the way they are used in many practical applications such as
Reinforcement Learning, cf. 
\citep{bakker2001rlandlstm,ha2018worldmodels,stone2015rnn,kapturowski2019rnnreplay}.

The expressivity of RNNs in terms of whether
they capture all \emph{rational series} or not has been analysed in \cite{merrill2020formal}. This is a class of functions
that includes all regular functions. Thus, it is a coarse-grained analysis
compared to ours, which focuses on subclasses of the regular languages. 

The problem of \emph{latching} one bit of information has been studied in
\cite{bengio1994latching} and
\cite{frasconi1995unified}.
This problem is related to star-free regular languages, as it amounts to asking
whether there is an automaton recognising a language of the form $sr^*$ where
$s$ is a set command and $r$ is a read command. This is a subset of the
functionalities implemented by a flip-flop semiautomaton. Their work established
conditions under which a tanh neuron can latch a bit.
Here we establish conditions guaranteeing that a tanh neuron homomorphically
represents a flip-flop semiautomaton, implying that it can latch a bit.
An architecture that amounts to a restricted class of RNCs has been considered
in \cite{frasconi1992local}.

Automata cascades are considered in \citep{ronca2023cascades},
where they are shown to yield favourable sample complexity results for automata
learning. 



\section{Conclusions and Future Work}
We developed a new methodology that provides a fresh perspective on RNCs as 
systems implementing semigroups and groups.
This enabled us to establish new expressivity results for RNCs with sign 
and tanh activations.
We believe our methodology has a potential that extends beyond our current
results. In particular, we believe it provides a principled way to identify new
classes of recurrent networks that incorporate different priors based on groups.

We have covered sign and tanh activation, postponing the study of other
activation functions such as logistic curve, ReLU, GeLU.
Beyond that, one could identify neurons that can homomorphically represent
grouplike semiautomata.  
This will allow to capture specific subclasses of regular functions that are
beyond group-free. To this extend we presented second-order sign and tanh 
neurons as 
instances of neurons homomorphically representing the cyclic group of order two. 


\bibliography{bibliography}

\onecolumn
\clearpage




\setcounter{page}{1}

The appendix consists of three parts.
\begin{enumerate}
  \item
    We first provide some additional background in Section `Algebraic Notions
    for Automata'.
  \item
    We present the proofs of all our results, following the same order as they
    appear in the main body.
  \item
    We give examples of a star-free (group-free) regular language and of a
    a group-free regular function.
    In particular we provide a detailed construction of the cascades capturing
    them.
\end{enumerate}

\section{Algebraic Notions for Automata}

In this section we provide the formal definition of 
\emph{group-free semiautomaton}, which
we avoided to provide in the main sections, as it requires some preliminary 
notions. 
We also define the notion of \emph{characteristic semigroup} of an automaton.
We follow \citep{ginzburg,domosi2005algebraic}.
These notions require some algebraic preliminaries, which are also required by
some of the proofs.

\subsection{Algebraic Preliminaries}
A \emph{semigroup} is a non-empty set together with an \emph{associative} binary
operation that combines any two elements $a$ and $b$ of the set to form a third
element $c$ of the set, written $c = (a \cdot b)$.
A \emph{monoid} is a semigroup that has an \emph{identity element} $e$, i.e., 
$(a \cdot e) = (e \cdot a) = a$ for every element $a$. The identity element is
unique when it exists. 
A \emph{flip-flop monoid} is a three-element monoid $\{s,r,e\}$
where $(r \cdot s) = s$, $(s \cdot s) = s$,
$(r \cdot r) = r$, $(s \cdot r) = r$.
A \emph{group} is a monoid where every element $a$ has an \emph{inverse}
$b$, i.e., $(a \cdot b) = (b \cdot a) = e$ where $e$ is the identity element.
A \emph{subsemigroup} (\emph{subgroup}) of a semigroup $G$ is a subset of $G$
that is a semigroup (group).

Let $G$ be a group and let $H$ be a subgroup of $G$. 
For every $g \in G$, the \emph{right coset} of $G$ is 
$gH = \{  gh \mid h \in H\}$, and its \emph{left coset} is
$Hg = \{  hg \mid h \in H\}$. Subgroup $H$ is \emph{normal} if its left and
right cosets coincide, i.e., $gH = Hg$ for every $g \in G$.
A group is \emph{trivial} if it is $\{ e \}$ where $e$ is the identity element.
A \emph{simple group} is a group $G$ such that every normal subgroup of $G$ is
either trivial or $G$ itself.

A \emph{homomorphism} from a semigroup $S$ to a semigroup $T$ is a mapping 
$\psi:S \to T$ such that $\psi(s_1 \cdot s_2) = \psi(s_1) \cdot \psi(s_2)$ for
every $s_1,s_2 \in S$.
If $\psi$ is surjective, we say that $T$ is a \emph{homomorphic image} of $S$.
If $\psi$ is bijective, we say that $S$ and $T$ are \emph{isomorphic}, and
$\psi$ is called an \emph{isomorphism}.
A semigroup $S$ \emph{divides} a semigroup $T$ if $T$ has a subsemigroup $T'$
such that $S$ is a homomorphic image of $T'$.
If $G$ is \emph{simple}, then every homomorphic image of $G$ is isomorphic to
$\{ e \}$ or $G$.
For $G$ and $H$ semigroups, we write 
$GH = \{ g\cdot h \mid g \in G, h \in H\}$. We also write 
$H^1 = H$ and $H^k = HH^{k-1}$.
A semigroup $S$ is \emph{generated} by a semigroup $H$ if $S = \bigcup_n
H^n$; then $H$ is called a \emph{generator} of $S$.

\subsection{Definition of Group-free and Grouplike Semiautomata}

The \emph{characteristic semigroup} of a semiautomaton
$A$ is the semigroup generated by its transformations. 
A semiautomaton is \emph{group-free} if its characteristic semigroup has no
divisor which is a non-trivial group, cf.\ Page~153 of \citep{ginzburg}.

\section{Proofs for Section `Homomorphisms for Dynamical Systems'}

\subsection{Proof of Proposition~\ref{prop:homomorphism}}
We prove Proposition~\ref{prop:homomorphism}.

\prophomomorphism*
\begin{proof}
  Let us consider dynamics $D_1$ and $D_2$.
  Assume that $D_1$ homomorphically represent $D_2$.
  There exist subdynamics $D_1'$ of $D_1$ such that there is a homomorphism
  $\psi$ from $D_1'$ to $D_2$. 
  \begin{align*}
    D_1' & = \langle U, X, f_1 \rangle
    \\
    D_2 & = \langle U, Z, f_2 \rangle
  \end{align*}
  Consider a system $S_2$ with dynamics $D_2$.
  \begin{align*}
    S_2 = \langle U, Z, f_2, z^\mathrm{init}, Y, h_2 \rangle
  \end{align*}
  We construct the system 
  \begin{align*}
    S_1 = \langle U, X, f_1, x^\mathrm{init}, Y, h_1 \rangle,
  \end{align*}
  where $h_1(x,u) = h_2(\psi(x),u)$, and
  $x^\mathrm{init} \in X_1$ is such that $\psi(x^\mathrm{init}) =
  z^\mathrm{init}$---it exists because $\psi$ is surjective.
  Note that $h_1$ is continuous as required, since it is the composition of
  continuous functions.

  We show that $S_1$ is equivalent to as $S_2$ as required.
  Let $u_1, \dots, u_n$ be an input sequence,
  and let $x_0, \dots, x_n$ and and $z_0, \dots, z_n$ be the corresponding
  sequence of states for $S_1$ and $S_2$, respectively.
  Namely, $x_0 = x^\mathrm{init}$ and 
  $x_i = f_1(x_{i-1}, u_i)$ for $1 \leq i \leq n$.
  Similarly, $z_0 = z^\mathrm{init}$ and 
  $z_i = f_2(z_{i-1}, u_i)$ for $1 \leq i \leq n$.

  As an auxiliary result, we show that
  $z_i = \psi(x_i)$ for every $0 \leq i \leq n$.
  We show it by induction on $i$ from $0$ to $n$.

  In the base case $i=0$, and $z_0 = \psi(x_0)$  amounts to
  $z^\mathrm{init} = \psi(x^\mathrm{init})$, which holds by construction.

  In the inductive case $i > 0$ and we assume that 
  $z_{i-1} = \psi(x_{i-1})$.
  We have to show $z_i = \psi(x_i)$. By the definition of $x_i$ and $z_i$ above,
  it can be rewritten as 
  \begin{align*}
    f_2(z_{i-1},u_i) = \psi(f_1(x_{i-1},u_i))
    \\
    \psi(f_1(x_{i-1},u_i)) = f_2(z_{i-1},u_i)
  \end{align*}
  Then, by the inductive hypothesis
  $z_{i-1} = \psi(x_{i-1})$, we have
  \begin{align*}
    \psi(f_1(x_{i-1},u_i)) = f_2(\psi(x_{i-1}),u_i) 
  \end{align*}
  which holds since
  $\psi$ is a homomorphism from $D_1$ to $D_2$,
  This proves the auxiliary claim.

  Now, to show that $S_1$ and $S_2$ are equivalent, it suffices to show 
  $S_1(u_1, \dots, u_\ell) = S_2(u_1, \dots, u_\ell)$.
  By definition, we have that 
  $S_1(u_1, \dots, u_\ell) = y_1, \dots, y_n$ with 
  $y_i = h_1(x_{i-1},u_i)$, and similarly
  $S_2(u_1, \dots, u_\ell) = w_1, \dots, w_n$ with 
  $w_i = h_2(z_{i-1},u_i)$.
  Then, for every $1 \leq i \leq n$,
  we have that 
  \begin{align*}
    w_i = h_2(z_{i-1},u_i) = 
    h_2(\psi(x_{i-1}),u_i) = 
    h_1(x_{i-1},u_i) = y_i.
  \end{align*}
  Therefore $S_1(u_1, \dots, u_\ell) = S_2(u_1, \dots, u_\ell)$ as required.
  This proves the proposition.
\end{proof}

\subsection{Proof of Proposition~\ref{prop:homomorphism_converse}}
The proof of Proposition~\ref{prop:homomorphism_converse} requires some 
preliminary definitions and propositions. 
Below we state the required 
Definition~\ref{def:definition_for_prop_two_one} and 
Definition~\ref{def:definition_for_prop_two_two}.
We also state and prove the required 
Proposition~\ref{prop:statesim-implies-funcsim},
Proposition~\ref{prop:equivalence-classes-for-canonical-system},
and
Proposition~\ref{prop:equivalence-classes-for-connected-system}.
Then we prove Proposition~\ref{prop:homomorphism_converse}.

\begin{definition}
  \label{def:definition_for_prop_two_one}
  Consider a function $F: U^* \to Y^*$.
  Two strings $s,s' \in U^*$ are in relation $s \sim_F s'$ iff the equality 
  $F(sz) = F(s'z)$ holds for every non-empty string $z \in U^+$.
  The resulting set of equivalence classes is written as $U^*/{\sim_F}$, and the
  equivalence class of a string $s \in U^*$ is written as $[s]_F$.
\end{definition}

\begin{definition}
  \label{def:definition_for_prop_two_two}
  Consider a system $S$.
  Two input strings $s,s'$ are in relation $s \sim_S s'$ iff
  the state of $S$ upon reading $s$ and $s'$ is the same.
  The resulting set of equivalence classes is written as $U^*/{\sim_S}$, and the
  equivalence class of a string $s \in U^*$ is written as $[s]_S$.
\end{definition}

\begin{proposition} 
  \label{prop:statesim-implies-funcsim}
  For every system $S$ that implements a function $F$,
  it holds that $s \sim_S s'$ implies $s \sim_F s'$.
\end{proposition}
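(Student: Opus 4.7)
The plan is to exploit the fact that, by hypothesis, $F$ is the function implemented by $S$, and $S$ is a deterministic dynamical system: once two prefixes $s$ and $s'$ drive $S$ to a common state, the subsequent evolution on any continuation $z$ is identical in both cases. Let $x$ denote the common state reached by $S$ after reading either $s$ or $s'$, which exists because $s \sim_S s'$.

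For a fixed $z = u_1 \cdots u_k \in U^+$, I would set $x_0 := x$, and then inductively define $x_i := f(x_{i-1}, u_i)$ and $y_i := h(x_{i-1}, u_i)$ for $1 \le i \le k$. A short induction on $i$ then shows that, when $S$ is run on $sz$, its state at time $|s| + i$ equals $x_i$ and the output produced at that step equals $y_i$; likewise, when $S$ is run on $s'z$, its state at time $|s'| + i$ equals $x_i$ and the output is $y_i$. The base case is immediate from $x_0 = x$, and the inductive step follows directly from the fact that $f$ and $h$ are functions of state and current input alone, so they produce the same value irrespective of the prefix taken to reach $x$. Consequently the $k$ output symbols produced while reading the $z$-portion of each input coincide, yielding the equality $F(sz) = F(s'z)$ required by the definition of $\sim_F$.

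There is no genuine obstacle here: the entire argument is a one-line deterministic forward-simulation that uses only the definition of a dynamical system. The only care required is in the bookkeeping --- ensuring that the induction correctly aligns the $i$-th step after the common state $x$ with the step at index $|s|+i$ of the run on $sz$ (and with $|s'|+i$ of the run on $s'z$), so that the claimed coincidence of the $z$-segments of the two output strings is unambiguous.
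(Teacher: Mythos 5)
Your proof is correct and is essentially the paper's own argument: the paper likewise observes that once $s$ and $s'$ lead to the same state $x$, the output produced on any continuation $z$ is $S_x(z)$ in both runs, hence $F(sz)=F(s'z)$; you merely spell out the forward-simulation induction that the paper leaves implicit. The only wrinkle---reading the equality $F(sz)=F(s'z)$ as coincidence of the outputs on the $z$-segment rather than of the full output sequences (whose prefixes over $s$ and $s'$ could differ)---is a notational convention you share with the paper's proof, so no gap relative to it.
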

\begin{proof}
  Assume $s \sim_S s'$, i.e., the two strings lead to the same state $x$.
  For every non-empty string $z$, the output of $S$ on both $sz$ and $s'z$ is 
  $S_x(z)$. Since $S$ implements $F$, it follows that $F(sz) = S_x(z)$ and
  $F(s'z) = S_x(z)$. Thus $F(sz) = F(s'z)$, and hence $s \sim_F s'$.
\end{proof}

\begin{proposition} 
  \label{prop:equivalence-classes-for-canonical-system}
  Consider a canonical system $S$ on input domain $U$ that implements a function
  $F$. 
  The states of $S$ are in a one-to-one correspondence with the equivalence
  classes $U^*/{\sim_F}$. 
  State $x$ corresponds to the equivalence class $[s]_F$ for $s$ any string that
  leads to $x$.
\end{proposition}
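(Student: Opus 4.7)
The plan is to define a map $\Phi$ from the state set of $S$ into $U^*/{\sim_F}$ by setting $\Phi(x) = [s]_F$, where $s$ is any string that drives $S$ from $x^\mathrm{init}$ into $x$. Such a string exists because $S$ is connected. The correspondence claimed in the proposition will follow by showing that $\Phi$ is well-defined and bijective.

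First I would verify that $\Phi$ is well-defined. If two strings $s$ and $s'$ both lead to $x$, then $s \sim_S s'$ by the definition of $\sim_S$, and Proposition~\ref{prop:statesim-implies-funcsim} gives $s \sim_F s'$, so $[s]_F = [s']_F$. Surjectivity of $\Phi$ is then almost immediate: for an arbitrary class $[s]_F$, connectedness guarantees that the string $s$ drives $S$ into some state $x$, and by construction $\Phi(x) = [s]_F$.

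The main step is injectivity, and this is where the reduced-form hypothesis enters. Suppose $\Phi(x) = \Phi(x')$, witnessed by strings $s$ and $s'$ leading to $x$ and $x'$ respectively. Then $s \sim_F s'$, i.e.\ $F(sz) = F(s'z)$ for every non-empty $z \in U^+$. Since $S$ implements $F$, the final $|z|$ letters of $F(sz)$ are exactly the output produced by $S^x$ on input $z$, and likewise the final $|z|$ letters of $F(s'z)$ are $S^{x'}(z)$. Comparing the tails of the two equal output strings yields $S^x(z) = S^{x'}(z)$ for every non-empty $z$, so the systems $S^x$ and $S^{x'}$ are equivalent. Because $S$ is in reduced form, this forces $x = x'$, giving injectivity.

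The only delicate point is the last comparison of tails, which relies on the fact that systems of the kind considered here produce, on a concatenated input $sz$, the concatenation of the output on $s$ with the output produced from the state reached after $s$; this is a direct consequence of the recursive definition $x_t = f(x_{t-1},u_t)$, $y_t = h(x_{t-1},u_t)$. Once this is spelled out, the bijection $\Phi$ establishes the claimed one-to-one correspondence, and by construction state $x$ corresponds to $[s]_F$ for any string $s$ leading to $x$.
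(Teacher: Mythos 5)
Your proposal is correct and follows essentially the same route as the paper's proof: define the map from states to classes via any string reaching the state, check well-definedness through Proposition~\ref{prop:statesim-implies-funcsim}, get surjectivity from connectedness, and derive injectivity from the reduced-form hypothesis by turning $s \sim_F s'$ into equivalence of $S^x$ and $S^{x'}$. The only cosmetic difference is that you spell out the tail-comparison step relating $F(sz)$ to $S^x(z)$, which the paper leaves implicit.
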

\begin{proof}
  Let $\psi$ be the mentioned correspondence.
  First, $\psi$ maps every state to some equivalence class, since every state is
  reachable, because $S$ is canonical.
  Second, $\psi$ maps every state to at most one equivalence class, by
  Proposition~\ref{prop:statesim-implies-funcsim}.
  Third, $\psi$ is surjective since every equivalence class $[s]_F$ is assigned
  by $\psi$ to the the state that is reached by $s$.
  Fourth, $\psi$ is injective since there are no distinct states
  $x,x'$ such that such that $[s]_F = [s']_F$ for $s$ leading to $x$ and $s'$
  leading to $x'$.
  The equality $[s]_F = [s']_F$ holds only if $s \sim_F s'$, which holds only if  
  $F(sz) = F(s'z)$ for every $z$, which holds only if $S^x = S^{x'}$ since $S$
  implements $F$. Since $S$ is canonical, and hence in
  reduced form, we have that $S^x = S^{x'}$ does not hold, and hence the
  $\psi$ is injective.
  Therefore $\psi$ is a one-to-one correspondence as required.
\end{proof}

\begin{proposition} 
  \label{prop:equivalence-classes-for-connected-system}
  Consider a connected system $S$ on input domain $U$.
  The states of $S$ are in a one-to-one correspondence with the equivalence
  classes $U^*/{\sim_S}$.
  State $x$ corresponds to the equivalence class $[s]_S$ for $s$ any string that
  leads to $x$.
\end{proposition}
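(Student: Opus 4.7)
The plan is to mimic the structure of the proof of Proposition~\ref{prop:equivalence-classes-for-canonical-system}: define a candidate map $\psi$ from states of $S$ to $U^*/{\sim_S}$ by sending each state $x$ to the class $[s]_S$ for some string $s$ that leads to $x$, and then verify that $\psi$ is well-defined, total, surjective, and injective. The key observation that makes all four verifications short is that $\sim_S$ is phrased directly in terms of ``state reached upon reading'', so the usual translation step between equivalence of strings and equality of states is essentially free.

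Concretely, I would proceed as follows. Well-definedness: if two strings $s, s'$ both lead to the state $x$, then by the defining property of $\sim_S$ we immediately have $s \sim_S s'$, hence $[s]_S = [s'] _S$, so $\psi(x)$ does not depend on the choice of $s$. Totality: since $S$ is connected, every state $x$ is reachable, so at least one $s$ leading to $x$ exists, and $\psi(x)$ is defined on all of $X$. Surjectivity: any class $[s]_S \in U^*/{\sim_S}$ is hit by the state that $s$ drives $S$ into, call it $x_s$; then by construction $\psi(x_s) = [s]_S$. Injectivity: suppose $\psi(x) = \psi(x')$, and pick witnesses $s$ leading to $x$ and $s'$ leading to $x'$; then $[s]_S = [s']_S$ gives $s \sim_S s'$, which by the definition of $\sim_S$ says $s$ and $s'$ lead to the same state, so $x = x'$.

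The main point to flag is what is \emph{not} needed here: unlike Proposition~\ref{prop:equivalence-classes-for-canonical-system}, we do not require the reduced-form assumption on $S$. The reason is that $\sim_S$ is already defined via coincidence of reached states rather than via coincidence of outputs, so injectivity of $\psi$ does not require any argument ruling out distinct but behaviourally equivalent states; the only nontrivial hypothesis we use is connectedness, and it is used solely to ensure that $\psi$ is defined on every state. There is no genuine obstacle in this proof; the work is entirely in recording that each of the four properties of $\psi$ reduces to a one-step unfolding of the definition of $\sim_S$.
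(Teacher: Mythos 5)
Your proof is correct and follows essentially the same route as the paper's: the same four checks (totality from connectedness, well-definedness, surjectivity, and injectivity all by unfolding the definition of $\sim_S$). Your added remark that reduced form is not needed here, in contrast to Proposition~\ref{prop:equivalence-classes-for-canonical-system}, is accurate and consistent with the statement's hypotheses.
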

\begin{proof}
  Let $\psi$ be the mentioned correspondence.
  First, $\psi$ maps every state to some equivalence class, since every state is
  reachable, because $S$ is connected.
  Second, $\psi$ maps every state to at most one equivalence class, by
  the definition of the equivalence classes $X/{\sim_S}$.
  Third, $\psi$ is surjective since every equivalence class $[s]_S$ is assigned
  by $\psi$ to the state that is reached by $s$.
  Fourth, $\psi$ is injective since there are no distinct states $x,x'$ such
  that $[s]_S = [s']_S$ for $s$ leading to $x$ and $s'$ leading to $x'$.
  The equality $[s]_S = [s']_S$ holds only if $s \sim_S s'$, which holds only if  
  $s$ and $s'$ lead to the same state.
  Therefore $\psi$ is a one-to-one correspondence as required.
\end{proof}

\prophomomorphismconverse*
\begin{proof}
  Consider a system $S_1$ and a canonical system $S_2$.
  \begin{align*}
    S_1 & = \langle U, X_1, f_1, x_1^\mathrm{init}, Y, h_1 \rangle
    \\
    S_2 & = \langle U, X_2, f_2, x_2^\mathrm{init}, Y, h_2 \rangle
  \end{align*}
  Assume that the two systems are equivalent, i.e., they implement the same
  function $F$. 
  By Proposition~\ref{prop:equivalence-classes-for-canonical-system},
  every state of $S_2$ can be seen as an equivalence class $[w]_F$.
  Let $S_1'$ be the reachable subsystem of $S_1$, and let $D_1'$ be its
  dynamics.
  \begin{align*}
    D_1' & = \langle U, X_1', f_1' \rangle
  \end{align*}
  By Proposition~\ref{prop:equivalence-classes-for-connected-system}, every
  state of $S_1'$ can be seen as an equivalance class $[w]_{S_1}$.
  Let us define the function $\psi$ that maps $[w]_{S_1}$ to $[w]_F$.
  We have that $\psi$ is a well-defined function, i.e., it does not assign
  multiple values to the same input, by
  Proposition~\ref{prop:statesim-implies-funcsim} since $S_1$ implements $F$.

  \mytodo{double-check assumption on finite domain}
  We argue that $\psi$ is a continuous function as required by the definition of
  homomorphism.
  Assume by contradiction that $\psi$ is not continuous.
  Then, there exist $x = [w]_{S_1} \in X_1$ and $\epsilon > 0$ such that, for
  every $\delta > 0$, there exists $x' = [w']_{S_1} \in X_2$ such
  that\footnote{Note that $d_X$ denotes the metric function of a metric space
  $X$.}
  \begin{align*}
    d_{X_1}(x,x') < \delta
    \quad
    \text{and}
    \quad
    d_{X_2}(\psi(x),\psi(x')) \geq \epsilon.
  \end{align*}
  In particular, $d_{X_2}(\psi(x),\psi(x')) \geq \epsilon > 0$ implies 
  $\psi(x) \neq \psi(x')$.
  Let $z = \psi(x)$ and $z' = \psi(x')$.
  Since $S_2$ is canonical, there exists a string $u$ such that 
  $S_2^z(u) \neq S_2^{z'}(u)$.
  Let $u$ be the shortest such string, let $y$ be the last element in 
  $S_2^z(u)$ and let $y'$ be the last element in $S_2^{z'}(u)$.
  In particular, we have $y \neq y'$.
  Since $Y$ is finite or discrete, we have $d_Y(y,y') \geq \epsilon'$ for some
  $\epsilon' > 0$ which is independent of the choice of $\delta$ and $x'$.
  Therefore, we have shown that
  \begin{align*}
    d_{X_1}(x,x') < \delta
    \quad
    \text{and}
    \quad
    d_{Y}(y,y) \geq \epsilon'.
  \end{align*}
  Since $S_1$ and $S_2$ are equivalent, we have that 
  $y$ is also the last element of $S_1^x(u)$, and that
  $y'$ is also the last element of $S_1^{x'}(u)$.
  We now show that the outputs $y$ and $y'$ are obtained through a continuous
  function $g: X_1 \to Y$ of the state space of $S_1$.
  Let $u = a_1 \dots a_k$.
  Let $g_0(x) = x$ and let $g_i(x) = f_1(g_{i-1}(x), a_i)$ for $i \geq 1$.
  Then our desired function $g$ is defined as $g = h_1(g_{k-1}(x), a_k)$.
  We have that $g$ is continuous since it is the composition of continuous
  functions---in particular $f_1$ and $h_1$ are continuous by assumption.
  Finally, we have that $y = g(x)$ and $y' = g(x')$.
  Therefore, we have shown that
  \begin{align*}
    d_{X_1}(x,x') < \delta
    \quad
    \text{and}
    \quad
    d_{Y}(g(x),g(x')) \geq \epsilon'.
  \end{align*}
  Since $\delta$ can be chosen arbitrarily small, the former two conditions
  contradict the continuity of $g$. We conclude that $\psi$ is continuous.

  We argue that $\psi$ is a surjective function as required by the definition of
  homomorphism.
  The function is surjective since every state $q$ in $S_2$ is reachable, hence
  there is $w$ that reaches it, and hence $\psi$ maps $[w]_{S_1}$ to 
  $[w]_F = q$.

  Having argued the properties above, in order to show that $\psi$ is
  a homomorphism from $D_1'$ to $D_2$, it suffices to show that, for every $x
  \in X_1'$ and $u \in U$, the following equality holds.
  \begin{align*}
    \psi\big(f_1'(x,u)\big) = f_2\big(\psi(x),u\big)
  \end{align*}
  Let us consider arbitrary
  $x \in X_1'$ and $u \in U$.
  Let $w \in U^*$ be a string that reaches $x$ in $S_1'$.
  Note that $x$ can be seen as the equivalence class $[w]_{S_1}$.
  We have the following equivalences:
  \begin{align*}
    & \psi(f_1'(x,u)) = f_2(\psi(x),u)
    \\
    & \Leftrightarrow \psi(f_1'([w]_{S_1},u)) =
    f_2(\psi([w]_{S_1'}),u)
    \\
    & \Leftrightarrow \psi([wu]_{S_1}) = f_2(\psi([w]_{S_1}),u)
    \\
    & \Leftrightarrow [wu]_F = f_2(\psi([w]_{S_1}),u)
    \\
    & \Leftrightarrow [wu]_F = f_2([x]_F,u)
    \\
    & \Leftrightarrow [wu]_F = [wu]_F.
  \end{align*}
  The last equality holds trivially, and hence
  $\psi$ is a homomorphism from $D_1'$ to $D_2$. Since $D_1'$ is
  subdynamics of $D_1$, we conclude that $D_1$ homomorphically represents $D_2$.
  This proves the proposition.
\end{proof}

\section{Proofs for Section `Abstract
Neurons~\ref{sec:rncs_of_flipflop_neurons}'}

\subsection{Proof of Lemma~\ref{lemma:flipflop_to_sign_neuron}}

We prove Lemma~\ref{lemma:flipflop_to_sign_neuron} as two separate lemmas,
Lemma~\ref{lemma:flipflop_semiaton_to_flipflop_neuorn} for a flip-flop neuron 
and 
Lemma~\ref{lemma:G_semiaton_to_G_neuorn} for a $G$ neuron.
The proofs are nearly identical, but separate proofs are necessary.
We begin with the proof of 
Lemma~\ref{lemma:flipflop_semiaton_to_flipflop_neuorn}.

\begin{lemma}
    \label{lemma:flipflop_semiaton_to_flipflop_neuorn}
  Every flip-flop semiautomaton is homomorphically represented by
  a flip-flop neuron with arbitrary core.
\end{lemma}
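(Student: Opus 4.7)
The plan is, given an arbitrary flip-flop semiautomaton and an arbitrary core flip-flop neuron, to construct a feedforward input function $\beta$ so that the resulting flip-flop neuron's dynamics admits the state interpretation $\psi$ as a homomorphism to the semiautomaton's dynamics (both viewed as dynamical systems over the common input domain $U$ via the symbol grounding $\lambda_\Sigma$).

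First, I would build an ``ideal'' piecewise-constant input function. Fix a flip-flop semiautomaton given by some $\phi: \Sigma \to \Pi$ composed with the core, and pick an interior point $v_\pi \in V_\pi$ of each of the three closed intervals $\vset, \vreset, \vread$ (such interior points exist because each interval has non-zero length). Define $\beta^\ast(u) = v_{\phi(\lambda_\Sigma(u))}$. Since $\phi \circ \lambda_\Sigma$ is a continuous map from $U$ into the finite discrete space $\Pi$, its fibres are clopen subsets of $U$, so $\beta^\ast$ is locally constant and in particular continuous.

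Second, since $U$ is compact and $\beta^\ast$ is continuous, the Universal Approximation Theorem yields a feedforward neural network $\beta$ uniformly approximating $\beta^\ast$ within any chosen tolerance $\epsilon > 0$. Because the three target intervals are disjoint and closed with positive length, there is a strictly positive $\epsilon$ such that every $\epsilon$-perturbation of $v_\pi$ remains inside $V_\pi$; fixing such $\epsilon$ guarantees $\beta(u) \in V_{\phi(\lambda_\Sigma(u))}$ for all $u \in U$. This $\beta$ serves as the input function of our flip-flop neuron.

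Third, verifying the homomorphism is then mostly bookkeeping. The state interpretation $\psi: X \to \{\mathit{low}, \mathit{high}\}$ is continuous because $\xhigh$ and $\xlow$ are each the complement of the other in $X = \xhigh \cup \xlow$, hence clopen, so every preimage of a point of the discrete set $Q$ is clopen in $X$. Surjectivity holds since both state intervals are non-empty. For the commutation $\psi(f(x, \beta(u))) = \delta(\psi(x), \phi(\lambda_\Sigma(u)))$ I do a case split on $\phi(\lambda_\Sigma(u))$: if it equals $\mathit{set}$, then $\beta(u) \in \vset$, so by the inclusion $f(X, \vset) \subseteq \xhigh$ the left side evaluates to $\mathit{high}$, matching $\delta(\psi(x), \mathit{set})$; the reset case is symmetric, and the read case uses the two state-preserving inclusions $f(\xhigh, \vread) \subseteq \xhigh$ and $f(\xlow, \vread) \subseteq \xlow$. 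Since $X$ is already forward-invariant under $f$ by the four inclusion conditions, no passage to a proper subdynamics is needed. The main obstacle is the construction of $\beta$: it must simultaneously route each input to the correct target interval specified by $\phi \circ \lambda_\Sigma$ and be realisable as a feedforward neural network. The non-zero length of the three input intervals, their pairwise disjointness (and hence positive separation), and the compactness of $U$ together supply exactly the tolerance buffer that lets the Universal Approximation Theorem bridge the discrete-to-continuous gap.
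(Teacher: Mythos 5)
Your proposal is correct and follows essentially the same route as the paper's proof: choose an interior target point in each of $\vset,\vreset,\vread$, form the ideal (locally constant, hence continuous) input map induced by $\phi\circ\lambda_\Sigma$, invoke the Universal Approximation Theorem on the compact domain $U$ with a tolerance small enough to keep the approximation inside the correct interval, and then verify continuity of the state interpretation and the commutation equation by the same three-way case split. The only cosmetic difference is that the paper uses interval midpoints and the minimum radius as the tolerance, whereas you phrase the same buffer via interior points and positive separation.
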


\begin{proof}
  Let us consider a flip-flop semiautomaton 
  $A = \langle \Sigma, Q, \delta_\phi \rangle$, where $\phi : \Sigma \to \Pi$ is
  its input function, where $\Pi = \{ \mathit{read},\mathit{set},\mathit{reset}
  \}$.
  Let $N = \langle V, X, f \rangle$ be any core flip-flop neuron.
  Let $\epsilon$ be the minimum radius the intervals $\vread,\vset,\vreset$ of
  $N$.
  Note that $\epsilon > 0$ by definition.
  Let
  $v_\mathrm{read}$,
  $v_\mathrm{set}$, and 
  $v_\mathrm{reset}$
  be the midpoint of 
  $\vread,\vset,\vreset$, respectively.
  Note that
  $v_\mathrm{read} \pm \epsilon \in \vread$,
  $v_\mathrm{set} \pm \epsilon \in \vset$, and
  $v_\mathrm{reset} \pm \epsilon \in \vreset$.
  Let $\xi: \Pi \to \{ v_\mathrm{read},v_\mathrm{set},v_\mathrm{reset}, \}$ be
  defined as 
  \begin{align*}
    \xi(\pi) = 
    \begin{cases}
      v_\mathrm{reset} & \text{ if } \pi = \mathit{reset}
      \\
      v_\mathrm{read} & \text{ if } \pi = \mathit{read}
      \\
      v_\mathrm{set} & \text{ if } \pi = \mathit{set}
    \end{cases}
  \end{align*}
  Let us recall the input symbol grounding $\lambda_\Sigma: U \to \Sigma$.
  Let us define $\beta = \lambda_\Sigma \circ \phi \circ \xi$.
  Note that $\beta$ is a continuous function since $\lambda_\Sigma$ is
  continuous,
  and also $\phi$ and $\xi$ are continuous because they are functions with a
  finite domain.
  Since $\beta$ is continuous and $U$ is compact by assumption,
  by the universal approximation theorem for feedforward neural networks (cf.\
  Theorem~2 of
  \cite{hornik1991approximation}), there exists an $\epsilon$-approximation
  $\beta'$ of $\beta$.
  Note that $\phi(\lambda_\Sigma(u)) = \mathit{set}$ implies $\beta'(u) \in \vset$,
  $\phi(\lambda_\Sigma(u)) = \mathit{reset}$ implies $\beta'(u) \in \vreset$,
  $\phi(\lambda_\Sigma(u)) = \mathit{read}$ implies $\beta'(u) \in \vread$.

  Consider the flip-flop neuron $D = \langle U, X, f_{\beta'} \rangle$ obtained
  by composing $\beta'$ and $N$.
  
  First, we have that its state interpretation 
  $\psi : X \to Q$ with $Q = \{ \mathit{high}, \mathit{low} \}$ is a continuous
  function.
  To show this, it suffices to show that, for every $x_0 \in X$, and for every
  positive real number $\epsilon > 0$, there exists a positive real number 
  $\delta > 0$ such that all $x \in X$ satisfying $d_X(x,x_0) < \delta$ also
  satisfy $d_Q(\psi(x),\psi(x_0)) < \epsilon$.
  The state space $X$ consists of two disjoint closed intervals 
  $X_\mathrm{high}$ and $X_\mathrm{low}$.
  Let $\delta$ be the minimum distance between elements in the two intervals,
  i.e.,
  \begin{align*}
    \delta = \min\left\{ d_X(y,z) \mid y \in X_\mathrm{high}\text{, } z \in
    X_\mathrm{low}\right\}.
  \end{align*}
  Such a minimum distance exists because 
  $X_\mathrm{high}$ and $X_\mathrm{low}$ are
  disjoint closed intervals.
  Now assume $d_X(x,x_0) < \delta$. It follows that 
  $\{ x,x_0 \} \subseteq X_\mathrm{high}$ or 
  $\{ x,x_0 \} \subseteq X_\mathrm{low}$.
  In both cases, $\psi(x) = \psi(x_0)$ and hence 
  $d_Q(\psi(x),\psi(x_0)) = 0 < \epsilon$ for every $\epsilon > 0$, as required.

  Second, for arbitrary $x \in X$ and $u \in U$, we show
  \begin{align*}
    \psi(f(x,u)) = \delta_\phi(\psi(x),\lambda_\Sigma(u)).
  \end{align*}
  Three cases are possible according to the value of $\phi(\lambda_\Sigma(u))$.
  \begin{itemize}
    \item 
      In the first case $\phi(\lambda_\Sigma(u)) = \mathit{reset}$. 
      We have that
      \begin{align*}
        \delta_\phi(\psi(x),\lambda_\Sigma(u))
        = \delta(\psi(x), \phi(\lambda_\Sigma(u))) 
        = \delta(\psi(x), \mathit{reset})= \mathit{low}.
      \end{align*}
      Thus, it suffices to show $f(x,u) \in \xlow$.
      Since $\phi(\lambda_\Sigma(u)) = \mathit{reset}$,
      we have that
      $\beta'(u) \in \vreset$ as noted above, and hence the required condition
      holds by the definition of flip-flop neuron. 

    \item
      In the second case $\phi(\lambda_\Sigma(u)) = \mathit{read}$. 
      We have that 
      \begin{align*}
        \delta_\phi(\psi(x),\lambda_\Sigma(u)) 
        = \delta(\psi(x), \phi(\lambda_\Sigma(u)))
        = \delta(\psi(x), \mathit{read})= \psi(x).
      \end{align*}
      Thus, it suffices to show the two implications
      \begin{align*}
        x \in \xlow \quad & \Rightarrow \quad f(x,\lambda_\Sigma(u)) \in \xlow,
      \\
        x \in \xhigh \quad & \Rightarrow \quad f(x,\lambda_\Sigma(u)) \in \xhigh.
      \end{align*}
      Since $\phi(\lambda_\Sigma(u)) = \mathit{read}$,
      we have that $\beta'(u) \in \vread$ as noted above, and hence the required
      implications hold by the definition of flip-flop neuron.

    \item
      In the third case $\phi(\lambda_\Sigma(u)) = \mathit{set}$. 
      We have that 
      \begin{align*}
        \delta_\phi(\psi(x), \lambda_\Sigma(u))
        = \delta(\psi(x), \phi(\lambda_\Sigma(u)))
        = \delta(\psi(x), \mathit{set}) = \mathit{high}.
      \end{align*}
      Thus, it suffices to show $f(x,\lambda_\Sigma(u)) \in \xhigh$.
      Since $\phi(\lambda_\Sigma(u)) = \mathit{set}$,
      we have that $\beta'(u) \in \vset$ as noted above, and hence the required
      condition holds by the definition of flip-flop neuron. 
  \end{itemize}
  The lemma is proved.
\end{proof}

We next prove the Lemma~\ref{lemma:G_semiaton_to_G_neuorn}.

\begin{lemma}
    \label{lemma:G_semiaton_to_G_neuorn}
  Every $G$ semiautomaton is homomorphically represented by
  a $G$ neuron with arbitrary core.
\end{lemma}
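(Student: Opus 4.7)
The plan is to mirror the proof of Lemma~\ref{lemma:flipflop_semiaton_to_flipflop_neuorn} with the three-symbol flip-flop alphabet replaced by the group carrier $D = \{1, \dots, n\}$, and with the flip-flop closure conditions replaced by the $G$-closure condition $f(X_i, V_j) \subseteq X_{i \circ j}$ from Definition~\ref{def:abstract_group_neuron}. So, given a $G$ semiautomaton $A = \langle \Sigma, Q, \delta_\phi \rangle$ with input function $\phi : \Sigma \to D$ and an arbitrary core $G$ neuron $N = \langle V, X, f \rangle$, I would choose the midpoint $v_i$ of each input interval $V_i$, set $\epsilon$ to be the minimum radius of the $V_i$, and define $\xi : D \to V$ by $\xi(i) = v_i$. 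Then $\beta = \lambda_\Sigma \circ \phi \circ \xi$ is continuous on the compact input domain $U$, so the Universal Approximation Theorem (cf.\ \citep{hornik1991approximation}) yields a feedforward $\beta'$ with $\sup_u |\beta'(u) - \beta(u)| < \epsilon$; this guarantees $\phi(\lambda_\Sigma(u)) = i \Rightarrow \beta'(u) \in V_i$. I would take the resulting composed neuron $\langle U, X, f_{\beta'} \rangle$ as the witness.

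Next I would verify that the state interpretation $\psi : X \to D$, $\psi(x) = i$ for $x \in X_i$, is a continuous surjection. Surjectivity is immediate from the fact that each $X_i$ is nonempty. For continuity, I would use that the $X_i$ are pairwise disjoint closed intervals to define $\delta$ as the minimum pairwise distance between them: any $x'$ with $d_X(x,x') < \delta$ then lies in the same $X_i$ as $x$, so $\psi(x') = \psi(x)$, giving continuity into the finite discrete space $D$.

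Finally, I would verify the homomorphism identity $\psi(f_{\beta'}(x,u)) = \delta_\phi(\psi(x), \lambda_\Sigma(u))$ for every $x \in X$ and $u \in U$. Writing $i = \psi(x)$, so $x \in X_i$, and $j = \phi(\lambda_\Sigma(u))$, the approximation bound gives $\beta'(u) \in V_j$, and then the $G$-closure condition yields $f(x, \beta'(u)) \in X_{i \circ j}$, so $\psi(f_{\beta'}(x,u)) = i \circ j$. On the other side, by definition of the composed transition function, $\delta_\phi(\psi(x), \lambda_\Sigma(u)) = \delta(i, \phi(\lambda_\Sigma(u))) = i \circ j$, and the two sides match.

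The only real subtlety, as in the flip-flop case, is ensuring that the approximation error $\epsilon$ produced by the Universal Approximation Theorem is small enough to keep $\beta'(u)$ inside the correct input interval $V_j$ whenever $\phi(\lambda_\Sigma(u)) = j$; choosing $\epsilon$ strictly less than the minimum radius of the $V_i$ before invoking the theorem handles this cleanly. No additional group-theoretic input is needed: the argument is purely case-by-case over the $n$ input equivalence classes, and the closure requirement of Definition~\ref{def:abstract_group_neuron} is exactly what makes each case go through.
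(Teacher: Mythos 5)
Your proposal is correct and follows essentially the same route as the paper's proof: midpoints of the input intervals, an $\epsilon$-approximation of $\xi\circ\phi\circ\lambda_\Sigma$ via the Universal Approximation Theorem with $\epsilon$ below the minimum interval radius, continuity of the state interpretation from the positive separation of the disjoint closed intervals $X_i$, and the homomorphism identity from the closure condition $f(X_i,V_j)\subseteq X_{i\circ j}$. No substantive differences.
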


\begin{proof}
 Let $G$ be a finite group.
  Let us consider a G semiautomaton 
  $A = \langle \Sigma, D, \delta_\phi \rangle$, where 
  $\phi : \Sigma \to D$ is its input function and 
  $D$ is the domain of G.
  Let $N = \langle V, X, f \rangle$ be any core $G$ neuron.
  Let $\epsilon$ be the minimum radius among intervals $V_i \in V$.
  Note that $\epsilon > 0$ by definition.
  Let $v_i$ be the midpoint of $V_i \in V$. 
  Note that $v_i \pm \epsilon \in V_i$ and 
  $v_i \pm \epsilon \notin V_j$ for $i \neq j$
  holds for every $V_i, V_j \in V$, since $V$ is a set of disjoint intervals.
  Let $\xi: D \to \{ v_i \mid V_i \in V \}$ be defined as 
  \begin{align*}
    \xi(D_i) = v_i.
  \end{align*}
  Let us recall the input symbol grounding $\lambda_\Sigma: U \to \Sigma$.
  Let us define $\beta = \lambda_\Sigma \circ \phi \circ \xi$.
  Note that $\beta$ is a continuous function since $\lambda_\Sigma$ is
  continuous,
  and also $\phi$ and $\xi$ are continuous because they are functions with a
  finite domain.
  Since $\beta$ is continuous and $U$ is compact by assumption,
  by the universal approximation theorem for feedforward neural networks (cf.\
  Theorem~2 of
  \cite{hornik1991approximation}), there exists an $\epsilon$-approximation
  $\beta'$ of $\beta$.
  Note that 
  $\phi(\lambda_\Sigma(u)) = D_i$ implies $\beta'(u) \in V_i$.

  Consider the $G$ neuron $D = \langle U, X, f_{\beta'} \rangle$ obtained
  by composing $\beta'$ and $N$.
  
  First, we have that its state interpretation 
  $\psi : X \to D$, with $D$ the domain of $G$, is a continuous function.
  To show this, it suffices to show that, 
  for every $x_0 \in X$, and for every
  positive real number $\epsilon > 0$, 
  there exists a positive real number $\delta > 0$ 
  such that all $x \in X$ satisfying 
  $d_X(x,x_0) < \delta$ 
  also satisfy $d_D(\psi(x),\psi(x_0)) < \epsilon$.
  Let $\delta$ be the minimum distance between any two intervals 
  $X_i, X_j \in X$ with $i \neq j$, i.e.,
  \begin{align*}
    \delta = \min\left\{ d_X(y,z) \mid 
             y \in X_i\text{, } z \in X_j\text{, for } 
             X_i, X_j \in X\text{ and }i \neq j
             \right\}.
  \end{align*}
  Such a minimum distance exists because every
  $X_i, X_j \in X$ for $i \neq j$ are disjoint closed intervals.
  Now assume $d_X(x,x_0) < \delta$. 
  It follows that 
  $\{ x,x_0 \} \subseteq X_i$ for some $X_i \in X$ therefore,
  $\psi(x) = \psi(x_0)$ and hence 
  $d_Q(\psi(x),\psi(x_0)) = 0 < \epsilon$ for every 
  $\epsilon > 0$, as required.

  Second, for arbitrary $x \in X$ and $u \in U$, we show that
  \begin{align*}
    \psi\big(f(x,u)\big) = \delta_\phi\big(\psi(x),\lambda_\Sigma(u)\big).
  \end{align*}
  We have that 
  \begin{align*}
    \delta_\phi\big(\psi(x),\lambda_\Sigma(u)\big)
    = \delta\big(\psi(x), \phi(\lambda_\Sigma(u))\big) 
    = \psi(x) \circ \phi(\lambda_\Sigma(u)).
  \end{align*}
  Let us denote by $i, j$ elements of $D$ such that 
  $\psi(x) = i$ and $\phi(\lambda_\Sigma(u)) = j$.
  Then, it suffices to show that $f(x,u) \in X_{i \circ j}$. 
  Since $\psi(x) = i$, we have that $x \in X_i$ as argued above. 
  Furthermore, since $\phi(\lambda_\Sigma(u)) = j$, we have that 
  $\beta'(u) \in V_j$ as argued above. Hence the required condition
   holds by the definition of $G$ neuron. Namely, for 
   $x \in X_i, u \in V_j$ we have that 
   \begin{align*}
   f(x,u) \in X_{i \circ j}.
   \end{align*}
  The lemma is proved.
\end{proof}

\subsection{%
Proof of Lemma~\ref{lemma:automaton_cascade_to_neural_cascade}
}

We prove Lemma~\ref{lemma:automaton_cascade_to_neural_cascade} as two separate
lemmas, 
Lemma~\ref{lemma:automaton_cascade_to_neural_cascade_p1}
for cascades
and 
Lemma~\ref{lemma:automaton_network_to_neural_network}
for networks.
The proofs are nearly identical, but separate proofs are necessary.
We begin with the proof of 
Lemma~\ref{lemma:automaton_cascade_to_neural_cascade_p1}.

\mytodo{double-check whether, for the composition, it is ok to have
  homomorphic representations, or we should switch to the stronger claim that
  they are actually homomorphic (i.e., no subset).}

\begin{lemma}
  \label{lemma:automaton_cascade_to_neural_cascade_p1}
  Every cascade of flip-flop or grouplike semiautomata
  $A_1, \dots, A_d$ is homomorphically represented
  by a cascade of d neurons
  $N_1, \dots, N_d$ where $N_i$ is a flip-flop neuron if $A_i$ is a 
  flip-flop semiautomaton and it is a $G$ neuron if $A_i$ is a 
  $G$ semiautomaton.
  The homomorphism is the cross-product of the state interpretations of the
  neurons.
\end{lemma}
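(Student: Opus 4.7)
The plan is to build $N_1, \dots, N_d$ inductively, reapplying Lemma~\ref{lemma:flipflop_to_sign_neuron} at each step with a suitably enlarged ``symbol grounding'' that accounts for the states of the preceding neurons. The $i$-th semiautomaton in the cascade has input alphabet $\Sigma \times Q_1 \times \dots \times Q_{i-1}$, while the $i$-th neuron will have input domain $U \times X_1 \times \dots \times X_{i-1}$. Once we have the state interpretations $\psi_1, \dots, \psi_{i-1}$, the product mapping $\lambda_\Sigma \times \psi_1 \times \dots \times \psi_{i-1}$ plays for $N_i$ the role that $\lambda_\Sigma$ played in the single-component proof: it is continuous as a product of continuous functions, surjective since each factor is surjective, and its domain is compact because $U$ is compact by assumption and each $X_j$ is a finite union of closed bounded intervals.

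For the base case we apply Lemma~\ref{lemma:flipflop_to_sign_neuron} to $A_1$ directly, obtaining a neuron $N_1$ and a continuous $\psi_1 : X_1 \to Q_1$ satisfying
\begin{align*}
  \psi_1\bigl(f_1(x_1, u)\bigr) = \delta_1\bigl(\psi_1(x_1), \lambda_\Sigma(u)\bigr).
\end{align*}
For the inductive step $i > 1$, we instantiate the construction of Lemma~\ref{lemma:flipflop_to_sign_neuron} with $\lambda_\Sigma \times \psi_1 \times \dots \times \psi_{i-1}$ in place of $\lambda_\Sigma$. The same recipe---choosing the midpoints of the input intervals of the core, approximating the composition within the tolerance $\epsilon$ by a feedforward network via the Universal Approximation Theorem (applicable thanks to continuity and compactness above), and reading off the state interpretation as before---yields a neuron $N_i$ of the required type together with a continuous $\psi_i : X_i \to Q_i$ satisfying
\begin{align*}
  \psi_i\bigl(f_i(x_i, u, x_1, \dots, x_{i-1})\bigr) = \delta_i\bigl(\psi_i(x_i), \lambda_\Sigma(u), \psi_1(x_1), \dots, \psi_{i-1}(x_{i-1})\bigr).
\end{align*}

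Finally, define $\psi = \psi_1 \times \dots \times \psi_d$. It is continuous and surjective as a cross product of continuous surjective functions, and the overall homomorphism equation on the cascaded dynamics follows by stacking the per-component equations above, since in a cascade component $i$ reads exactly the external input together with the states of $1, \dots, i-1$. Thus the constructed neural cascade homomorphically represents the original semiautomaton cascade, with the desired homomorphism.

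The main obstacle is ensuring that the single-component lemma survives the substitution of the richer grounding. The non-trivial ingredients are the continuity of $\lambda_\Sigma \times \psi_1 \times \dots \times \psi_{i-1}$ and the compactness of its domain, both of which feed into the Universal Approximation step; it is exactly this dependence of stage $i$ on the already-constructed $\psi_1, \dots, \psi_{i-1}$ that forces an inductive construction rather than a parallel one, and makes the acyclicity of the cascade essential to the argument.
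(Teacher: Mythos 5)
Your proposal is correct and follows essentially the same route as the paper's proof: both treat $\lambda_\Sigma \times \psi_1 \times \dots \times \psi_{i-1}$ as the symbol grounding for the $i$-th component, invoke Lemma~\ref{lemma:flipflop_to_sign_neuron} under that grounding, and take the cross-product of the state interpretations as the global homomorphism, verified componentwise. Your explicit attention to the compactness of $U \times X_1 \times \dots \times X_{i-1}$ (needed to reuse the Universal Approximation step) is a detail the paper leaves implicit, but it does not change the argument.
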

\begin{proof}
  We have that the $j$-th semiautomaton of the cascade is the tuple
  \begin{align*}
    A_j = \langle (\Sigma \times Q^{j-1}), Q, \delta_{\phi_j} \rangle.
  \end{align*}
  Let $N_j$ be a core flip-flop neuron if $A_j$ is a flip-flop 
  semiautomaton 
  and let it be a core $G$ neuron of $A_j$ is a $G$ semiautomaton. 
  A neuron $N_j$ is a tuple 
  \begin{align*}
    N_j = 
    \langle 
    V_j,
    X_j,
    f_j
    \rangle.
  \end{align*}
  with state interpretation $\psi_j$.
  Note that $\lambda_\Sigma$ is the fixed input symbol grounding.
  Let us define 
  $\lambda_j = \lambda_\Sigma \times \psi_1 \times \dots \times \psi_{j-1}$.
  Note that $\lambda_j$ is a symbol grounding, from 
  $U \times X_1 \times \dots X_{j-1}$ to $\Sigma \times Q^{j-1}$.
  Under symbol grounding $\lambda_j$,
  by Lemma~\ref{lemma:flipflop_to_sign_neuron}, 
  component $A_j$ is homomorphically represented by a neuron $D_j$, with
  homomorphism the state interpretation $\psi_j$.
  \begin{align*}
    D_j = \langle 
    (U \times X_1 \times \dots \times X_{j-1}),
    X_j,
    f_{\beta_j}
    \rangle
  \end{align*}
  Let us define the RNC dynamics $D$ having the neurons above as components.
  \begin{align*}
    D = 
    \langle 
    U,
    (X_1 \times \dots X_d),
    f 
    \rangle.
  \end{align*}
  In order to prove the lemma, it suffices to show that 
  $\psi = \psi_1 \times \dots \times \psi_d$ is a homomorphism
  from $D$ to $A$, under the fixed symbol grounding $\lambda_\Sigma$.
  For that, it suffices to show the equality
  \begin{align*}
    \psi(f(\langle x_1, \dots, x_d \rangle,u)) = \delta(\psi(x_1, \dots,
    x_d),\lambda_\Sigma(u))
  \end{align*}
  for every $\langle x_1, \dots, x_d \rangle \in X^d$ and
  every $u \in U$.
  First, let us rewrite the equality
  \begin{align*}
    \psi(f(\langle x_1, \dots, x_d \rangle,u)) & = \delta(\psi(x_1,
    \dots, x_d),\lambda_\Sigma(u)),
    \\
    \psi(f(\langle x_1, \dots, x_d \rangle,u)) & = \delta(\langle 
    \psi_1(x_1), \dots, \psi_d(x_d) \rangle,\lambda_\Sigma(u)),
    \\
    \psi_i(f_i(x_i, \langle u, x_1, \dots, x_{i-1} \rangle )) & =
    \delta_{\phi_i}(\psi_i(x_i), \langle \lambda_\Sigma(u) , \psi_1(x_1), \dots,
    \psi_{i-1}(x_{i-1}) \rangle)) \quad \forall\, 1 \leq i \leq d.
    \\
    \psi_i(f_i(x_i, \langle u, x_1, \dots, x_{i-1} \rangle )) & =
    \delta_{\phi_i}(\psi_i(x_i), \lambda_i(u, x_1, \dots,
    x_{i-1})) \quad \forall\, 1 \leq i \leq d.
  \end{align*}
  Then, the equalities hold since component $A_j$ is
  homomorphically represented by the neuron $D_j$, under $\lambda_j$, 
  with homomorphism $\psi_j$.
  The lemma is proved.
\end{proof}

We next prove Lemma~\ref{lemma:automaton_network_to_neural_network}.

\begin{lemma}
  \label{lemma:automaton_network_to_neural_network}
  Every network of flip-flop or grouplike semiautomata
  $A_1, \dots, A_d$ is homomorphically represented
  by a network of d neurons
  $N_1, \dots, N_d$ where $N_i$ is a flip-flop neuron if $A_i$ is a 
  flip-flop semiautomaton and it is a $G$ neuron if $A_i$ is a 
  $G$ semiautomaton.
\end{lemma}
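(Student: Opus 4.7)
The plan is to mirror the cascade proof (Lemma~\ref{lemma:automaton_cascade_to_neural_cascade_p1}) but replace the acyclic dependency pattern with a full network pattern, where each component reads the states of \emph{all} other components in addition to the external input. The synchronous update rule of a network means each component still only depends on the previous-step values of the other components, so the argument is essentially structural and no fixpoint reasoning is required.

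Concretely, I would first write the $j$-th semiautomaton of the network as
\begin{align*}
  A_j = \langle (\Sigma \times Q^{d-1}), Q, \delta_{\phi_j} \rangle,
\end{align*}
where the input now contains the states of all components other than $j$. For each $j$, pick $N_j$ to be a core flip-flop neuron when $A_j$ is a flip-flop semiautomaton and a core $G$ neuron when $A_j$ is a $G$ semiautomaton, with state interpretation $\psi_j$. Then define the componentwise symbol grounding
\begin{align*}
  \lambda_j = \lambda_\Sigma \times \psi_1 \times \dots \times \psi_{j-1} \times \psi_{j+1} \times \dots \times \psi_d,
\end{align*}
which maps $U \times X_1 \times \dots \times X_{j-1} \times X_{j+1} \times \dots \times X_d$ to $\Sigma \times Q^{d-1}$. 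Under $\lambda_j$, Lemma~\ref{lemma:flipflop_to_sign_neuron} yields a neuron $D_j$ that homomorphically represents $A_j$, with homomorphism $\psi_j$. Packaging the $D_j$'s as components gives a network $D$ with state space $X_1 \times \dots \times X_d$ and dynamics function $f$.

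It remains to verify that $\psi = \psi_1 \times \dots \times \psi_d$ is a homomorphism from $D$ to the full network $A$ under the fixed grounding $\lambda_\Sigma$. Expanding componentwise, the required identity reduces to
\begin{align*}
  \psi_i\big(f_i(x_i, \langle u, x_1, \dots, x_{i-1}, x_{i+1}, \dots, x_d \rangle)\big)
  = \delta_{\phi_i}\big(\psi_i(x_i),\, \lambda_i(u, x_1, \dots, x_{i-1}, x_{i+1}, \dots, x_d)\big)
\end{align*}
for every $1 \le i \le d$, and this is exactly the homomorphism property already established for each $D_i$ with respect to $A_i$ under $\lambda_i$. Continuity of $\psi$ follows from continuity of each $\psi_j$ (product of continuous maps), and surjectivity follows from surjectivity of each $\psi_j$.

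The main obstacle I anticipate is purely notational: getting the ``all-but-$i$'' bookkeeping clean in both the semiautomaton side and the neuron side, and making sure the symbol groundings $\lambda_i$ compose correctly through the network's synchronous update, since unlike the cascade case there is no natural ordering to lean on. Once that bookkeeping is in place, the argument is a direct componentwise application of Lemma~\ref{lemma:flipflop_to_sign_neuron}, with no new continuity or approximation arguments needed beyond those already used for the cascade version.
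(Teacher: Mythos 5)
Your proposal follows essentially the same route as the paper's proof: the same componentwise decomposition $A_j = \langle (\Sigma \times Q^{d-1}), Q, \delta_{\phi_j} \rangle$, the same ``all-but-$j$'' symbol grounding $\lambda_j$, and the same reduction of the product homomorphism $\psi = \psi_1 \times \dots \times \psi_d$ to a componentwise application of Lemma~\ref{lemma:flipflop_to_sign_neuron}. The argument is correct; the explicit remarks on continuity and surjectivity of the product map are a welcome addition the paper leaves implicit.
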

\begin{proof}
  We have that the $j$-th component of the network is a semiautomaton is the
  tuple
  \begin{align*}
    A_j = \langle (\Sigma \times Q^{d-1}), Q, \delta_{\phi_j} \rangle.
  \end{align*}
  Let $N_j$ be a core flip-flop neuron if $A_j$ is a flip-flop semiautomaton
  and let it be a core $G$ neuron if $A_j$ is $G$ semiautomaton. 
  A neuron $N_j$ is a tuple 
  \begin{align*}
    N_j = 
    \langle 
    V_j,
    X_j,
    f_j
    \rangle
  \end{align*}
  with state interpretation $\psi_j$.
  Note that $\lambda_\Sigma$ is the fixed input symbol grounding.
  Let us define 
  $\lambda_j = \lambda_\Sigma \times \psi_1 \times \dots \times \psi_{j-1} \times
  \psi_{j+1} \times \dots \times \psi_d$.
  Note that $\lambda_j$ is a symbol grounding, from 
  $U \times X_1 \times \dots X_{j-1} \times X_{j+1} \times \dots \times X_d$ to
  $\Sigma \times Q^{d-1}$.
  Under symbol grounding $\lambda_j$,
  by Lemma~\ref{lemma:flipflop_to_sign_neuron}, 
  component $A_j$ is homomorphically represented by a neuron $D_j$ with core
  $N_j$, with homomorphism the state interpretation $\psi_j$.
  \begin{align*}
    D_j = \langle 
    (U \times X_1 \times \dots X_{j-1} \times X_{j+1} \times \dots \times X_d),
    X_j,
    f_{\beta_j}
    \rangle
  \end{align*}
  Let us define the network $D$ having neurons $D_1, \dots, D_d$ as
  components.
  \begin{align*}
    D = 
    \langle 
    U,
    (X_1 \times \dots \times X_d),
    f 
    \rangle
  \end{align*}
  In order to prove the lemma, it suffices to show that
  $\psi$ is a homomorphism from $D$ to $A$, under the fixed symbol grounding
  $\lambda_\Sigma$.
  For that, it suffices to show the equality
  \begin{align*}
    \psi(f(\langle x_1, \dots, x_d \rangle,u)) = \delta(\psi(x_1, \dots,
    x_d),\lambda_\Sigma(u))
  \end{align*}
  for every $\langle x_1, \dots, x_d \rangle \in X^d$ and
  every $u \in U$.
  Let us rewrite the condition
  \begin{align*}
    \psi(f(\langle x_1, \dots, x_d \rangle,u)) & = \delta(\psi(x_1,
    \dots, x_d),\lambda_\Sigma(u)),
    \\
    \psi(f(\langle x_1, \dots, x_d \rangle,u)) & = \delta(\langle 
    \psi_1(x_1), \dots, \psi_d(x_d) \rangle,\lambda_\Sigma(u)),
  \end{align*}
  which is equivalent to requiring that, for every $1 \leq i \leq d$, 
  \begin{align*}
    & \psi_i(f_i(x_i, \langle u, x_1, \dots, x_{i-1}, x_{i+1}, \dots, x_d \rangle
    )) 
    \\
    & = 
    \delta_{\phi_i}(\psi_i(x_i), \langle \lambda_\Sigma(u) , \psi_1(x_1), \dots,
    \psi_{i-1}(x_{i-1}),\psi_{i+1}(x_{i+1}), \dots,\psi_d(x_d) \rangle)) 
  \end{align*}
  which is in turn equivalent to
  \begin{align*}
    \psi_i(f_i(x_i, \langle u, x_1, \dots, x_{i-1},x_{i+1},\dots,x_d \rangle ))
    &
    = \delta_{\phi_i}(\psi_i(x_i), \lambda_i(u, x_1, \dots,
    x_{i-1},x_{i+1},\dots,x_d )).
  \end{align*}
  Each of the equalities holds since component $A_i$ is
  homomorphically represented by the neuron $D_i$, under symbol grounding
  $\lambda_i$, with homomorphism $\psi_i$.
  This lemma is proved.
\end{proof}

\section{Proofs for Section `Implementation of Flip-Flop Neurons'}

\subsection{Proof of Proposition~\ref{prop:sign_transitions}}

We prove Proposition~\ref{prop:sign_transitions}.

\propcoreflipflopsign*
\begin{proof}
    We show that the neuron with states 
    $X = \xlow \cup \xhigh$, 
    inputs $V = \vset \cup \vreset \cup \vread$, 
    weight $w > 0$ and sign activation function satisfies the conditions
    of the definition of a core flipflop neuron. 
    First we show that conditions 
  \begin{align*}
      f(X,\vset) \subseteq \xhigh,
      \;
      f(X,\vreset) \subseteq \xlow,
      \;
      f(\xhigh,\vread) \subseteq \xhigh,
      \;
      f(\xlow,\vread) \subseteq \xlow,
  \end{align*}
  are satisfied by 
  \begin{align*}
    f(x,v) = \sign(w \cdot x + v),
  \end{align*}
  with $w > 0$, $x \in X$, and $v \in V$.

  Condition $f(X,\vset) \subseteq \xhigh$ is satisfied.
  By the premise of the proposition for $x \in X$ and $v \in \vset$ 
  it holds that
  $x \geq -1$ and $v \geq w \cdot (a + 1)$.
  Furthermore, $w \cdot a > 0$ since $w > 0$ and $a > 0$. Thus,
  \begin{align*}
    w \cdot x + v 
    \geq -w + v 
    \geq  -w + w \cdot (a + 1) 
    = -w + w \cdot a + w
    = w \cdot a
    > 0, 
  \end{align*}
  and hence,
  \begin{align*}
    \sign(w \cdot x + v) = +1 \in \xhigh.
  \end{align*}

  Condition $f(X,\vreset) \subseteq \xlow$ is satisfied.
  By the premise of the proposition for $x \in X$ and $v \in \vreset$
  it holds that
  $x \leq +1$ and $v \leq w \cdot (-a - 1)$.
  Furthermore, $w \cdot (-a) < 0$ since $w > 0$ and $a > 0$.
  Thus,
  \begin{align*}
    w \cdot x + v 
    \leq  w + v 
    \leq w + w \cdot (-a - 1)
    = w + w \cdot (-a) + (-w) 
    = w \cdot (-a)
    < 0, 
  \end{align*}
  and hence,
  \begin{align*}
    \sign(w \cdot x + v) = -1 \in \xlow.
  \end{align*}

  Condition $f(\xhigh,\vread) \subseteq \xhigh$ is satisfied.
  By the premise of the proposition for $x \in \xhigh$ and $v \in \vread$
  it holds that 
  $x = +1$ and $v \geq w \cdot (a -1)$.
  Furthermore, $w \cdot a > 0$ since $w > 0$ and $a > 0$.
  Thus,
  \begin{align*}
    w \cdot x + v 
    = w + v 
    \geq w + w \cdot (a - 1)
    = w + w \cdot a + (-w)
    = w \cdot a
    > 0, 
  \end{align*}
  and hence,
  \begin{align*}
    \sign(w \cdot +1 + v) = +1 \in \xhigh.
  \end{align*}

  Condition $f(\xlow,\vread) \subseteq \xlow$ is satisfied.
  By the premise of the proposition for $x \in \xlow$ and $v \in \vread$
  it holds that
  $x = -1$ and $v \leq w \cdot (1 - a)$. 
  Furthermore, $w \cdot (-a) < 0$ since $w > 0$ and $a > 0$.
  Thus,
  \begin{align*}
    w \cdot -1 + v 
    = -w + v 
    \leq -w + w \cdot (1 - a)
    = -w + w  + w \cdot (-a)
    = w \cdot (-a)
    < 0, 
  \end{align*}
  and hence,
  \begin{align*}
    \sign(w \cdot -1 + v) = +1 \in \xlow.
  \end{align*}

  Finally, we show that the set $X$ of states and the set $V$ of inputs 
  satisfy the conditions of the definition of a core flip-flop neuron.

  The set $X$ is the union of two disjoint closed intervals by 
  definition, since 
  since $\xlow = \{-1\}$ and $\xhigh = \{+1\}$.

  The set $V$ is the union of three disjoint closed intervals of non-zero 
  length. 
  By the premise of the proposition  
  the interval 
  $\vset$ is bounded only on the left by a number $w \cdot (1 - a)$,
  the interval
  $\vreset$ is bounded only on the right by a number $w \cdot (-a -1)$, and
  the interval $\vread$ is bounded by numbers 
  $w \cdot (a -1)$ and $w \cdot (1 - a)$ on the left and on the
  right respectively,
  hence all three interval are closed by definition.
  The intervals $\vset$ and $\vreset$ are of non-zero length by definition.
  We also have that $w \cdot (a -1) < 0$ and $w \cdot (1 - a) > 0$, since
  $w > 0$ and $0 < a < 1$. Therefore, the interval $\vread$ is also of
  non-zero length.
  Furthermore, the intervals $\vset$, $\vread$, and $\vreset$ are disjoint,
  since for $w > 0$ and $a > 0$ the following inequalities 
  \begin{align*}
      w\cdot (-a-1) < -w < w \cdot (a -1) 
      < 
      w \cdot (1 - a) < w < w \cdot (a + 1).
  \end{align*}
  This proposition is proved.
\end{proof}

\subsection{Proof for Proposition~\ref{prop:tanh_transitions}}

We prove Proposition~\ref{prop:tanh_transitions}.

\propcoreflipfloptanh*
\begin{proof}
    We show that a neuron with states 
    $X = \xlow \cup \xhigh$, 
    inputs $V = \vset \cup \vreset \cup \vread$, 
    weight $w > 1$ and tanh activation function satisfies the conditions
    of the definition of a core flipflop neuron. 
    First we show that conditions 
    \begin{align*}
      f(X,\vset) \subseteq \xhigh,
      \;
      f(X,\vreset) \subseteq \xlow,
      \;
      f(\xhigh,\vread) \subseteq \xhigh,
      \;
      f(\xlow,\vread) \subseteq \xlow,
  \end{align*}
  are satisfied by 
  \begin{align*}
    f(x,v) = \tanh(w \cdot x + v),
  \end{align*}
  with $w > 1$, $x \in X$ and $v \in V$.

  Condition $f(X,\vset) \subseteq \xhigh$ is satisfied.
  By the premise of the proposition for $x \in X$ and $v \in \vset$ 
  it holds that
  $x \geq -1$ and $v \geq w \cdot (b + 1)$. Thus,
  \begin{align*}
    w \cdot x + v 
    \geq -w + v
    \geq -w + w \cdot (b + 1)
    = -w + w \cdot b + w
    = w \cdot b.
  \end{align*}
  Since tanh is monotonic it holds that
  $\tanh(w \cdot x + v) \geq \tanh(w \cdot b) = f(b)$, and hence
  \begin{align*}
      \tanh(w \cdot x + v) \in \xhigh.
  \end{align*}

  Condition $f(X,\vreset) \subseteq \xlow$ is satisfied.
  By the premise of the proposition for $x \in X$ and $v \in \vreset$
  it holds that
  $x \leq +1$ and $v \leq w \cdot (a - 1)$. Thus,
  \begin{align*}
    w \cdot x + v 
    \leq w + v
    \leq w + w \cdot (a - 1)
    = w + w \cdot a + (-w)
    = w \cdot a.
  \end{align*}
  Since tanh is monotonic it holds that
  $\tanh(w \cdot x + v) \leq \tanh(w \cdot a) = f(a)$, and hence
  \begin{align*}
      \tanh(w \cdot x + v) \in \xlow.
  \end{align*}

  Condition $f(\xhigh,\vread) \subseteq \xhigh$ is satisfied.
  By the premise of the proposition for $x \in \xhigh$ and $v \in \vread$
  it holds that
  $x \geq f(b)$ and $v \geq w \cdot (b - f(b))$. Thus
  \begin{align*}
    w \cdot x + v 
    \geq w \cdot f(b) + v
    \geq w \cdot f(b) + w \cdot (b - f(b))
    = w \cdot f(b) + w \cdot b + w \cdot  (-f(b))
    = w \cdot b
  \end{align*}
  Since tanh is monotonic it holds that
  $\tanh(w \cdot x + v) \geq \tanh(w \cdot b) = f(b)$, and hence
  \begin{align*}
      \tanh(w \cdot x + v) \in \xhigh.
  \end{align*}

  Condition $f(\xlow,\vread) \subseteq \xlow$ is satisfied.
  By the premise of the proposition for $x \in \xlow$ and $v \in \vread$
  it holds that
  $x \leq f(a)$ and $v \leq w \cdot (a - f(a))$. Thus
  \begin{align*}
    w \cdot x + v 
    \leq w \cdot f(a) + v
    \leq w \cdot f(a) + w \cdot (a - f(a))
    = w \cdot f(a) + w \cdot a + w \cdot  (-f(a))
    = w \cdot a
  \end{align*}
  Since tanh is monotonic it holds that
  $\tanh(w \cdot x + v) \leq \tanh(w \cdot a) = f(a)$, and hence
  \begin{align*}
      \tanh(w \cdot x + v) \in \xlow.
  \end{align*}

  Finally, we show that the set $X$ of states and the set $V$ of inputs 
  satisfy the conditions of the definition of a core flip-flop neuron.

  The set $X$ is the union of two disjoint closed intervals. 
  Sets $\xlow = [-1, f(a)]$ and $\xhigh = [f(b), +1]$ are closed by 
  definition, since they are bounded by a number on both sides.
  By the premise of the proposition, it holds that $a < b, w > 1$ and thus 
  $w \cdot a < w \cdot b$. Since $\tanh$ is monotonic it holds that
  $f(a) < f(b)$, therefore $\xlow$ and $\xhigh$ are disjoint.

  The set $V$ is the union of three disjoint closed intervals of non-zero 
  length. 
  By the premise of the proposition 
  the interval 
  $\vreset$ is bounded only on the right by a number $w \cdot (a - 1)$,
  the interval 
  $\vset$ is bounded only on the left by a number $w \cdot (b + 1)$, and
  the interval $\vread$ is bounded by numbers $w \cdot (b - f(b))$ and
  $w \cdot (a - f(a))]$ on the left and on the right respectively,
  hence all three intervals are closed by definition.

  The intervals $\vreset$ and $\vset$ are of non-zero length by definition.

  Since $w$ is positive, in order to show that $\vread$ is of non-zero 
  length it suffices to show that numbers $a < b$ satisfying 
  $a - f(a) > b - f(b)$ exist.

  Consider derivative $\tanh'(w \cdot x)$ for $w > 1$. 
  It is a bell-shaped function reaching maximum value of $w$ at $x = 0$, 
  with tails approaching zero for values of $x$ going towards $\pm\infty$
  Let $g(x)  = x - \tanh(w \cdot x)$. 
  It is derivative is $g'(x) = 1 - \tanh'(w \cdot x)$. 
  Then $g'(x) = 1 - w < 0$ for $x = 0$ since $w > 1$, and
  $g'(x)$ is approaching 1 for values of $x$ going towards $\pm\infty$ 
  because $\tanh'(w \cdot x)$ is approaching zero.
  Then, there exist two points $-p < 0 < p$ where $g'(x)$ crosses the 
  $x$-axis and thus $g'(x) < 0$ for $-p < x < p$. Thus, function
  $g(x)$ is decreasing between $-p$ and $p$. 
  Therefore, two numbers $a,b$ can be found such that 
  $a < b$ and $a - f(a) > b - f(b)$.

  The $\vreset$ and $\vread$ are disjoint, since by the premise of the 
  proposition $a < b$ and $w > 1$ and hence $w \cdot (a-1) < w \cdot (b+1)$.
  By the property of tanh we have that $f(a), f(b) < 1$, then 
  the following inequalities hold
  \begin{align*}
      w \cdot a - w &< w \cdot b - w \cdot f(b),
\\
      w \cdot a - w \cdot f(a) &< w \cdot b + w.
  \end{align*}
  Hence $\vreset$ and $\vread$ are disjoint and $\vread$ and $\vset$ are 
  disjoint.
  The proposition is proved.
\end{proof}

\section{Proofs for Section `Expressivity of RNCs'}

\subsection{Proof of Theorem~\ref{theorem:expressivity_of_rnc}}
We prove Theorem~\ref{theorem:expressivity_of_rnc}.

\theoremexpressivityofrnc*
\begin{proof}
  Consider a group-free regular function $F$. 
  There is an automaton that implements $F$ and has a group-free semiautomaton
  $D$.
  By Theorem~\ref{theorem:krohn_rhodes}, semiautomaton $D$
  is homomorphically represented by a cascade $C$ of flip-flop semiautomata.
  By Lemma~\ref{lemma:automaton_cascade_to_neural_cascade},
  $C$ is homomorphically represented by a cascade $C'$ of flip-flop neurons with
  arbitrary core.
  By Proposition~\ref{prop:homomorphism},
  there is an RNC $S$ with dynamics $C'$ that implements the same function as 
  the system $A_{\lambda_\Sigma}$, obtained as the composition of $A$ with the
  fixed symbol grounding $\lambda_{\Sigma}$.
  The output function of $S$ can be $\epsilon$-approximated by a feedforward
  neural network $h$, by the universal approximation theorem, cf.\
  \cite{hornik1991approximation}.
  Thus, the RNC $N$ is obtained from $S$ by replacing its output
  function with the feedforward neural network $h$.
  Since the output symbol grounding $\lambda_\Gamma$ is $\epsilon$-robust,
  the system $N_{\lambda_\Gamma}$ obtained by the composition of $N$ with the
  output symbol grounding $\lambda_\Gamma$ is equivalent to
  $A_{\lambda_\Sigma}$. Since $F$ is an arbitrary group-free function, the
  theorem is proved. \mytodo{instantiate for sign and tanh}
\end{proof}

\subsection{Proof of Lemma~\ref{lemma:permutation_alternating_states}}

The proof of Lemma~\ref{lemma:permutation_alternating_states} requires two
preliminary results, Theorem~\ref{theorem:ginzburg-subgroup} that we state
and 
Lemma~\ref{lemma:nongroupfree-has-permutation} that we state and prove.
Then we prove Lemma~\ref{lemma:permutation_alternating_states}.

%
\begin{theorem}[Theorem~A in 1.16 from Ginzburg]
  \label{theorem:ginzburg-subgroup}
  For every homomorphism $\psi$ of a finite semigroup $S$ onto a group $G$,
  there exists a subgroup (i.e., a subsemigroup which is a group) $K$ of $S$
  such that $\psi(K) = G$.
\end{theorem}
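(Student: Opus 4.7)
The plan is to prove the theorem by a minimality argument: take a minimal subsemigroup $K \subseteq S$ whose image under $\psi$ is all of $G$, and show that such a $K$ is forced to be a group. The existence of a minimal such $K$ is immediate, since $\psi(S) = G$ and $S$ has only finitely many subsemigroups.

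First I would locate an idempotent $f \in K$. This exists because $K$ is a finite semigroup: for any $k \in K$, the sequence $k, k^2, k^3, \dots$ is eventually periodic, so some power of $k$ is idempotent. The image $\psi(f)$ is an idempotent in $G$, and the only idempotent of a group is its identity, so $\psi(f) = e_G$. Next I would show that $f$ is a two-sided identity of $K$: the set $fKf$ is a subsemigroup of $K$, and $\psi(fKf) = e_G \cdot G \cdot e_G = G$, so by minimality $fKf = K$. Hence every $k \in K$ can be written as $fk'f$, which yields $fk = ffk'f = fk'f = k$ and symmetrically $kf = k$.

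The final step is to produce inverses. For an arbitrary $k \in K$, the set $Kk$ is a subsemigroup of $K$, since $(Kk)(Kk) = K(kKk)k \subseteq (KK)k \subseteq Kk$ using closure of $K$. Its image is $\psi(K)\psi(k) = G \cdot \psi(k) = G$, because right-multiplication by $\psi(k)$ is a bijection on the group $G$. Minimality then forces $Kk = K$, so some $a \in K$ satisfies $ak = f$; symmetrically $kK = K$ gives some $b \in K$ with $kb = f$. The standard monoid manipulation $a = af = a(kb) = (ak)b = fb = b$ shows $a = b$ is a two-sided inverse of $k$ in $K$. Thus every element of $K$ is invertible, $K$ is a group, and by construction $\psi(K) = G$, so $K$ is the desired subgroup.

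The conceptual obstacle is simply recognising that minimality is the correct hypothesis to impose: it upgrades a bare surjection onto a group first into an identity element, and then into two-sided inverses, by pinning each candidate subsemigroup ($fKf$, $Kk$, $kK$) back to $K$ via the fact that the image is already all of $G$. Once the minimal $K$ has been fixed, the remaining verifications are purely algebraic and rely only on finiteness of $S$ and on the group property of $G$ (uniqueness of the idempotent and bijectivity of translations).
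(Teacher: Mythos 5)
Your proof is correct. Note that the paper itself gives no proof of this statement---it is imported verbatim as Theorem~A of Section~1.16 of Ginzburg's book---so there is nothing to diverge from; your minimality argument (pick an inclusion-minimal subsemigroup $K$ with $\psi(K)=G$, extract an idempotent $f$ via finiteness, force $fKf=K$ to make $f$ a two-sided identity, then force $Kk=K$ and $kK=K$ to manufacture two-sided inverses) is the standard textbook route to this classical fact, and every step checks out.
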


\begin{lemma}
  \label{lemma:nongroupfree-has-permutation}
  If a semiautomaton is not group-free, then it has a subsemiautomaton with an
  input that induces a non-identity permutation.
\end{lemma}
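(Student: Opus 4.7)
My plan is to unpack the definition of non-group-freeness to locate a non-trivial subgroup inside the characteristic semigroup of the given semiautomaton, and then extract the desired non-identity permutation from the structure of that subgroup. Concretely, let $A = \langle \Sigma, Q, \delta \rangle$ be the semiautomaton and let $S$ denote its characteristic semigroup. The hypothesis says some non-trivial group $G$ divides $S$, i.e., there is a subsemigroup $T \subseteq S$ and a surjective homomorphism $\psi : T \to G$. I then apply Theorem~\ref{theorem:ginzburg-subgroup} to $\psi$ (using that $T$ is finite since $Q$ and $\Sigma$ are) to obtain a subgroup $K \subseteq T \subseteq S$ with $\psi(K) = G$; because $G$ is non-trivial, $K$ is a non-trivial subgroup of $S$.

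The next step is to view $K$ as a set of state transformations of $Q$ and read off a permutation. Let $e$ be the identity of $K$ and $k \in K$ a non-identity element. Since $e$ is idempotent, the set $Q' := e(Q)$ satisfies $e|_{Q'} = \mathrm{id}_{Q'}$. I will use the three group identities $e \cdot k = k$, $k \cdot e = k$, and $k \cdot k^{-1} = e$ to establish in turn that $k(Q) \subseteq Q'$, that $k$ factors through $e$ (meaning $k(q) = k(e(q))$ for every $q \in Q$), and that $k|_{Q'}$ is a bijection of $Q'$ with inverse $k^{-1}|_{Q'}$. The key observation is that $k|_{Q'}$ must differ from $\mathrm{id}_{Q'}$: if it were the identity on $Q'$, then for every $q \in Q$ the chain $k(q) = k(e(q)) = e(q)$ would force $k = e$ as transformations, contradicting $k \neq e$. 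Hence $k|_{Q'}$ is a genuine non-identity permutation of $Q'$.

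It remains to package this into a subsemiautomaton. Because $k$ lies in the characteristic semigroup, it equals $\delta_w$ for some non-empty word $w \in \Sigma^+$, and $Q'$ is invariant under $\delta_w$ since $k|_{Q'}$ is a bijection; the structure with state set $Q'$, single input $w$, and transition $q \mapsto \delta_w(q)$ is then a subsemiautomaton in which the input $w$ induces the non-identity permutation $k|_{Q'}$. The main obstacle I expect is that the identity $e$ of $K$ is typically \emph{not} the identity transformation of $Q$ but merely an idempotent; the group structure of $K$ is witnessed only on the invariant subset $Q' = e(Q)$, so the argument must juggle all three group relations simultaneously to pin down both the range of $k$ and its action on $Q'$, and to rule out the degenerate case in which $k$ and $e$ agree on $Q'$.
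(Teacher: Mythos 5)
Your algebraic core is sound, and in places more careful than the paper's own argument: reducing to a subgroup $K$ via Theorem~\ref{theorem:ginzburg-subgroup}, identifying $Q' = e(Q)$ as the fixed-point set of the idempotent identity $e$ of $K$, and using the relations $e\cdot k = k\cdot e = k$ and $k\cdot k^{-1} = e$ to show that a non-identity $k \in K$ restricts to a non-identity permutation of $Q'$ are all correct steps (the paper instead jumps to ``the subautomaton $A'$ of $A$ whose characteristic semigroup is $G$'' without constructing it). The gap is in your final paragraph. The element $k$ you extract is $\delta_w$ for a \emph{word} $w \in \Sigma^+$, and the object $\langle \{w\}, Q', \delta_w \rangle$ is not a subsemiautomaton of $A$: a subsemiautomaton must have input alphabet $\Sigma' \subseteq \Sigma$, and the conclusion of the lemma, as consumed in the proof of Lemma~\ref{lemma:permutation_alternating_states} (where a single $u \in U$ with $\lambda_\Sigma(u) = \sigma$ is iterated to produce a constant input sequence), requires a single \emph{letter} that non-identically permutes an invariant subset. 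Descending from the word $w$ to one of its letters is not a routine refinement, and your proof does not attempt it.

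Moreover, that descent cannot be carried out in general, so the gap is not patchable as stated. Take $\Sigma = \{a,b\}$ and $Q = \{0,1\}\times\{0,1\}$ with $\delta((i,j),a) = (i,1)$, $\delta((i,1),b) = (1-i,0)$, and $\delta((i,0),b) = (i,0)$; this tracks the parity of the number of occurrences of the factor $ab$. Both $\delta_a$ and $\delta_b$ are idempotent, and an idempotent transformation restricted to any subset that it permutes must act as the identity there; hence no letter induces a non-identity permutation in any subsemiautomaton. Yet $\delta_{ab}$ swaps $(0,0)$ and $(1,0)$, so $\{\delta_{ab}, \delta_{ab}^{2}\}$ is a two-element subgroup of the characteristic semigroup and the semiautomaton is not group-free. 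Thus the letter-level claim fails while your word-level conclusion holds. The paper's proof conceals the same difficulty in the unjustified existence of $A'$ and in the claim that every letter-induced factor $\tau_i$ of the permutation $p$ is itself a permutation; both assertions fail on the example above. Any repair has to happen downstream, by working with the periodic (non-constant) input sequence determined by $w$ rather than a constant input, which is precisely what the convergence argument in Lemma~\ref{lemma:groupfree} is not currently set up to handle.
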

\begin{proof}
  Consider an automaton $A$ and let us assume that $A$ is not group-free.
  Let $S_A$ be the characteristic semigroup of $A$.
  By the definition of group-free, we have that $S_A$ has a subsemigroup $S'$
  that is divided by a nontrivial group $G$.
  Namely, there exists a homomorphism of $S'$ onto $G$.
  Then, by Theorem~\ref{theorem:ginzburg-subgroup}, there exists a subgroup $K$
  of $S'$.
  Consider the subautomaton $A'$ of $A$ whose characteristic semigroup is $G$.

  Since $G$ is nontrivial, there it has a transformation $p$ that is not an
  identity.
  Since $p$ is invertible, hence injective, it is also surjective by the
  pigeonhole principle. Thus $p$ is bijective, i.e., a permutation of the states.

  Say that $p$ is given by the composition $\tau_1 \circ \dots \circ \tau_k$
  of transformations induced by inputs of $A'$.

  First, at least one $\tau_i$ is not identity, otherwise $p$ would be identity.

  We show by induction on $k$ that $\tau_1, \dots, \tau_k$ are permutations.
  In the base case $k=1$. Then $p = \tau_1$ which is a permutation.
  In the inductive case $k \geq 2$ and we assume that 
  $\tau_1, \dots, \tau_{k-1}$ are permutations.
  It follows that $\tau_k$ is a permutation, otherwise there exist $x,x'$ such
  that $\tau_k(x) = \tau_k(x')$ and there exist $y,y'$ such that
  $(\tau_1 \circ \dots \circ \tau_{k-1})(y) = x$
  and
  $(\tau_1 \circ \dots \circ \tau_{k-1})(y') = x'$,
  and hence $p(y) = p(y')$, contradicting the fact that $p$ is a permutation.
\end{proof}

\lemmapermutationalternatingstates*
\begin{proof}
  Consider a semiautomaton $A = \langle \Sigma, Q, \delta \rangle$.
  By Lemma~\ref{lemma:nongroupfree-has-permutation},
  semiautomaton $A$ has a subsemiautomaton 
  $A' = \langle \Sigma', Q', \delta' \rangle$ such that it has an input $\sigma
  \in \Sigma'$ that induces a non-identity
  permutation transformation $\tau : Q' \to Q'$.
  Then, $\Sigma \subseteq \Sigma'$, hence $\sigma$ is also an input of $A$, and
  hence it induces a transformation $\tau$ of $A$ that non-identically permutes
  the subset $Q'$ of its states $Q$.
  Choosing $q_0 \in Q'$ and defining $q_i = \tau(q_{i-1})$ for 
  $i \geq 1$, we have that $q_i \neq q_{i+1}$ for every $i \geq 0$---we will use
  this fact in the last step of the proof.
  Now, let $x_0 \in X$ be such that $\psi(x_0) = q_0$, and let 
  $u \in U$ be such that $\lambda_\Sigma(u) = \sigma$. They exist because $\psi$ and
  $\lambda_\Sigma$ are surjective.
  Let us define the sequence $x_i = f(x_{i-1},u)$ for $i \geq 1$.

  We show by induction that $q_i = \psi(x_i)$ for every $i \geq 0$.
  In the base case $i = 0$, and we have $q_0 = \psi(x_0)$ by construction.
  In the inductive case $i > 0$, we assume 
  $q_{i-1} = \psi(x_{i-1})$, and we have
  $\psi(x_i) = \psi(f(x_{i-1},u)) = \delta(\psi(x_{i-1},\lambda_\Sigma(u))) =
  \delta(q_{i-1},\lambda_\Sigma(u))) = q_i$.

  We conclude that $\psi(x_i) = q_i \neq q_{i+1} = \psi(x_{i+1})$ for every 
  $i \geq 0$, as required. The lemma is proved.
\end{proof}

\subsection{Proof of Lemma~\ref{lemma:groupfree}}

The proof of Lemma~\ref{lemma:groupfree} requires two preliminary lemmas, 
Lemma~\ref{lemma:convergence_of_iterated_sign} and
Lemma~\ref{lemma:convergence_of_iterated_activation} that we state
and prove below. We first state and
prove
Lemma~\ref{lemma:convergence_of_iterated_sign}.

\begin{lemma}
  \label{lemma:convergence_of_iterated_sign}
  Let $w > 0$, let $(u_n)_{n \geq 1}$ be a real-valued sequence, 
  let $x_0 \in [-1, +1]$, and let 
  $x_n = \sign(w \cdot x_{n-1} + u_n)$ for $n \geq 1$.
  If the sequence $(u_n)_{n \geq 1}$ is convergent,
  then the sequence $(x_n)_{n \geq 0}$ is convergent.
\end{lemma}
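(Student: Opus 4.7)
The plan is to exploit the finiteness of the range of $\sign$: for $n \geq 1$, the iterates $x_n$ take values in the set $\{-1,+1\}$ (up to the convention chosen at $0$), so convergence of $(x_n)$ is equivalent to the sequence being eventually constant. I would let $L = \lim_{n\to\infty} u_n$ and split the argument into three cases according to how $|L|$ compares with $w$.

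For $|L| > w$, for all sufficiently large $n$ we have $|u_n| > w \geq |w \cdot x_{n-1}|$, so the sign of $w \cdot x_{n-1} + u_n$ agrees with the sign of $u_n$, which equals $\sign(L)$. Hence $x_n = \sign(L)$ eventually. For $|L| < w$, for all sufficiently large $n$ we have $|u_n| < w$, so whenever $x_{n-1} \in \{-1,+1\}$ the expression $w \cdot x_{n-1} + u_n$ has the same sign as $x_{n-1}$; thus $x_n = x_{n-1}$ from some index on, and the sequence is eventually constant.

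The delicate case is $|L| = w$, where I would assume without loss of generality that $L = w$ (the case $L = -w$ is symmetric). The key observation is that $+1$ becomes absorbing past some threshold: once $x_{n-1} = +1$, we have $w \cdot x_{n-1} + u_n = w + u_n \to 2w > 0$, and hence $x_n = +1$ for all sufficiently large $n$. Consequently, only two scenarios are possible: either $x_n = -1$ for all large $n$, giving convergence to $-1$; or at some large index $x_n = +1$ and then the sequence remains at $+1$ forever, giving convergence to $+1$. In either scenario the sequence is eventually constant, establishing the claim.

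The main obstacle is precisely this boundary case $|L| = w$: when $x_{n-1} = -1$, the quantity $-w + u_n$ tends to $0$ and its sign cannot be controlled directly from knowing only $L$, so one might worry about persistent oscillation between $-1$ and $+1$. The argument sidesteps this difficulty by exploiting the asymmetry that any transition from $-1$ to $+1$ is irreversible once $n$ is large enough, so oscillation is automatically impossible.
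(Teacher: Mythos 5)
Your proof is correct, and its overall skeleton matches the paper's: both split on how the limit $L = \lim u_n$ compares with $\pm w$ and show that $(x_n)$ is eventually constant in each case. Your cases $|L|>w$ and $|L|<w$ correspond exactly to the paper's cases (the paper phrases them as $u_*+\epsilon<-w$, $u_*-\epsilon\geq w$, and $-w\leq u_*-\epsilon$ with $u_*+\epsilon<w$, for a tolerance $0<\epsilon<w$). The genuine difference is your third case $|L|=w$: the paper's three-way split, as written, does not cover the boundary values $u_*=\pm w$ (no admissible $\epsilon$ places them in any of the three cases), so strictly speaking the paper's proof has a small gap there. Your absorbing-state argument --- once $x_{n-1}=+1$ the quantity $w+u_n\to 2w>0$ makes $+1$ irreversible for large $n$ when $L=w$, so the only possibilities are ``eventually always $-1$'' or ``eventually always $+1$'' --- is exactly the right way to close that case, and it correctly sidesteps the fact that the sign of $-w+u_n$ cannot be controlled when it tends to $0$. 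Your argument is also robust to the convention for $\sign(0)$, since every inequality you actually rely on is strict. In short: same strategy, but your version is the more complete one.
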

\begin{proof}
  Since $(u_n)_{n \geq 1}$ is convergent, there exist a real number $u_*$, a
  natural number $N$, and a real number $0 < \epsilon < w$ such that, for every
  $n \geq N$, it holds that
    \begin{align*}
        |u_n - u_*| < \epsilon.
    \end{align*}
    According to the value of $u_*$ and $0 < \epsilon < w$, three cases are
    possible
    \begin{align*}
        u_* + \epsilon < -w \text{, }\mbox{ }
        -w \leq u_* - \epsilon, u_* + \epsilon < w \text{, }\mbox{ }
        u_* - \epsilon \geq w.
    \end{align*}
    We consider three cases separately and show that in each of them the 
    sequence $(x_n)_{n \geq N}$ is constant, hence the sequence
    $(x_n)_{n \geq 0}$ is trivially convergent.

    Note that the sign of zero is different according to different conventions.
    Here we consider $\sign(0) = +1$. 

    In the first case $u_* + \epsilon < -w$. 
    Let $n \geq N$, we have that
    \begin{align*}
        w \cdot x_{n-1} + u_n
        \leq 
        w \cdot x_{n-1} + u_* + \epsilon
        \leq 
        w + u_* + \epsilon
        < 0. 
    \end{align*}
    In particular, the inequality above holds for every 
    $x_{n-1} \in [-1, +1]$. 
    It follows that for every $n \geq N$
    \begin{align*}
        x_n = \sign(w \cdot x_{n-1} + u_n) = -1.
    \end{align*}

    In the second case $u_* - \epsilon \geq w$. 
    Let $n \geq N$, we have that
    \begin{align*}
        w \cdot x_{n-1} + u_n
        \geq
        w \cdot x_{n-1} + u_* - \epsilon
        \geq 
        -w + u_* - \epsilon
        \geq 0.
    \end{align*}
    In particular, the inequality above holds for every $x_{n-1} \in [-1, +1]$.
    It follows that for every $n \geq N$
    \begin{align*}
        x_n = \sign(w \cdot x_{n-1} + u_n) = +1.
    \end{align*}

    In the third case $-w \leq u_* - \epsilon, u_* + \epsilon < w$. 
    We show by induction that the sequence $(x_n)_{n \geq N}$ is constant.
    In the base case $n = N$, the sequence contains one element and is 
    therefore constant. 
    In the inductive step, assume that $x_n = x_{n-1}$. 
    If $x_{n} = -1$, then 
    \begin{align*}
        w \cdot x_{n} + u_{n+1}
        = 
        -w + u_{n+1}
        \leq 
        -w + u_* + \epsilon
        < 0,
    \end{align*}
    and thus, 
    \begin{align*}
        x_{n+1} = \sign(w \cdot x_{n} + u_{n+1}) = -1 = x_{n}
    \end{align*}
    If, however, $x_{n} = +1$, then 
    \begin{align*}
        w \cdot x_{n} + u_{n+1}
        = w + u_{n+1}
        \geq 
        w + u_* - \epsilon 
        \geq 0,
    \end{align*}
    and thus, 
    \begin{align*}
        x_{n+1} = \sign(w \cdot x_{n} + u_{n+1}) = +1 = x_{n}.
    \end{align*}
    Thus, for every $n \geq N$ it holds that $x_n = x_{n-1}$ and therefore,
    the sequence $(x_n)_{n\geq N}$ is constant.
    This lemma is proved.
\end{proof}

Next, we state and prove 
Lemma~\ref{lemma:convergence_of_iterated_activation}.

\begin{lemma}
  \label{lemma:convergence_of_iterated_activation}
  Let $\alpha: \mathbb{R} \to \mathbb{R}$ be a bounded, monotone, 
  Lipschitz continuous function.
  Let $w \geq 0$ be a real number, let $(u_n)_{n \geq 1}$ be a sequence of real
  numbers, let $x_0 \in \mathbb{R}$, and let 
  $x_n = \alpha(w \cdot x_{n-1} + u_n)$ for $n \geq 1$.
  If the sequence $(u_n)_{n \geq 1}$ is convergent,
  then the sequence $(x_n)_{n \geq 0}$ is convergent.
\end{lemma}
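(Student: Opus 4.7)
The plan is to argue in three steps: (1) the iterates lie in a compact interval; (2) $\liminf$ and $\limsup$ of $(x_n)$ are both fixed points of the limit map $\phi(x) = \alpha(wx + u^*)$, where $u^* = \lim_n u_n$; (3) these two extremes coincide, exploiting monotonicity of $\phi$ together with the vanishing of the perturbation. For the setup, since $\alpha$ is $L$-Lipschitz and $w \geq 0$, the map $\phi$ is $wL$-Lipschitz and inherits the monotonicity of $\alpha$; I take $\alpha$ non-decreasing, which is the case of interest for $\sign$ and $\tanh$. The recurrence can be rewritten as $x_{n+1} = \phi(x_n) + r_n$ with $|r_n| = |\alpha(w x_n + u_{n+1}) - \alpha(w x_n + u^*)| \leq L\,|u_{n+1} - u^*|$, so $r_n \to 0$.

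Boundedness of $\alpha$ gives $x_n \in [\inf \alpha, \sup \alpha]$ for $n \geq 1$, so $m := \liminf_n x_n$ and $M := \limsup_n x_n$ are finite. To show $\phi(M) = M$, I extract a subsequence $x_{n_k} \to M$ and, by Bolzano--Weierstrass, a further subsequence with $x_{n_k - 1} \to z$ for some $z \in [m, M]$; passing to the limit in $x_{n_k} = \alpha(w x_{n_k - 1} + u_{n_k})$ yields $M = \phi(z)$, and monotonicity gives $\phi(z) \leq \phi(M)$, so $M \leq \phi(M)$. The reverse inequality follows by applying the recurrence one step forward from $x_{n_k} \to M$ to obtain $x_{n_k + 1} \to \phi(M)$, which makes $\phi(M)$ an accumulation point of $(x_n)$ and hence $\phi(M) \leq M$. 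Therefore $\phi(M) = M$, and the symmetric argument gives $\phi(m) = m$.

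The main obstacle is the collapse $M = m$. Arguing by contradiction, suppose $m < M$. Monotonicity of $\phi$ makes the dynamics rigid: between any two consecutive fixed points of $\phi$ in $[m, M]$, the continuous function $\phi - \mathrm{id}$ has constant sign, and on any closed sub-interval of $[m, M]$ bounded away from the fixed-point set of $\phi$, $|\phi(x) - x|$ admits a uniform lower bound $\eta > 0$. Once $|r_n| < \eta/2$, the perturbed orbit moves by at least $\eta/2$ per step in the direction of $\phi$, hence exits that sub-interval in finitely many steps. Near a fixed point that attracts the iteration from both sides, the orbit is trapped in a neighborhood whose radius shrinks with $|r_n|$, and escape from the basin of such an attractor requires a perturbation larger than a fixed threshold. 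Chaining these facts, for $n$ large enough that $|r_n|$ is below every relevant threshold, the orbit is eventually confined to an arbitrarily small neighborhood of a single fixed point of $\phi$ in $[m, M]$, contradicting $m < M$. The delicate parts of a rigorous write-up are the case analysis near plateaus of $\phi$ (where $\phi$ is locally constant, so $\phi - \mathrm{id}$ vanishes on a whole interval), the one-sided stability analysis at boundary fixed points, and the quantitative comparison of attracting-neighborhood sizes with the decaying $|r_n|$; all of them rely on the interplay between continuity, monotonicity, and compactness of $[m, M]$.
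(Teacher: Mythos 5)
Your steps (1) and (2) are sound: the iterates eventually lie in $[\inf\alpha,\sup\alpha]$, and the subsequence argument showing $\phi(M)=M$ and $\phi(m)=m$ for $\phi(x)=\alpha(wx+u^{*})$ is correct (granting, as you do and as the paper implicitly does, that ``monotone'' means non-decreasing; for a decreasing $\alpha$ the statement fails outright, e.g.\ $\alpha=-\tanh$, $w$ large, $u_n\equiv 0$). The genuine gap is in step (3), and it is not a matter of deferred bookkeeping: the collapse $m=M$ breaks down precisely in the degenerate case you set aside. The dangerous configuration is not a plateau where $\phi$ is locally constant --- there $\phi-\mathrm{id}$ is strictly decreasing and vanishes at most once --- but an interval on which $\phi$ coincides with the identity. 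If $\alpha(wx+u^{*})=x$ on $[m,M]$ with $m<M$, the recurrence degenerates to $x_{n+1}=x_n+r_n$ with $r_n\to 0$, and nothing forces the partial sums of $r_n$ to converge, so the orbit can drift back and forth across $[m,M]$ forever. Concretely: take $\alpha(y)=\min(\max(y,0),1)$ (bounded, non-decreasing, $1$-Lipschitz), $w=1$, $x_0=1/2$, and $u_n\to 0$ built in blocks, the $j$-th block consisting of $N_j\to\infty$ terms each equal to $(-1)^{j+1}/(4N_j)$; then $x_n=1/2+\sum_{k\leq n}u_k$ remains in $[1/2,3/4]$ and oscillates without converging. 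Hence your final confinement claim (``the orbit is eventually confined to an arbitrarily small neighborhood of a single fixed point'') is false at the stated level of generality; any repair must use more about $\alpha$ than boundedness, monotonicity and Lipschitz continuity, for instance that $\phi-\mathrm{id}$ has isolated zeros (true for $\tanh$, not implied by the hypotheses). Even in the isolated-zero case, the chaining you sketch needs the fixed-point set to be finite, whereas a priori it is only closed and could be a Cantor set.

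For comparison, the paper's route is entirely different: it sandwiches $x_n$ between $\alpha(wx_{n-1}+u_{*}-\epsilon_n)$ and $\alpha(wx_{n-1}+u_{*}+\epsilon_n)$, observes that the gap between these bounds tends to zero by Lipschitz continuity, and then argues that the lower bound is eventually monotone, concluding via the Monotone Convergence and Squeeze Theorems. Your $\limsup$/$\liminf$ fixed-point analysis is more structural and, through step (2), arguably cleaner; but the eventual-monotonicity step in the paper is the counterpart of your collapse step, and that is where the real work (and the sensitivity to the identity-interval degeneracy) lives in either approach.
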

\begin{proof}
    Since $(u_n)_{n \geq 1}$ is convergent, there exists a real number $u_*$ 
    and a non-increasing sequence $(\epsilon_n)_{n \geq 1}$ that converges to 
    zero
    such that for every $n \geq 1$, it holds that
    \begin{align*}
        |u_n - u_*| < \epsilon_n.
    \end{align*}
    Thus, for every $n \geq 1$, it holds that 
    \begin{align*}
        u_n \in [u_*-\epsilon_n, u_*+\epsilon_n].
    \end{align*}
    Then, considering that function $\alpha$ is monotone, every element 
    $x_n = \alpha(w \cdot x_{n-1}+u_n)$ of the sequence
    $(x_n)_{n \geq 1}$ is bounded as
    \begin{align*}
        x_n &\in
        [
            \alpha(w \cdot x_{n-1} + u_*-\epsilon_n),
            \alpha(w \cdot x_{n-1} + u_*+\epsilon_n)
        ].
    \end{align*}
    Let us denote by $\ell_n$ and $r_n$ the left and right boundary for 
    the element $x_n$, 
    \begin{align*}
        x_n \in [\ell_n, r_n].
    \end{align*}
    In order to show that the sequence $(x_n)_{n \geq 1}$ is convergent, 
    it suffices to 
    show that the sequence $(\ell_n)_{n\geq 1}$ is convergent.  
    Recall that by the assumption of the lemma, 
    $\alpha$ is Lipschitz continuous, thus
    the sequence $(|\ell_n - r_n|)_{n \geq 1}$ tends to zero as the
    sequence $(u_*\pm\epsilon_n)_{n \geq 1}$ tends to $u_*$. 
    Therefore, if $(\ell_n)_{n\geq 1}$ is converging to some limit $\ell_*$, 
    the sequence $(r_n)_{n\geq 1}$ is converging to the same limit.
    Then by the Squeeze Theorem, $(x_n)_{n\geq 1}$ is convergent, since 
    $(\ell_n)_{n\geq 1}$ and $(r_n)_{n \geq 1}$ are converging to the same 
    limit and the relation $\ell_n \leq x_n \leq r_n$ holds for every 
    $n \geq 1$.

    We next show that the sequence $(\ell_n)_{n\geq 1}$ is indeed convergent.
    In particular, we show that there exists a $k$ such that the sequence 
    $\ell_{n \geq k}$ is monotone. Then considering that $\alpha$ is bounded, 
    we conclude that the sequence $\ell_{n \geq k}$ is convergent by the 
    Monotone Convergence Theorem.

    Consider the sequence $(u_*-\epsilon_n)_{n \geq 1}$. 
    We have that $(\epsilon_n)_{n \geq 1}$ is non-increasing, then 
    $(u_*-\epsilon_n)_{n \geq 1}$ is either non-decreasing or 
    is non-increasing.
    We consider the case when $(u_*-\epsilon_n)_{n \geq 1}$ is non-decreasing. 
    Then the case when it is non-increasing follows by symmetry.

    Let us denote by $u^*_n = u_*-\epsilon_n$ and 
    let us define the following quantities
    \begin{align*}
        \delta_{u^*_n}  = u^*_{n+1} - u^*_n\text{ and }
        \delta_{\ell_n} = \ell_{n} - \ell_{n-1}. 
    \end{align*}
    Two cases are possible, either 
    $w \cdot \delta_{\ell_n} + \delta_{u^*_n} < 0$
    for every $n \geq 1$, or there exists $n_0$ such that 
    $w \cdot \delta_{\ell_{n_0}} + \delta_{u^*_{n_0}} \geq 0$.

    Consider the case where $w \cdot \delta_{\ell_n} + \delta_{u^*_n} < 0$
    holds for every $n \geq 1$. We show that the sequence 
    $(\ell_n)_{n \geq 1}$ is non-increasing. 
    Since sequence 
    $(u^*_n)_{n \geq 1}$ is non-decreasing, then for every
    $n \geq 0$ it holds that 
    $\delta_{u^*_n}  = u^*_{n+1} - u^*_n \geq 0$. 
    Then, in order to satisfy the inequality
    $w \cdot \delta_{\ell_{n}} + \delta_{u^*_{n}} < 0$ it must hold that
    $\delta_{\ell_n} = \ell_{n} - \ell_{n-1} < 0$.
    This implies that $\ell_{n-1} \geq \ell_n$ holds for every $n \geq 1$
    and the sequence 
    $(\ell_n)_{n \geq 1}$ is non-increasing and thus monotone.

    We now consider the case where there exists an $n_0$, such that 
    $w \cdot \delta_{\ell_{n_0}} + \delta_{u^*_{n_0}} \geq 0$.
    First, we show by induction that the inequality 
    $w \cdot \delta_{\ell_{n}} + \delta_{u^*_{n}} \geq 0$ holds
    for every $n \geq n_0$.

    In the base case, $n = n_0$ and the inequality holds. 
    In the inductive step let us assume that inequality
    $w \cdot \delta_{\ell_{n}} + \delta_{u^*_{n}} \geq 0$ holds.
    We have that
    \begin{align*}
        w \cdot \ell_{n-1} + w \cdot \delta_{\ell_n}
        &= 
        w \cdot \ell_n
    \\
        w \cdot \ell_{n-1} + w \cdot \delta_{\ell_n} 
        + u^*_n + \delta_{u^*_n}
        &= 
        w \cdot \ell_n + u^*_{n+1}
    \\
        w \cdot \ell_{n-1} + u^*_n 
        &\leq
        w \cdot \ell_n + u^*_{n+1},
    \end{align*}
    where the last inequality holds by the assumption.
    Considering that $\alpha$ is monotone, we have that
    \begin{align*}
        \alpha(w \cdot \ell_{n-1} + u^*_n)
        &\leq
        \alpha(w \cdot \ell_n + u^*_{n+1})
    \\
        \ell_{n} 
        &\leq
        \ell_{n+1}.
    \end{align*}
    It follows that $\delta_{\ell_{n+1}} = \ell_{n+1} - \ell_n \geq 0$. 
    Furthermore, since $u^*_{n+1} \geq 0$ it follows that 
    $w \cdot \delta_{\ell_{n+1}} + \delta_{u^*_{n+1}} \geq 0$  holds. 
    The claim is proved.
    Then the fact that $(\ell_{n})_{n \geq n_0}$ is non-decreasing is the 
    direct consequence of the fact that the inequality 
    $w \cdot \delta_{\ell_{n}} + \delta_{u^*_{n}} \geq 0$  holds for 
    every $n \geq n_0$, thus $(\ell_{n})_{n \geq n_0}$ is monotone.
  The lemma is proved.
\end{proof}

We now prove Lemma~\ref{lemma:groupfree}.

\lemmagroupfree*
\begin{proof}
  Consider a semiautomaton $A = \langle \Sigma, Q, \delta \rangle$ that is not
  group-free.
  Consider cascade dynamics $D = \langle U, X, f \rangle$ of neurons are
  described in the statement of the lemma. 
  Let $\beta_j$ be the input function of the $j$-th neuron of $D$.
  Let $X = X_1 \times \dots \times X_d$.
  Let us assume by contradiction that $A$ is homomorphically represented by $D$
  with homomorphism $\psi$.
  Then, by Lemma~\ref{lemma:permutation_alternating_states}, 
  then there exist $x_0 \in X$ and $u \in U$ such that,
  for $x_i = f(x_{i-1},u)$ for $i \geq 1$, 
  the disequality
  $\psi(x_i) \neq \psi(x_{i+1})$ holds for every $i \geq 0$.
  In particular, the sequence $(\psi(x_i))_{i \geq 0}$ is not convergent.

  We show that $\psi$ is not continuous by showing that the sequence 
  $(x_i)_{i \geq 0}$ is convergent. This is a contradiction that proves the
  lemma.
  Note that state $x_i$ is of the form $\langle x_i^1, \dots, x_i^d \rangle$.
  By Lemma~\ref{lemma:convergence_of_iterated_sign} and
  Lemma~\ref{lemma:convergence_of_iterated_activation}, we have that
  the sequence $(x_i^1)_{i \geq 0}$ is convergent since
  $\beta_1(u)$ is constant and hence convergent.
  By induction,
  $(x_i^1)_{i \geq 0}, \dots, (x_i^{j-1})_{i \geq 0}$ are convergent,
  and hence the sequence 
  $$\big(\beta_j(u, x_i^1, \dots, x_i^{j-1})\big)_{i \geq 0}$$ is
  convergent since $\beta_j$ is continuous, and $u$ is constant.
  Thus, again by Lemma~\ref{lemma:convergence_of_iterated_sign} and
  Lemma~\ref{lemma:convergence_of_iterated_activation},
  we have that the sequence $(x_i^j)_{i \geq 0}$ is convergent.
  The lemma is proved.
\end{proof}

%

\subsubsection{Proof of Theorem~\ref{theorem:main_1_2}}

\theoremmainonetwo*
\begin{proof}
  The canonical automaton $A$ of $F$ is not group-free.
  By Lemma~\ref{lemma:groupfree}, automaton $A$ is not homomorphically
  represented by any RNC of sign or tanh neurons with positive weight.
  By Proposition~\ref{prop:homomorphism_converse}, there is no RNC of
  sign or tanh neurons with positive weight that is equivalent to $A$, and hence
  that implements $F$.
\end{proof}

\subsubsection{Proof of Theorem~\ref{theorem:expressivity_of_rnc_with_sign_tanh_positive_weight}}

\expressivityofrncwithsignandtanhwithposweights*
\begin{proof}
By Theorem~\ref{theorem:expressivity_of_rnc}, 
every group-free regular function can be implemented by RNCs of sign or tanh 
neurons with positive weight. By
Theorem~\ref{theorem:main_1_2}, no regular function that is not group-free
can be implemented by RNCs of sign or tanh 
neurons with positive weight. 
\end{proof}

\section{Proofs for Section `Necessary Conditions for Group-freeness'}

\subsection{Proof of Theorem~\ref{theorem:expressivity_of_rnn}}

\theoremexpressivityofrnn*
\begin{proof}
  As the proof of Theorem~\ref{theorem:expressivity_of_rnc}, except that
  cascades are replaced by networks, and Krohn-Rhodes
  Theorem~\ref{theorem:krohn_rhodes} is replaced by the
  Theorem~\ref{theorem:letichevsky}.
\end{proof}

\subsection{Proof of Theorem~\ref{theorem:expressivity_toggle}}

The proof of Theorem~\ref{theorem:expressivity_toggle} requires some 
preliminary definitions and propositions. We state and prove them below. 
Then we prove Theorem~\ref{theorem:expressivity_toggle}.

In the main body, we have considered neurons with positive weights.
In particular, we established conditions under which sign and tanh neurons can
implement a read functionality, along with set and reset functionalities.
Here we show that, when neurons have a negative weight, the inputs that before
induced a read functionality now induce a toggle functionality. 

We introduce a family of semiautomata, and corresponding neurons, that implement
set, reset, and a non-identity permutation transformation that we call `toggle'.
\begin{definition} \label{def:toggle_semiautomaton}
  Let $\mathit{toggle}$ be a distinguished symbol.
  The \emph{core toggle semiautomaton} is the semiautomaton 
  $\langle \Pi, Q, \delta \rangle$ where the inputs are
  $\Pi = \{ \mathit{set}, \mathit{reset}, \mathit{toggle} \}$,
  the states are
  $Q = \{ \mathit{high}, \mathit{low} \}$,
  and the following identities hold:
  \begin{align*}
      \delta(\mathit{toggle},\mathit{low}) = \mathit{high},
      \quad
      \delta(\mathit{toggle},\mathit{high}) = \mathit{low},
      \quad
      \delta(\mathit{set},q) = \mathit{high},
      \quad
      \delta(\mathit{reset},q) = \mathit{low}.
  \end{align*}
  A \emph{toggle semiautomaton} is the composition of an input function with the core
  toggle semiautomaton.
\end{definition}

\begin{definition}
  A \emph{core toggle neuron} is a core neuron $\langle V, X, f \rangle$ 
  where the set $V$ of inputs is expressed as the union of three disjoint 
  closed
  intervals $\vset,\vreset,\vtoggle$ of non-zero length, the set $X$ of states
  is expressed as the union of two disjoint closed intervals $\xlow,\xhigh$, and
  the following conditions hold:
  \begin{align*}
      f(\xlow,\vtoggle) \subseteq \xhigh,
      \;
      f(\xhigh,\vtoggle) \subseteq \xlow,
      \;
      f(X,\vset) \subseteq \xhigh,
      \;
      f(X,\vreset) \subseteq \xlow.
  \end{align*}
  A \emph{toggle neuron} is the composition of a core toggle neuron with an
  input function.
  The \emph{state interpretation} of a toggle neuron
  is the function $\psi$ defined as 
  $\psi(x) = \mathit{high}$ for $x \in \xhigh$ and
  $\psi(x) = \mathit{low}$ for $x \in \xlow$.
\end{definition}

The relationship between toggle neurons and toggle semiautomata can be
established in the same way it has been established in the case of flip-flops.
\begin{lemma}
  \label{lemma:toggle-homomorphism}
  Every toggle semiautomaton is homomorphically represented by a toggle neuron
  with arbitrary core.
  The homomorphism is given by the state interpretation of the neuron.
\end{lemma}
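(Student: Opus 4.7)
The plan is to mirror the proof of Lemma~\ref{lemma:flipflop_semiaton_to_flipflop_neuorn} nearly verbatim, since the only structural difference between a flip-flop and a toggle semiautomaton is that the read input of a flip-flop induces the identity, whereas the toggle input induces a non-identity permutation. The inclusion conditions defining a core toggle neuron are precisely the neural counterpart of this change, so the same homomorphism argument will go through with a modified case analysis.

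Concretely, let $A = \langle \Sigma, Q, \delta_\phi \rangle$ be a toggle semiautomaton with input function $\phi : \Sigma \to \Pi$, where $\Pi = \{\mathit{set},\mathit{reset},\mathit{toggle}\}$, and let $N = \langle V, X, f \rangle$ be any core toggle neuron. First I would let $\epsilon$ be the minimum radius of the intervals $\vset, \vreset, \vtoggle$, pick the midpoints $v_\mathit{set}, v_\mathit{reset}, v_\mathit{toggle}$, and define $\xi : \Pi \to V$ sending each symbol to the corresponding midpoint. Then I would form $\beta = \xi \circ \phi \circ \lambda_\Sigma$, which is continuous because $\lambda_\Sigma$ is continuous and $\phi, \xi$ have finite domain; by the Universal Approximation Theorem and compactness of $U$, I would replace $\beta$ by an $\epsilon$-approximation $\beta'$ implemented by a feedforward network, observing that $\phi(\lambda_\Sigma(u)) = \pi$ implies $\beta'(u) \in V_\pi$ for each $\pi \in \Pi$.

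Then, exactly as in the flip-flop proof, I would argue that the state interpretation $\psi$ is continuous: since $\xlow$ and $\xhigh$ are disjoint closed intervals there is a positive minimum distance $\delta$ between them, so any $x,x_0$ with $d_X(x,x_0) < \delta$ must lie in the same component and hence satisfy $\psi(x)=\psi(x_0)$. The central task is then to verify, for arbitrary $x \in X$ and $u \in U$,
\begin{align*}
  \psi(f(x, \beta'(u))) = \delta_\phi(\psi(x), \lambda_\Sigma(u)).
\end{align*}
I would split into three cases according to $\phi(\lambda_\Sigma(u))$. The $\mathit{set}$ and $\mathit{reset}$ cases are identical to the flip-flop argument: the inclusions $f(X,\vset) \subseteq \xhigh$ and $f(X,\vreset) \subseteq \xlow$ immediately give the required identity.

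The only new case is $\mathit{toggle}$, and this is where the argument differs from the flip-flop version. Here I would further split on the value of $\psi(x)$. If $\psi(x) = \mathit{low}$, then $x \in \xlow$ and $\beta'(u) \in \vtoggle$, so by Definition~\ref{def:toggle_semiautomaton} and the inclusion $f(\xlow,\vtoggle) \subseteq \xhigh$ we get $\psi(f(x,\beta'(u))) = \mathit{high} = \delta(\mathit{low},\mathit{toggle})$. Symmetrically, if $\psi(x) = \mathit{high}$, then $f(\xhigh,\vtoggle) \subseteq \xlow$ gives $\psi(f(x,\beta'(u))) = \mathit{low} = \delta(\mathit{high},\mathit{toggle})$. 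The main (mild) obstacle is just recognizing that the toggle case requires a second case split on the current state rather than a uniform argument as for set/reset; once this is handled, surjectivity of $\psi$ follows from the fact that both $\xlow$ and $\xhigh$ are nonempty, and the lemma is established.
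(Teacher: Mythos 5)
Your proposal matches the paper's own proof essentially verbatim: the same construction of the input function via midpoints and the Universal Approximation Theorem, the same continuity argument for the state interpretation using the positive gap between the disjoint closed intervals $\xlow$ and $\xhigh$, and the same three-way case analysis in which only the $\mathit{toggle}$ case requires an additional split on the current state. The argument is correct as stated.
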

\begin{proof}
  Let us consider a toggle semiautomaton 
  $A = \langle \Sigma, Q, \delta_\phi \rangle$, where $\phi : \Sigma \to \Pi$ is
  its input function, where $\Pi = \{
    \mathit{toggle},\mathit{set},\mathit{reset}
  \}$.
  Let $N = \langle V, X, f \rangle$ be any core toggle neuron.
  Let $\epsilon$ be the minimum radius the intervals $\vtoggle,\vset,\vreset$ of
  $N$.
  Note that $\epsilon > 0$ by definition.
  Let
  $v_\mathrm{toggle}$,
  $v_\mathrm{set}$, and 
  $v_\mathrm{reset}$
  be the midpoint of 
  $\vtoggle,\vset,\vreset$, respectively.
  Note that
  $v_\mathrm{toggle} \pm \epsilon \in \vtoggle$,
  $v_\mathrm{set} \pm \epsilon \in \vset$, and
  $v_\mathrm{reset} \pm \epsilon \in \vreset$.
  Let $\xi: \Pi \to \{ v_\mathrm{toggle},v_\mathrm{set},v_\mathrm{reset}, \}$ be
  defined as 
  \begin{align*}
    \xi(\pi) = 
    \begin{cases}
      v_\mathrm{reset} & \text{ if } \pi = \mathit{reset}
      \\
      v_\mathrm{toggle} & \text{ if } \pi = \mathit{toggle}
      \\
      v_\mathrm{set} & \text{ if } \pi = \mathit{set}
    \end{cases}
  \end{align*}
  Consider $\beta = \lambda_\Sigma \circ \phi \circ \xi$.
  Note that $\beta$ is a continuous function since $\lambda_\Sigma$ is continuous and 
  $\phi,\xi$ are continuous because they are functions over finite discrete
  domains.
  Since $\beta$ is continuous and $U$ is compact by assumption,
  by the universal approximation theorem for feedforward neural networks (cf.\
  Theorem~2 of
  \cite{hornik1991approximation}), there exists an $\epsilon$-approximation
  $\beta'$ of $\beta$.
  Note that $\phi(\lambda_\Sigma(u)) = \mathit{set}$ implies $\beta'(u) \in \vset$,
  $\phi(\lambda_\Sigma(u)) = \mathit{reset}$ implies $\beta'(u) \in \vreset$,
  $\phi(\lambda_\Sigma(u)) = \mathit{toggle}$ implies $\beta'(u) \in \vtoggle$.

  Consider the toggle neuron $D = \langle U, X, f_{\beta'} \rangle$ obtained
  by composing $\beta'$ and $N$.

  First, we have that its state interpretation 
  $\psi : X \to Q$ with $Q = \{ \mathit{high}, \mathit{low} \}$ is a continuous
  function.
  To show this, it suffices to show that, for every $x_0 \in X$, and for every
  positive real number $\epsilon > 0$, there exists a positive real number 
  $\delta > 0$ such that for all $x \in X$ satisfying $d_X(x,x_0) < \delta$ also
  satisfy $d_Q(\psi(x),\psi(x_0)) < \epsilon$.
  The state space $X$ consists of two disjoint closed intervals 
  $X_\mathrm{high}$ and $X_\mathrm{low}$.
  Let $\delta$ be the minimum distance between elements in the two intervals,
  i.e.,
  \begin{align*}
    \delta = \min\left\{ d_X(y,z) \mid y \in X_\mathrm{high}\text{, } z \in
    X_\mathrm{low}\right\}.
  \end{align*}
  Such a minimum distance exists because 
  $X_\mathrm{high}$ and $X_\mathrm{low}$ are
  disjoint closed intervals.
  Now assume $d_X(x,x_0) < \delta$. It follows that 
  $\{ x,x_0 \} \subseteq X_\mathrm{high}$ or 
  $\{ x,x_0 \} \subseteq X_\mathrm{low}$.
  In both cases, $\psi(x) = \psi(x_0)$ and hence 
  $d_Q(\psi(x),\psi(x_0)) = 0 < \epsilon$ for every $\epsilon > 0$, as required.

  Second, for
  arbitrary $x \in X$ and $u \in U$, we show
  \begin{align*}
    \psi(f(x,u)) = \delta_\phi(\psi(x),\lambda_\Sigma(u)).
  \end{align*}
  Three cases are possible according to the value of $\phi(\lambda_\Sigma(u))$.
  \begin{itemize}
    \item 
      In the first case $\phi(\lambda_\Sigma(u)) = \mathit{reset}$. 
      We have that
      \begin{align*}
        \delta_\phi(\psi(x),\lambda_\Sigma(u))
        = \delta(\psi(x), \phi(\lambda_\Sigma(u))) 
        = \delta(\psi(x), \mathit{reset})= \mathit{low}.
      \end{align*}
      Thus, it suffices to show $f(x,u) \in \xlow$.
      Since $\phi(\lambda_\Sigma(u)) = \mathit{reset}$,
      we have that
      $\beta'(u) \in \vreset$ as noted above, and hence the required condition
      holds by the definition of toggle neuron. 

    \item
      In the second case $\phi(\lambda_\Sigma(u)) = \mathit{toggle}$. 
      We have that 
      \begin{align*}
        \delta_\phi(\psi(x),\lambda_\Sigma(u)) 
        = \delta(\psi(x), \phi(\lambda_\Sigma(u)))
        = \delta(\psi(x), \mathit{toggle}),
      \end{align*}
      which is $\mathit{high}$ if $\psi(x) = \mathit{low}$ and
      $\mathit{low}$ if $\psi(x) = \mathit{high}$.
      Thus, it suffices to show the two implications
      \begin{align*}
        x \in \xlow \quad & \Rightarrow \quad f(x,\lambda_\Sigma(u)) \in \xhigh,
      \\
        x \in \xhigh \quad & \Rightarrow \quad f(x,\lambda_\Sigma(u)) \in \xlow.
      \end{align*}
      Since $\phi(\lambda_\Sigma(u)) = \mathit{toggle}$,
      we have that $\beta'(u) \in \vtoggle$ as noted above, and hence the
      required implications hold by the definition of toggle neuron.

    \item
      In the third case $\phi(\lambda_\Sigma(u)) = \mathit{set}$. 
      We have that 
      \begin{align*}
        \delta_\phi(\psi(x), \lambda_\Sigma(u))
        = \delta(\psi(x), \phi(\lambda_\Sigma(u)))
        = \delta(\psi(x), \mathit{set}) = \mathit{high}.
      \end{align*}
      Thus, it suffices to show $f(x,\lambda_\Sigma(u)) \in \xhigh$.
      Since $\phi(\lambda_\Sigma(u)) = \mathit{set}$,
      we have that $\beta'(u) \in \vset$ as noted above, and hence the required
      condition holds by the definition of toggle neuron. 
  \end{itemize}
  The lemma is proved.
\end{proof}

Toggle neurons can be instantiated with neurons having sign or tanh activation,
and negative weight. The logic is similar to the one for flip-flop neurons,
except that now read inputs are replaced by toggle inputs, that determine a
change of sign in the state.

\begin{proposition}
\label{lemma:sign_with_negative_weight_transitions}
  A core neuron with sign activation and weight $w < 0$ 
  is a core toggle neuron if its state partition is 
  $\xlow = \{-1\}$ and $\xhigh = \{+1\}$ and
  its input partition satisfies: 
  \begin{align*}
    \vreset \leq w \cdot (a + 1), 
    \qquad 
    w \cdot (1 - a) \leq \vtoggle \leq w \cdot (a - 1),
    \qquad
    \vset \geq w \cdot (-a - 1),
  \end{align*}
  for some real number $a \in (0,1)$.
\end{proposition}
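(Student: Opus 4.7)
The plan is to follow the same template as the proof of Proposition~\ref{prop:sign_transitions}, but adapted to the case of negative weight, with the \textit{read} condition replaced by the \textit{toggle} condition. The proof reduces to (i) verifying the four inclusion conditions from the definition of a core toggle neuron, and (ii) checking that the three input intervals are closed, pairwise disjoint, and of non-zero length. Throughout, one must be careful that multiplying an inequality by $w < 0$ flips its direction.

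For (i), I would evaluate $\sign(w \cdot x + v)$ case by case. The two toggle inclusions are the main new cases compared to Proposition~\ref{prop:sign_transitions}. If $x = -1 \in \xlow$ and $v \in \vtoggle$, then $w \cdot x + v = -w + v \geq -w + w(1-a) = -w a > 0$, since $w < 0$ and $a > 0$, giving $\sign(-w+v) = +1 \in \xhigh$ as required. Symmetrically, if $x = +1 \in \xhigh$ and $v \in \vtoggle$, then $w \cdot x + v = w + v \leq w + w(a-1) = w a < 0$, so $\sign(w+v) = -1 \in \xlow$. For $v \in \vset$, the worst case is $x = +1$ (since $w < 0$), and $w + v \geq w + w(-a-1) = -wa > 0$; for $v \in \vreset$, the worst case is $x = -1$, and $-w + v \leq -w + w(a+1) = wa < 0$. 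These four verifications establish the inclusions.

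For (ii), $\vset = [w(-a-1), +\infty)$ and $\vreset = (-\infty, w(a+1)]$ are closed half-lines, and $\vtoggle = [w(1-a), w(a-1)]$ is a closed bounded interval; the latter has non-zero length because $w < 0$ and $0 < a < 1$ give $w(1-a) < 0 < w(a-1)$. Disjointness follows by multiplying $a + 1 > 1 - a$ (equivalently $2a > 0$) by $w < 0$, which yields both $w(a+1) < w(1-a)$ and $w(a-1) < w(-a-1)$, separating $\vreset$ from $\vtoggle$ and $\vtoggle$ from $\vset$, respectively.

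There is no genuine obstacle in this proof; the only care needed is consistent sign bookkeeping when $w$ is negative. The conceptual content is that, compared with the positive-weight flip-flop case where the read interval sits around zero and preserves the sign of $x$, here negative weight reverses the sign of $w \cdot x$, so the same ``centered'' input interval now induces a toggle rather than a read, producing a non-identity permutation transformation on $\{\xlow, \xhigh\}$.
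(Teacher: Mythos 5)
Your proof is correct and follows essentially the same route as the paper's: a case-by-case verification of the four inclusion conditions using the worst-case state value together with the relevant interval bound (with the sign flip from $w<0$ handled correctly), followed by the closedness, non-zero length, and disjointness checks on the input intervals. The only cosmetic difference is that the paper defers the interval bookkeeping to symmetry with Proposition~\ref{prop:sign_transitions}, whereas you spell it out explicitly.
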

\begin{proof}
    We show that a neuron with states 
    $X = \xlow \cup \xhigh$, inputs $V = \vset \cup \vreset \cup \vread$, 
    weight $w < 0$ and sign activation function satisfies the conditions
    of the definition of a core toggle neuron. 
    First we show that conditions 
    \begin{align*}
      f(\xlow,\vtoggle) \subseteq \xhigh,
      \;
      f(\xhigh,\vtoggle) \subseteq \xlow,
      \;
      f(X,\vset) \subseteq \xhigh,
      \;
      f(X,\vreset) \subseteq \xlow
    \end{align*}
    are satisfied by
    \begin{align*}
        f(x,v) = \sign(w \cdot x + v),
    \end{align*}
    with $w < 0$, $x \in X$, and $v \in V$.

    Condition $f(\xlow,\vtoggle) \subseteq \xhigh$ is satisfied.
    By the premise of the proposition for $x \in \xlow$ and $v \in \vtoggle$ 
    it holds that 
    $x = -1$ and $v \geq w \cdot (1 - a)$. 
    Furthermore, $w \cdot (-a) > 0$ since $w < 0$ and $a > 0$.
    Thus,
    \begin{align*}
        w \cdot -1 + v 
        = |w| + v 
        \geq |w| + w \cdot (1 - a)
        = |w| + w + w \cdot (-a)
        = w \cdot (-a)
        > 0
    \end{align*}
    and hence
    \begin{align*}
        \sign(w \cdot -1 + v) = +1 \in \xhigh.
    \end{align*}

    Condition $f(\xhigh,\vtoggle) \subseteq \xlow$ is satisfied.
    By the premise of the proposition for $x \in \xhigh$ and $v \in \vtoggle$
    it holds that 
    $x = +1$ and $v \leq w \cdot (a - 1)$. 
    Furthermore, $w \cdot a < 0$ since $w < 0$ and $a > 0$. 
    Thus,
    \begin{align*}
        w \cdot +1 + v 
        = w + v 
        \leq w + w \cdot (a - 1)
        = w + w \cdot a + (-w)
        = w \cdot a
        < 0
    \end{align*}
    and hence
    \begin{align*}
        \sign(w \cdot +1 + v) = -1 \in \xlow.
    \end{align*}

    Condition $f(X,\vset) \subseteq \xhigh$ is satisfied.
    By the premise of the proposition for $x \in X$ and $v \in \vset$
    it holds that 
    $x \geq -1$ and $v \geq w \cdot (-a - 1)$.
    Furthermore, $w \cdot (-a) > 0$ since $w < 0$ and $a > 0$.
    Thus, 
    \begin{align*}
        w \cdot x + v 
        \geq w + v 
        \geq w + w \cdot (-a - 1)
        = w + w \cdot (-a) + (-w)
        = w \cdot (-a) 
        > 0
    \end{align*}
    and hence
    \begin{align*}
        \sign(w \cdot x + v) = +1 \in \xhigh.
    \end{align*}

    Condition $f(X,\vreset) \subseteq \xlow$ is satisfied.
    By the premise of the proposition for $x \in X$ and $v \in \vreset$
    it holds that 
    $x \leq +1$ and $v \leq w \cdot (a + 1)$.
    Furthermore, $w \cdot a < 0$ since $w < 0$ and $a > 0$.
    Thus, 
    \begin{align*}
        w \cdot x + v 
        \leq |w| + v  
        \leq |w| + w \cdot (a + 1)
        = |w| + w \cdot a + w
        = w \cdot a
        < 0
    \end{align*}
    and hence
    \begin{align*}
        \sign(w \cdot x + v) = -1 \in \xlow.
    \end{align*}

    It is left to show that the set $X$ of states and the set $V$ of inputs 
    satisfy the conditions of the definition of a core toggle neuron.
    The argument is symmetric to the one given in the proof of the 
    Proposition~\ref{prop:sign_transitions}.
\end{proof}

\begin{proposition}
\label{prop:tanh_with_negative_weight}
    Let $w < -1$, and let $f(x) = \tanh(w \cdot x)$.
    A core neuron with tanh activation and weight $w$ is a core toggle neuron if 
    its state partition is
    \begin{align*}
        \xlow = [-1, f(b)],
        \qquad
        \xhigh = [f(a), +1],
     \end{align*}
     for some $a < b$ satisfying $a + f(a) > b + f(b)$, 
     and its input partition satisfies
      \begin{align*}
      \vreset \leq w \cdot (b + 1),
      \qquad
      w \cdot (a - f(b))
      \leq \vtoggle \leq 
      w \cdot (b - f(a)),
      \qquad
      \vset \geq w \cdot (a - 1).
      \end{align*}
\end{proposition}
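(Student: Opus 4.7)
The plan is to follow the template of Proposition~\ref{prop:tanh_transitions}, adapting it to the fact that $w < 0$ makes $f(x) = \tanh(w \cdot x)$ monotone \emph{decreasing}, so $a < b$ yields $f(a) > f(b)$ and the state partition $\xlow = [-1, f(b)]$, $\xhigh = [f(a), +1]$ is meaningful (disjoint, with $\xhigh$ strictly above $\xlow$). The dynamics $f(x,v) = \tanh(w \cdot x + v)$ is composed of tanh (increasing) and an affine map with negative slope in $x$, so I will systematically track how bounds on $x$ flip direction when multiplied by $w < 0$.

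The four inclusion conditions are verified by the same recipe used in Proposition~\ref{prop:tanh_transitions}. For $f(\xlow,\vtoggle) \subseteq \xhigh$: since $x \leq f(b)$ and $w<0$ give $w \cdot x \geq w \cdot f(b)$, combining with $v \geq w \cdot (a - f(b))$ yields $w \cdot x + v \geq w \cdot a$, and by monotonicity of $\tanh$, $\tanh(w \cdot x + v) \geq f(a)$. For $f(\xhigh,\vtoggle) \subseteq \xlow$: from $x \geq f(a)$ we get $w \cdot x \leq w \cdot f(a)$, combined with $v \leq w \cdot (b - f(a))$ gives $w \cdot x + v \leq w \cdot b$, so $\tanh(w \cdot x + v) \leq f(b)$. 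The set/reset conditions follow identically, using $x \in [-1,+1]$ to bound $w \cdot x \in [w,-w]$ and combining with the endpoints of $\vset,\vreset$.

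Next I will verify the structural requirements: that $\xlow,\xhigh$ are disjoint closed intervals and that $\vreset,\vtoggle,\vset$ are disjoint closed intervals of non-zero length. Disjointness of the state intervals follows from $f(b) < f(a)$ (tanh is increasing, $w<0$, $a<b$). For the input intervals, $\vtoggle$ has non-zero length iff $w \cdot (a - f(b)) < w \cdot (b - f(a))$, which, since $w<0$, is equivalent to $a + f(a) > b + f(b)$ (this is exactly the hypothesis, with strict inequality ensuring positive length). Disjointness of $\vreset$ from $\vtoggle$ and of $\vtoggle$ from $\vset$ reduces, after dividing through by $w<0$, to $b - a + 1 + f(b) > 0$ and $b - a + 1 - f(a) > 0$, both of which hold since $b > a$ and $|f(a)|,|f(b)| \leq 1$.

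The main delicate point, as in Proposition~\ref{prop:tanh_transitions}, will be showing that real numbers $a < b$ satisfying $a + f(a) > b + f(b)$ actually exist when $w < -1$. I would set $g(x) = x + \tanh(w \cdot x)$ and compute $g'(x) = 1 + w \cdot \bigl(1 - \tanh^2(w \cdot x)\bigr)$; at $x = 0$ this is $1 + w < 0$ by the assumption $w < -1$, so $g$ is strictly decreasing in a neighbourhood of the origin, and any $a < b$ inside that neighbourhood will satisfy $g(a) > g(b)$. This is the only nontrivial estimate in the proof; the rest is bookkeeping that mirrors the positive-weight case, with signs inverted wherever $w$ appears as a factor.
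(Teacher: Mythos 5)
Your proposal is correct and follows essentially the same route as the paper's proof: the same four interval-inclusion computations (tracking the sign flip from $w<0$), the same reduction of the non-degeneracy of $\vtoggle$ to the hypothesis $a+f(a)>b+f(b)$, and the same existence argument via $g(x)=x+\tanh(w\cdot x)$ with $g'(0)=1+w<0$. Your explicit form of the derivative, $g'(x)=1+w\bigl(1-\tanh^2(w\cdot x)\bigr)$, is in fact cleaner than the paper's notation for the same quantity.
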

\begin{proof}
    We show that a neuron with states $X = \xlow \cup \xhigh$, inputs
  $V = \vset \cup \vreset \cup \vread$, weight $w < -1$ and tanh
  activation function satisfies the conditions of the definition of a core
  toggle neuron. First, we show that conditions
  \begin{align*}
      f(X,\vset) \subseteq \xhigh,
      \;
      f(X,\vreset) \subseteq \xlow,
      \;
      f(\xhigh,\vread) \subseteq \xhigh,
      \;
      f(\xlow,\vread) \subseteq \xlow,
  \end{align*}
  are satisfied by 
  \begin{align*}
    f(x,v) = \tanh(w \cdot x + v),
  \end{align*}
  with weight $w < -1$, $x \in X$ and $v \in V$.

    Condition $f(\xlow,\vtoggle) \subseteq \xhigh$ is satisfied.
    By the premise of the proposition for 
    $x \in \xlow$ and $v \in \vtoggle$ 
    it holds that 
    $x \leq f(b)$ and $v \geq w \cdot (a - f(b))$. Thus,
    \begin{align*}
        w \cdot x + v 
        \geq w \cdot f(b) + v
        \geq w \cdot f(b) + w \cdot (a -f(b))
        = w \cdot f(b) + w \cdot a + w \cdot (-f(b))
        = w \cdot a
    \end{align*}
    Since tanh is monotonic it holds that
    $\tanh(w \cdot x + v) \geq \tanh(w \cdot a) = f(a)$, and hence
    \begin{align*}
        \tanh(w \cdot x + v) \in \xhigh.
    \end{align*}

    Condition $f(\xhigh,\vtoggle) \subseteq \xlow$ is satisfied.
    By the premise of the proposition for 
    $x \in \xhigh$ and $v \in \vtoggle$
    it holds that 
    $x \geq f(a)$ and $v \leq w \cdot (b - f(a))$. 
    Thus,
    \begin{align*}
        w \cdot x + v 
        \leq w \cdot f(a) + v 
        \leq w \cdot f(a) + w \cdot (b - f(a))
        = w \cdot f(a) + w \cdot b + w \cdot (-f(a))
        = w \cdot b
    \end{align*}
    Since tanh is monotonic it holds that
    $\tanh(w \cdot x + v) \leq \tanh(w \cdot b) = f(b)$, and hence
    \begin{align*}
        \tanh(w \cdot x + v) \in \xlow.
    \end{align*}

    Condition $f(X,\vset) \subseteq \xhigh$ is satisfied.
    By the premise of the proposition for 
    $x \in X$ and $v \in \vset$
    it holds that 
    $x \leq +1$ and $v \geq w \cdot (a - 1)$.
    Thus, 
    \begin{align*}
        w \cdot x + v 
        \geq w + v
        \geq w + w \cdot (a - 1)
        = w + w \cdot a + (-w)
        = w \cdot a
    \end{align*}
    Since tanh is monotonic it holds that
    $\tanh(w \cdot x + v) \geq \tanh(w \cdot a) = f(a)$, and hence
    \begin{align*}
        \tanh(w \cdot x + v) \in \xhigh.
    \end{align*}

    Condition $f(X,\vreset) \subseteq \xlow$ is satisfied.
    By the premise of the proposition for $x \in X$ and $v \in \vreset$
    it holds that 
    $x \geq -1$ and $v \leq w \cdot (b + 1)$.
    Thus, 
    \begin{align*}
        w \cdot x + v 
        \leq |w| + v
        \leq |w| + w \cdot (b + 1)
        = |w| + w \cdot b + w
        = w \cdot b.
    \end{align*}
    Since tanh is monotonic it holds that
    $\tanh(w \cdot x + v) \leq \tanh(w \cdot b) = f(b)$, and hence
    \begin{align*}
        \tanh(w \cdot x + v) \in \xlow.
    \end{align*}

  Finally, we show that the set $X$ of states and the set $V$ of inputs 
  satisfy the conditions of the definition of a core toggle neuron.

  The set $X$ is the union of two disjoint closed intervals. Intervals
  $\xlow$ and $\xhigh$ are bounded on the both sides, hence they are closed
  by definition.
  By the premise of the proposition $a < b$ and $w < -1$ and thus, 
  $w \cdot a > w \cdot b$. Since tanh is monotonic, we have that 
  $f(a) > f(b)$ and therefore, $\xlow$ and $\xhigh$ are disjoint.

  The set $V$ is the union of three disjoint closed intervals of non-zero 
  length. 
  By the premise of the proposition  
  the interval 
  $\vset$ is bounded only on the left by a number $w \cdot (a - 1)$, 
  the interval 
  $\vreset$ is bounded only on the right by a number $w \cdot (b + 1)$,
  and the interval $\vtoggle$ is bounded on both sides, hence
  all three intervals are closed by definition.
  Intervals $\vreset$ and $\vset$ are trivially of non-zero length, since
  $w \cdot (a - 1) < +\infty$ and $w \cdot (b + 1) > -\infty$.
  To show that interval $\vtoggle$ has non-zero length it suffices to show 
  that numbers $a < b$ such that $a + f(a) > b + f(b)$ exist. 
  Then the fact that $\vtoggle$ has non-zero length follows trivially by
  the following inequality
  \begin{align*}
    b + f(b) &< a + f(a),
\\
    -b -f(b) &> -a -f(a), 
\\
    a - f(b) &> b - f(a), 
\\
    w \cdot (a - f(b)) &< w \cdot (b - f(a)).
  \end{align*}
  Consider derivative $\tanh'(w \cdot x)$ for $w < -1$. 
  Its shape is an upside-down bell reaching minimum value of $w$ at $x = 0$, 
  with tails approaching zero for values of $x$ going towards $\pm\infty$
  Let $g(x)  = x + \tanh(w \cdot x)$. 
  It is derivative is $g'(x) = 1 + \tanh'(w \cdot x)$. 
  Then $g'(x) = 1 + w < 0$ for $x = 0$ since $w < -1$, and
  $g'(x)$ is approaching 1 for values of $x$ going towards $\pm\infty$ 
  because $\tanh'(w \cdot x)$ is approaching zero.
  Then, there exist two points $-p < 0 < p$ where $g'(x)$ crosses the 
  $x$-axis and thus $g'(x) < 0$ for $-p < x < p$. Thus, function
  $g(x)$ is decreasing between $-p$ and $p$. 
  Therefore, two numbers $a,b$ can be found such that 
  $a < b$ and $a + f(a) > b + f(b)$.

  The intervals $\vreset, \vtoggle, \vset$ are disjoint.
  By the property of tanh we have that $f(a), f(b) < 1$ and by the premise of 
  the proposition $a < b$. Then, the following inequalities hold
  \begin{align*}
      w \cdot b + w  &< w \cdot a - w \cdot f(b),
    \\
      w \cdot b - w \cdot f(a) &< w \cdot a - w.
  \end{align*}
  The proposition is proved.
\end{proof}
We highlight two differences between the tanh toggle and the tanh flip-flop.
First, the constants $a$ and $b$ occur swapped in the state boundaries.
Second, they occur mixed in the boundaries of the toggle input interval.
They are both consequences of $w$ negative, in particular of the fact it
introduces antimonotonicity in the dynamics function.
Toggle neurons such as the ones above allow for going beyond group-free regular
functions.
In particular, we can show a toggle neuron that recognises the language
consisting of all strings of odd length, which is regular but not group-free. 

We now prove Theorem~\ref{theorem:expressivity_toggle}.

\theoremexpressivitytoggle*
\begin{proof}
  Let us consider the automaton acceptor
  $A = \langle \Sigma, Q, \delta, q^\mathrm{init}, \Gamma, \theta \rangle$ 
  with singleton input alphabet $\Sigma = \{ a \}$,
  semiautomaton $D$ given by the composition of the toggle semiautomaton
  (Definition~\ref{def:toggle_semiautomaton}) with the input function 
  $\phi(a) = \mathit{toggle}$, initial state $q^\mathrm{init} = \mathit{low}$,
  output alphabet is $\Gamma = \{ 0,1 \}$,
  and output function is defined as 
  \begin{align*}
    \theta(\mathit{low}, a) & = 1,
    \\
    \theta(\mathit{high}, a) & = 0.
  \end{align*}
  Note that the automaton accepts the language of strings $aa \cdots a$
  having odd length.

  We argue that the semiautomaton $D$ of $A$ is not group-free. 
  It has only one transformation $\tau$, induced by $a$, which is
  $\tau(\mathit{low}) = \mathit{high}$ and $\tau(\mathit{high}) = \mathit{low}$.
  Note that $\tau$ is a permutation.
  It generates the transformation semigroup $G = \{ \tau, e \}$ where $e$ is
  identity, and it is generated as $\tau \circ \tau$---in fact, 
  $\tau(\tau(\mathit{low})) = \mathit{low}$ and $\tau(\tau(\mathit{high})) =
  \mathit{high}$.
  Thus, $G$ is the characteristic semigroup of $D$, and it is in particular a
  group. This shows that $D$ is not group-free. Formally, it is not group-free
  because $G$ is divided by $G$, which is a non-trivial group. 

  Then, by Lemma~\ref{lemma:toggle-homomorphism}, we have that $D$ is
  homomorphically represented by a toggle neuron with arbitrary core.
  By Propositions~\ref{lemma:sign_with_negative_weight_transitions}
  and~\ref{prop:tanh_with_negative_weight}, it follows that 
  $D$ is homomorphically represented by a neuron $N$ with sign or tanh
  activation (and negative weight).
  The rest of the proof uses the argument already used in
  Theorem~\ref{theorem:expressivity_of_rnc}.

  By Proposition~\ref{prop:homomorphism},
  there is an RNC $S$ with dynamics $N$ that implements the same function as 
  the system $A_{\lambda_\Sigma}$, obtained as the composition of $A$ with the
  fixed symbol grounding $\lambda_{\Sigma}$.
  The output function of $S$ can be $\epsilon$-approximated by a feedforward
  neural network $h$, by the universal approximation theorem, cf.\
  \cite{hornik1991approximation}.
  Thus, the sought RNC $S'$ is obtained from $S$ by replacing its output
  function with the feedforward neural network $h$.
  Since the output symbol grounding $\lambda_\Gamma$ is $\epsilon$-robust,
  the system $S'_{\lambda_\Gamma}$ obtained by the composition of $S'$ with the
  output symbol grounding $\lambda_\Gamma$ is equivalent to
  $A_{\lambda_\Sigma}$. This proves the theorem.
  %
  %
\end{proof}

\section{Proofs for Section `Implementation of Group Neurons'}

\subsection{Proof of Proposition~\ref{prop:ctwo_sign}}
We prove Proposition~\ref{prop:ctwo_sign}.

\propctwosign*

\begin{proof}
We show that a neuron described in the proposition satisfies the conditions
in the definition of a core $C_2$-neuron. First, we show that conditions
\begin{align*}
f(X_0, V_0) \subseteq X_0, \;
f(X_0, V_1) \subseteq X_1, \;
f(X_1, V_0) \subseteq X_1, \;
f(X_1, V_1) \subseteq X_0, \;
\end{align*}
are satisfied by
\begin{align*}
f(x,v) = \sign(w \cdot x \cdot v),
\end{align*}
for every $x \in X, v \in V,$ and $a, w > 0$ or $a, w < 0$. 
For all derivations below we note that $f(a) = +1$ for both
$a,w <0$ and $a, w > 0$. 

Condition $f(X_0, V_0) \subseteq X_0$ is satisfied. 
By the premise of the proposition, for $x \in X_0, v \in V_0$ it holds that
$x = -1$ 
and $v \geq a$ for $a, w > 0$ 
and $v \leq a$ for $a, w < 0$. 
In either case, we have that
\begin{align*}
    w \cdot x \cdot v 
    =
    w \cdot -1 \cdot v 
    < 0
\end{align*}
and hence
\begin{align*}
\sign(w \cdot x \cdot v) = -1 \in X_0.
\end{align*}
Condition $f(X_0, V_1) \subseteq X_1$ is satisfied. 
By the premise of the proposition, for $x \in X_0, v \in V_1$ it holds that 
$x  = -1$
and $v \leq -a$ for $a, w > 0$ 
and $v \geq -a$ for $a, w < 0$. 
In either case, we have that
\begin{align*}
    w \cdot x \cdot v 
    = 
    w \cdot -1 \cdot v 
    > 0
\end{align*}
and hence
\begin{align*}
\sign(w \cdot x \cdot v) = +1 \in X_1.
\end{align*}
Condition $f(X_1, V_0) \subseteq X_1$ is satisfied. 
By the premise of the proposition, for $x \in X_1, v \in V_0$ it holds that 
$x = +1$
and $v \geq a$ for $a, w > 0$ 
and $v \leq a$ for $a, w < 0$. 
In either case, we have that
\begin{align*}
    w \cdot x \cdot v 
    =
    w \cdot +1 \cdot v 
    > 0
\end{align*}
and hence
\begin{align*}
\sign(w \cdot x \cdot v) = +1 \in X_1.
\end{align*}
Condition $f(X_1, V_1) \subseteq X_0$ is satisfied. 
By the premise of the proposition, for $x \in X_1, v \in V_1$ it holds that 
$x = +1$
and $v \leq -a$ for $a, w > 0$ 
and $v \geq -a$ for $a, w < 0$. 
In either case, we have that
\begin{align*}
    w \cdot x \cdot v 
    =
    w \cdot +1 \cdot v 
    < 0
\end{align*}
and hence

\begin{align*}
    \sign(w \cdot x \cdot v) = -1 \in X_0.
\end{align*}
Finally, we show that the set $X$ of states and the set $V$ of inputs satisfy
the conditions in the definition of the core $C_2$-neuron.
The set $X$ is the union of two disjoint closed intervals. 
This condition is satisfied by definition, since
$X_0 = \{-1\}$ and $X_1 = \{+1\}$.

The set $V$ is the union of two disjoint closed intervals of non-zero length. 
By 
the premise of the proposition the interval $V_0$ is either bounded only on the 
right by a number $a <0$ or it is bounded only on the left by a number
$a > 0$. The interval $V_1$ is either bounded only on the 
right by a number $-a <0$ or it is bounded only on the left by a number 
$a > 0$. Hence, both of the intervals are closed and are
of non-zero length by definition. 
Furthermore, by the premise of the proposition the number $a$ is 
either greater than zero or it is less than zero, thus two intervals 
are are disjoint.
The proposition is proved.
\end{proof}

\subsection{Proof of Proposition~\ref{prop:ctwo_tanh}}

We prove Proposition~\ref{prop:ctwo_tanh}.

\propctwotanh*

\begin{proof}
We show that a neuron described in the proposition satisfies the conditions
in the definition of a core $C_2$-neuron. First, we show that conditions
\begin{align*}
f(X_0, V_0) \subseteq X_0, \;
f(X_0, V_1) \subseteq X_1, \;
f(X_1, V_0) \subseteq X_1, \;
f(X_1, V_1) \subseteq X_0, \;
\end{align*}
are satisfied by
\begin{align*}
f(x,v) = \tanh(w \cdot x \cdot v),
\end{align*}
for every $x \in X, v \in V,$ and $a, w > 0$ or $a, w < 0$. 
For all derivations below we note that $f(a) > 0$ for both
$a,w <0$ and $a, w > 0$. 

Condition $f(X_0, V_0) \subseteq X_0$ is satisfied. 
By the premise of the proposition, for $x \in X_0, v \in V_0$ it holds that
$x \leq -f(a)$ 
and $v \geq a/f(a)$ for $a, w > 0$ and $v \leq a/f(a)$ for 
$a, w < 0$. In either case, we have that
\begin{align*}
    w \cdot x \cdot v 
    \leq 
    w \cdot -f(a) \cdot v
    \leq 
    w \cdot -f(a) \cdot \frac{a}{f(a)} 
    = 
    w \cdot -a.
\end{align*}
Since $\tanh$ is monotonic it holds that 
$\tanh(w \cdot x \cdot v) \leq \tanh(w \cdot -a) = -f(a)$,
and hence
\begin{align*}
\tanh(w \cdot x \cdot v) \in X_0.
\end{align*}
Condition $f(X_0, V_1) \subseteq X_1$ is satisfied. 
By the premise of the proposition, for $x \in X_0, v \in V_1$ it holds that 
$x \leq -f(a)$
and $v \leq -a/f(a)$ for $a, w > 0$ 
and $v \geq -a/f(a)$ for $a, w < 0$. 
In either case, we have that
\begin{align*}
w \cdot x \cdot v 
\geq
w \cdot -f(a) \cdot v
\geq
w \cdot -f(a) \cdot -\frac{a}{f(a)}
=
w \cdot a.
\end{align*}
Since $\tanh$ is monotonic it holds that 
$\tanh(w \cdot x \cdot v) \geq \tanh(w \cdot a) = f(a)$, and hence
\begin{align*}
\tanh(w \cdot x \cdot v) \in X_1.
\end{align*}
Condition $f(X_1, V_0) \subseteq X_1$ is satisfied. 
By the premise of the proposition, for $x \in X_1, v \in V_0$ it holds that 
$x \geq f(a)$
and $v \geq a/f(a)$ for $a, w > 0$ 
and $v \leq a/f(a)$ for $a, w < 0$. 
In either case, we have that
\begin{align*}
w \cdot x \cdot v 
\geq
w \cdot f(a) \cdot v
\geq
w \cdot f(a) \cdot \frac{a}{f(a)}
=
w \cdot a.
\end{align*}
Since $\tanh$ is monotonic it holds that 
$\tanh(w \cdot x \cdot v) \geq \tanh(w \cdot a) = f(a)$, and hence
\begin{align*}
\tanh(w \cdot x \cdot v) \in X_1.
\end{align*}
Condition $f(X_1, V_1) \subseteq X_0$ is satisfied. 
By the premise of the proposition, for $x \in X_1, v \in V_1$ it holds that 
$x \geq f(a)$
and $v \leq -a/f(a)$ for $a, w > 0$ 
and $v \geq -a/f(a)$ for $a, w < 0$. 
In either case, we have that
\begin{align*}
w \cdot x \cdot v 
\leq
w \cdot f(a) \cdot v
\leq
w \cdot f(a) \cdot -\frac{a}{f(a)}
=
w \cdot -a.
\end{align*}
Since $\tanh$ is monotonic it holds that 
$\tanh(w \cdot x \cdot v) \leq \tanh(w \cdot -a) = -f(a)$, and hence
\begin{align*}
\tanh(w \cdot x \cdot v) \in X_0.
\end{align*}
Finally, we show that the set $X$ of states and the set $V$ of inputs satisfy
the conditions of the definition of a core $C_2$-neuron. The set $X$ is the
union of two disjoint closed intervals. 
Intervals $X_0$ and $X_1$ are closed by definition, since their bounded by 
a number on both sides.
Furthermore, for $a,w > 0$ and $a,w < 0$ it holds that
$\tanh(w \cdot a) > 0$. 
Thus, $-f(a)=-\tanh(w \cdot a) < \tanh(w \cdot a) = f(a)$, hence the 
intervals are disjoint.

The set $V$ is the union of disjoint closed intervals of non-zero length. 
By 
the premise of the proposition the interval $V_0$ is either bounded only on 
the left by a number $a/f(a)$ with $a > 0$ or it is bounded only on the right
by a number $a/f(a)$ with $a < 0$. 
The interval $V_1$ is either bounded only on the right by a number 
$-a/f(a)$ with $a > 0$ or it is bounded only on the left by a number
$-a/f(a)$ with $a < 0$. Thus, by definition both intervals are closed and 
have non-zero length.
Furthermore, since $f(a) > 0$ for both $a,w > 0$ and $a,w < 0$ it holds
that $V_0$ and $V_1$ are disjoint.
The proposition is proved.
\end{proof}


\section{Examples}
In this section we provide two examples of a star-free (group-free) regular
language and a group-free regular function from two different applications:
Stock market analysis and Reinforcement learning.
We also construct cascades of flip-flops capturing them, which in particular
proves they are group-free.

\subsection{Example from Stock Market Analysis}

\begin{example}
  \label{example:stocks}
  \emph{Charting patterns} are a commonly-used tool in the analysis of 
  financial
  data \citep{jpmorgan}.
  The charting technique may be operationalized through \emph{trading rules} of
  the form \citep{leigh2002stock}:
    ``If charting pattern $X$ is identified in the previous $N$ trading days,
    then buy; and sell on the $Y$th trading day after that. If charting pattern
    $X$ is identified in the previous $N$ trading days, then sell.''
  Many charting patterns correspond to \emph{star-free regular languages}.
  An example is the \emph{Broadening Tops} (BTOP) pattern that is
  characterised by a sequence of five consecutive local extrema $E_1, \dots,
  E_5$ such that
  $E_1$ is a maximum, $E_1 < E_3 < E_5$, and $E_2 > E_4$ (see
  Definition~2 of \citep{lo2000foundations}).
\end{example}

\begin{figure}[h]
    \centering
    \includegraphics[width=0.8\textwidth]{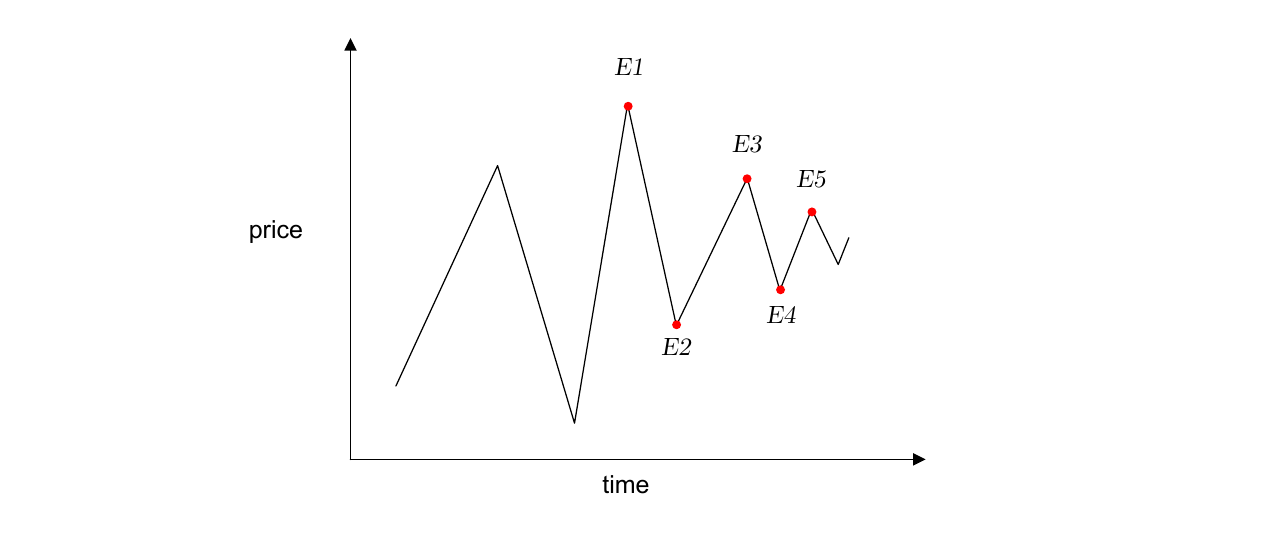}
    \caption{A price chart matching the TTOP pattern.}
    \label{fig:stocks-ttop}
\end{figure}

The TTOP pattern is shown in Figure~\ref{fig:stocks-ttop}. 
We next show that there is cascade of flip-flops that recognises TTOP,
thus this pattern is star-free,  
by the converse of Theorem~\ref{theorem:krohn_rhodes}.

We first provide the pseudocode of an algorithm for detecting TTOP.
Then we show how this algorithm can be implemented by a cascade of flip-flops.
The algorithm is higher-level than the cascade; at the same time it has been
devised to ensure an easy translation to cascade.

\subsubsection{Algorithm to Detect TTOP}

\begin{algorithm}[h]
  \SetKw{Or}{or}%
  \SetKw{And}{and}%
  \SetKw{Yield}{yield}%
  \SetKwFor{Loop}{loop}{}{}%
  \SetKwFor{Until}{until}{do}{}%
  \SetKwInput{Input}{Input}
  \SetKwInput{OInput}{Optional}
  \SetKwInput{Output}{Output}
  \SetKwInput{Parameters}{Parameters}
  \SetKwProg{Fn}{function}{}{}

  \BlankLine
  $\mathit{lastMax} \leftarrow \mathit{maxPossibleStockPrice}+1$ \tcp*{greater
  than max stock price}
  $\mathit{lastMin} \leftarrow \mathit{minPossibleStockPrice}-1$ \tcp*{smaller
  than min stock price}
  $\mathit{count}_0 \leftarrow \mathit{true}$\;
  $\mathit{count}_1,\mathit{count}_2,\mathit{count}_3,\mathit{count}_4,\mathit{count}_5 \leftarrow \mathit{false}$\;
  $\mathit{slope} \leftarrow \mathit{positive}$\;
  $\mathit{prev} \leftarrow$ read the next element from the input sequence of
  stock prices\;

  \BlankLine
  \Loop{}{
    $\mathit{cur} \leftarrow$ read the next element from the input sequence of
  stock prices\;

  \BlankLine

    \tcp{check if prev is an extremum and if so add it} 

    \If{$\mathit{slope} = \mathit{positive}$ \emph{\textbf{and}} $\mathit{prev}
      > \mathit{cur}$}{
      \tcp{prev is a local maximum}
      \If{$\mathit{prev} < \mathit{lastMax}$}{
        \tcp{found new valid extremum}
        $\mathit{lastMax} \leftarrow \mathit{prev}$\;
        \textit{IncrementCount()}\;
      }
      \Else{
        \tcp{the new maximum violates the previous one}
        \tcp{the new maximum is now a valid initial extremum}
        \textit{SetCountToOne()}\;
      }
    }
    \If{$\mathit{slope} = \mathit{negative}$ \emph{\textbf{and}} $\mathit{prev}
      < \mathit{cur}$}{
      \tcp{prev is a local minimum}
      \If{$\mathit{prev} > \mathit{lastMin}$}{
        \tcp{found new valid extremum}
        $\mathit{lastMin} \leftarrow \mathit{prev}$\;
        \textit{IncrementCount()}\;
      }
      \Else{
        \tcp{the new minimum violates the previous one}
        \tcp{the last maximum is now a valid initial extremum}
        \tcp{and the new minimum is now a valid second extremum}
        \textit{SetCountToTwo()}\;
      }
    }

    \BlankLine

    \tcp{update slope} 
    \If{$\mathit{prev} < \mathit{cur}$}{
      $\mathit{slope} \leftarrow \mathit{positive}$\;
    }
    \ElseIf{$\mathit{prev} > \mathit{cur}$}{
      $\mathit{slope} \leftarrow \mathit{negative}$\;
    }

    \BlankLine

    \tcp{output whether the sequence so far ends with a TTOP}
    \If{$\mathit{count}_5$}{
      output $1$\;
    }
    \Else{
      output $0$\;
    }

    \BlankLine

    $\mathit{prev} \leftarrow \mathit{cur}$\;
  }
  \caption{Detect TTOP}
  \label{algorithm:ttop}
\end{algorithm}

\begin{algorithm}[t]
  \SetKw{Or}{or}%
  \SetKw{And}{and}%
  \SetKw{Yield}{yield}%
  \SetKwFor{Loop}{loop}{}{}%
  \SetKwFor{Until}{until}{do}{}%
  \SetKwInput{Input}{Input}
  \SetKwInput{OInput}{Optional}
  \SetKwInput{Output}{Output}
  \SetKwInput{Parameters}{Parameters}
  \SetKwProg{Fn}{Function}{ :}{}

      \BlankLine

  \Fn{IncrementCount()}{
  \If{$\mathit{count}_0$}{
    $\mathit{count}_0 \leftarrow \mathit{false}$\;
    $\mathit{count}_1 \leftarrow \mathit{true}$\;
        }
        \If{$\mathit{count}_1$}{
          $\mathit{count}_1 \leftarrow \mathit{false}$\;
          $\mathit{count}_2 \leftarrow \mathit{true}$\;
        }
        \If{$\mathit{count}_2$}{
          $\mathit{count}_2 \leftarrow \mathit{false}$\;
          $\mathit{count}_3 \leftarrow \mathit{true}$\;
        }
        \If{$\mathit{count}_3$}{
          $\mathit{count}_3 \leftarrow \mathit{false}$\;
          $\mathit{count}_4 \leftarrow \mathit{true}$\;
        }
        \If{$\mathit{count}_4$}{
          $\mathit{count}_4 \leftarrow \mathit{false}$\;
          $\mathit{count}_5 \leftarrow \mathit{true}$\;
        }
      }

      \BlankLine

  \Fn{SetCountToOne()}{
  $\mathit{count}_0,\mathit{count}_2,\mathit{count}_3,\mathit{count}_4,
  \mathit{count}_5 \leftarrow \mathit{false}$\;
  $\mathit{count}_1 \leftarrow \mathit{true}$\;
  }

      \BlankLine

  \Fn{SetCountToTwo()}{
  $\mathit{count}_0,\mathit{count}_1,\mathit{count}_3,\mathit{count}_4,
  \mathit{count}_5 \leftarrow \mathit{false}$\;
  $\mathit{count}_2 \leftarrow \mathit{true}$\;
  }
  \caption{Count Subroutines}
  \label{algorithm:count_subroutines}
\end{algorithm}

We describe the algorithm for detecting TTOP, whose the pseudocode is given in 
Algorithm~\ref{algorithm:ttop}---at the end of the appendix.

The algorithm looks for five consecutive local extrema $E_1,E_2,E_3,E_4,E_5$
satisfying the required conditions.
In particular, it is required that $E_1$ is a local maximum. It follows that
$E_2$ is a local minimum, $E_3$ is a local maximum, $E_4$ is a local minimum,
and $E_5$ is a local maximum.
The argument is that we cannot have two local extrema of the same kind in a row.
For instance, if a local maximum $E_1$ was followed by another local maximum
$E_2$, then it would contradict the fact that $E_1$ was a local maximum.
Please see Figure~\ref{fig:stocks-ttop}.

Next we describe Algorithm~\ref{algorithm:ttop}.

\paragraph{Variables.}
The variable $\mathit{lastMax}$ defined in Line~1 is for keeping track of the
last maximum stock price that can be considered as a maximum in a TTOP pattern.
Similarly, the variable $\mathit{lastMin}$ defined in Line~2 is for keeping
track of the last minimum stock price that can be considered as a minimum of a
TTOP pattern.
They are initialised with boundary values, so that they will be
replaced at the first comparison with an actual stock price.
Lines~3 and~4 define five Boolean variables 
$\mathit{count}_0,\mathit{count}_1,\mathit{count}_2,\mathit{count}_3,
\mathit{count}_4,\mathit{count}_5$.  
They represent a count. Initially, $\mathit{count}_0$ is true and the others are
false. This represents a count of zero.
The count stands for the number of valid consecutive extrema we have found. When
the count is $5$, it means that we have found $E_1,E_2,E_3,E_4,E_5$, and hence
the input sequence received so far matches the TTOP pattern. 
The variable $\mathit{slope}$ defined in Line~5 stores whether the slope of the
last stretch of the chart is positive or negative.
It is initialised to positive because this way the first stock price we
encounter before a price drop will be correctly considered a local maximum.

\paragraph{The main loop.}
At each iteration the algorithm will operate over the current stock price stored
in variable $\mathit{cur}$ and the previous stock price stored in a variable
called $\mathit{prev}$.
Before entering the main loop, Line~6 reads and stores the first stock price in
the input sequence. This ensures that $\mathit{prev}$ is available right from
the first iteration.
Every iteration of the main loop starts by reading the next stock price from the
input sequence (Line~8).

\paragraph{Encountering a new local maximum.}
Lines~9--14 describe the operations to perform upon detecting a new local
maximum.
In particular, we have that $\mathit{prev}$ is a new local maximum if current
slope is positive (up to $\mathit{prev}$) and the current value
$\mathit{cur}$ is smaller than $\mathit{prev}$ (Line~9).
Then we have two possibilities.
If $\mathit{prev}$ is smaller than the last maximum $\mathit{lastMax}$, then
$\mathit{prev}$ is a valid extremum to be included in the TTOP pattern detected
so far (Line~10). In this case, it becomes the new $\mathit{lastMax}$ (Line~11),
and we increment the count of extrema matched so far (Line~12).
Note that the IncrementCount subroutine---described below---increments up to
$5$.
In the other case, we have that 
$\mathit{prev}$ is bigger than the last maximum $\mathit{lastMax}$. Then 
$\mathit{prev}$ invalidates the sequence of consecutive extrema matched before
encountering $\mathit{prev}$. At the same time $\mathit{prev}$ can be the
initial maximum $E_1$ of a TTOP pattern to be matched from now on, so we set the
count to one (Line~14).

\paragraph{Encountering a new local minimum.}
Lines~15--20 describe the operations to perform upon detecting a new local
minimum. They are dual to the case of a maximum. The only difference is that,
when $\mathit{prev}$ invalidates the previous sequence of consecutive extrema
already matched, we have that $\mathit{lastMax}$ is still a valid initial
maximum $E_1$, and $\mathit{prev}$ is a valid minium $E_2$. Thus, we set the
count to two.
Note also that a local minimum can only be encountered after the first local
maximum.

\paragraph{Slope update.}
The variable $\mathit{slope}$ is initialised in Line~5 and it is updated at
every iteration in Lines~21--24.
Its initialisation and update ensure that, when the variable is read in Lines~9
and~15, its content specifies whether the slope of the chart right before
$\mathit{prev}$ is positive or negative.
If a plateau is encountered, the value of the slope is maintained.
This ensures that local extrema are detected correctly.

\paragraph{Count subroutines.}
The suboroutines to handle the count are described in  
Algorithm~\ref{algorithm:count_subroutines}---at the end of the appendix.
Regarding the subroutine to increment the count,
note that, when the count is $5$, no update happens, as it is enough for the
algorithm to know whether at least $5$ valid extrema have been matched.

\subsubsection{Cascade to Detect TTOP}

\begin{figure}[t]
    \centering
    \includegraphics[width=0.7\textwidth]{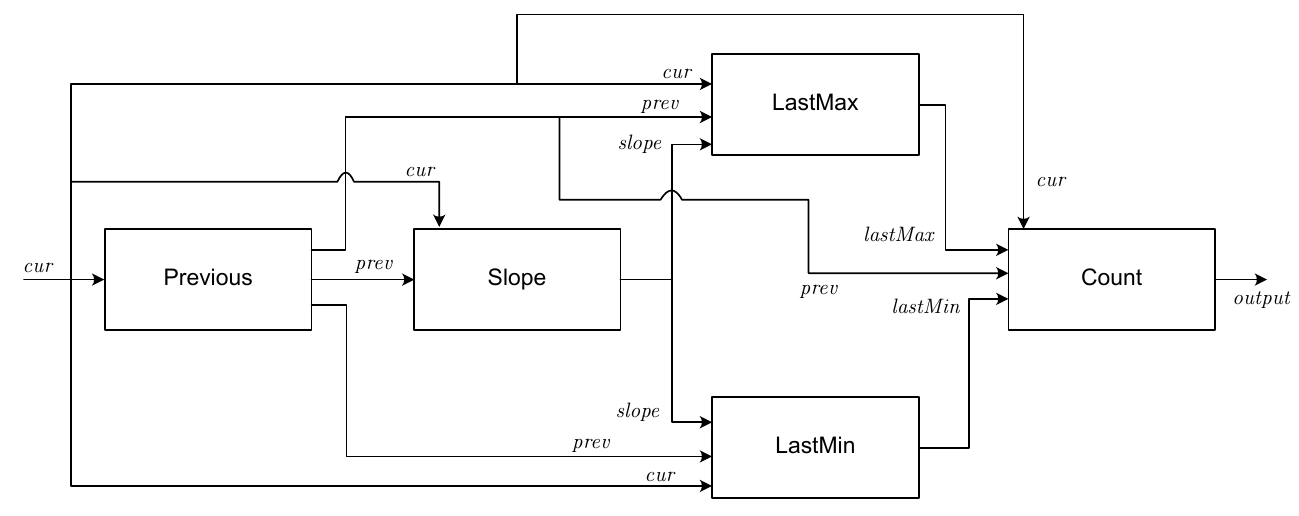}
    \caption{Diagram of the cascade for stock prediction.}
    \label{fig:stocks-overall-diagram}
\end{figure}

\begin{figure}[p!]
    \centering
    \includegraphics[width=0.7\textwidth]{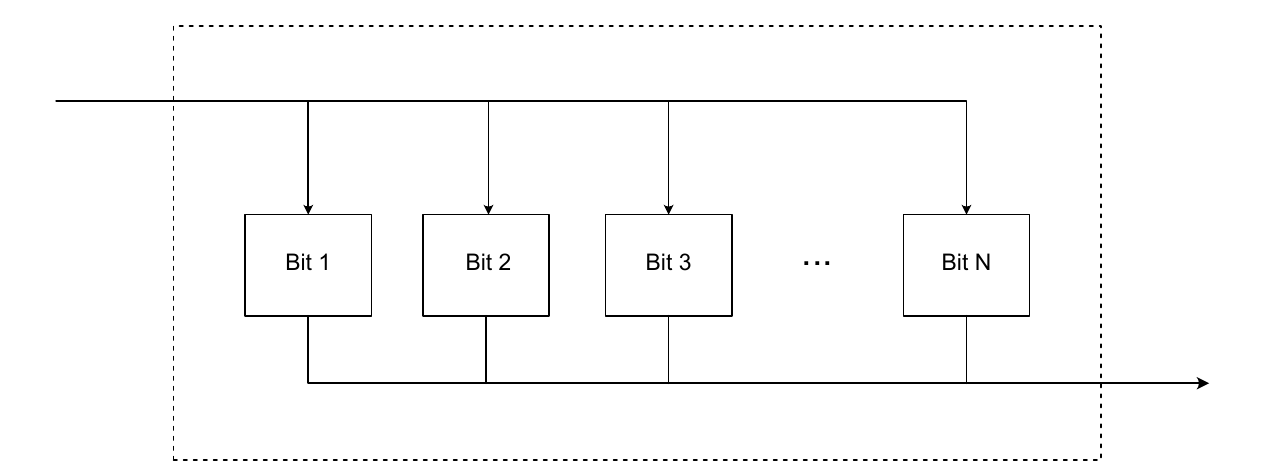}
    \caption{Diagram of Previous.}
    \label{fig:stocks-previous}
\end{figure}

\begin{figure}[p!]
    \centering
    \includegraphics[width=0.7\textwidth]{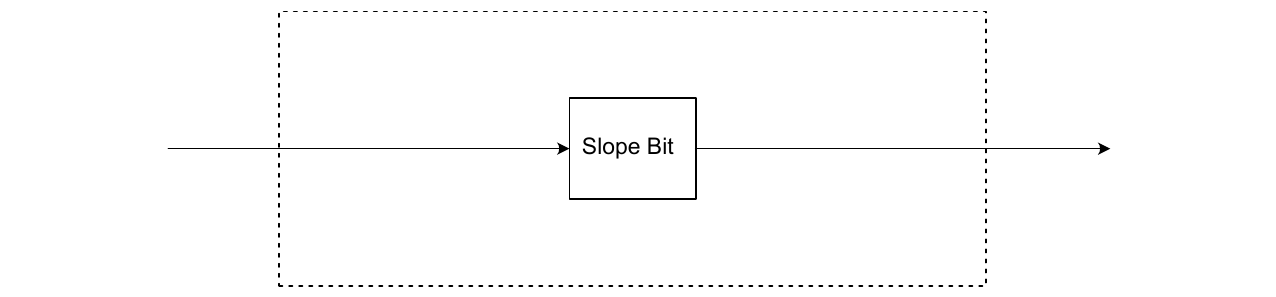}
    \caption{Diagram of Slope.}
    \label{fig:stocks-slope}
\end{figure}

\begin{figure}[p!]
    \centering
    \includegraphics[width=0.7\textwidth]{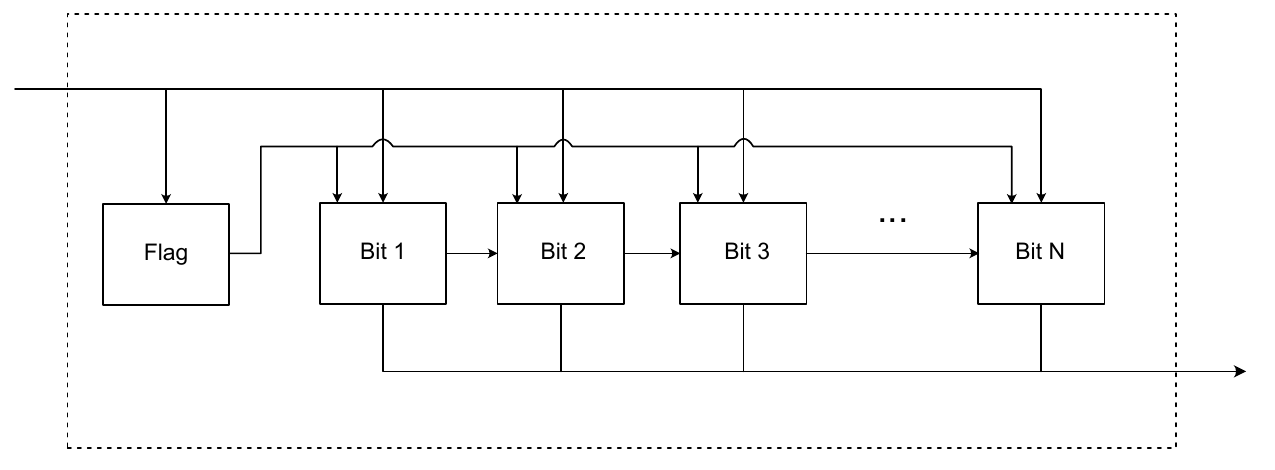}
    \caption{Diagram of LastMax and LastMin.}
    \label{fig:stocks-lastmax}
\end{figure}

\begin{figure}[p!]
    \centering
    \includegraphics[width=0.7\textwidth]{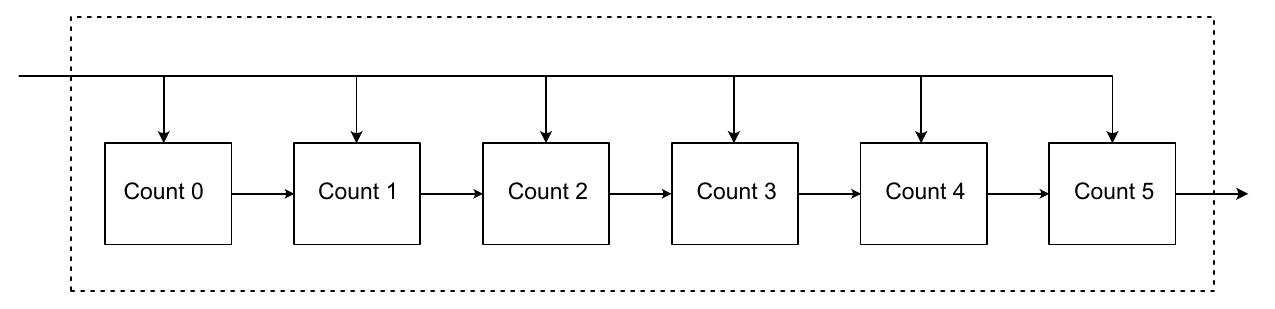}
    \caption{Diagram of Count.}
    \label{fig:stocks-count}
\end{figure}

Following the algorithm outlined above,
we construct a cascade to detect TTOP. Its diagram is shown in
Figure~\ref{fig:stocks-overall-diagram}.
The five boxes in the diagram correspond to five functionalities involved in
detecting TTOP, as described by Algorithm~\ref{algorithm:ttop}.
Each box is itself a cascade of one or more flip-flop semiautomata.
The leftmost arrow represents the current input $\mathit{cur}$.
The other arrows specify dependencies among components.
For instance, the state $\mathit{prev}$ of the Previous component is read by the
Slope, LastMax, and LastMin components.
The rightmost arrow is the output, which is a function of the state of the Count
component.
Next we describe the components one by one.
We will consider stock prices to be numbers that can be represented with $N$
bits.
In the following, let $\mathbb{B} = \{ 0,1 \}$. Then the current input
$\mathit{cur}$ is an element of $\mathbb{B}^N$.

\paragraph{Previous.}
The Previous component reads only the external input $\mathit{cur}$.
Intuitively, it has to store $\mathit{cur}$, so that it is available at the next
iteration.
It consists of $N$ flip-flop semiautomata. Its diagram is shown in
Figure~\ref{fig:stocks-previous}.
The flip-flop semiautomata are independent of each other. The $i$-th
flip-flop stores the $i$-th bit from the input.
Specifically, the $i$-th flip-flop semiautomaton (the Bit~$i$ component in the
diagram) is the flip-flop semiautomaton with input function $\phi_i$ defined as
follows.
Note that the input to the Previous component is 
$\mathit{cur} = \langle b_1, \dots, b_N \rangle \in \mathbb{B}^N$,
and component Bit~$i$ additionally receives the states 
$q_1, \dots, q_{i-1}$ of the preceding Bit components.
Component Bit~$i$ is a flip-flop semiautomaton with input function defined as
follows:
\begin{itemize}
  \item
    if $b_i = 1$, then return $\mathit{set}$,
  \item
    if $b_i = 0$, then return $\mathit{reset}$.
\end{itemize}
The initial state of every Bit component is $\mathit{low}$, so that the Previous
component initially stores stock price $0$.
This ensures that the first input is not deemed a local maximum.

\paragraph{Slope.}
The Slope component receives $\mathit{cur}$ and $\mathit{prev}$.
Its diagram is shown in Figure~\ref{fig:stocks-slope}.
Note that $\mathit{cur} \in \mathbb{B}^N$ and 
$\mathit{prev} \in \{ \mathit{low}, \mathit{high} \}^N$---and we can still see
it as an element of $\mathbb{B}^N$, seeing
$\mathit{low}$ as $0$ and $\mathit{high}$ as $1$.
The Slope component consists of one flip-flop semiautomaton, with input function
defined as follows:
\begin{itemize}
\item
if $\mathit{prev} > \mathit{cur}$, then return $\mathit{set}$,
\item
if $\mathit{prev} < \mathit{cur}$, then return $\mathit{reset}$ ,
\item
if $\mathit{prev} = \mathit{cur}$, then return $\mathit{read}$.
\end{itemize}
Its initial value is $\mathit{high}$, matching the initialisation specified in
the pseudocode.

\paragraph{LastMax.}
It receives $\mathit{cur}$, $\mathit{prev}$, $\mathit{slope}$.
It checks whether $\mathit{prev}$ is a new local maximum, and it stores it if it
is smaller than the currently stored maximum.
A flag is used to handle the special case of when no maximum has been stored
yet---thus implementing the logic of the algorithm, but in a slightly different
way; with a flag, instead of boundary values.

Note that $\mathit{cur} \in \mathbb{B}^N$,
$\mathit{prev} \in \{ \mathit{low}, \mathit{high} \}^N$,
$\mathit{slope} \in \{ \mathit{low}, \mathit{high} \}$---and for the slope, we
can think of $\mathit{low}$ as $\mathit{negative}$ and of $\mathit{high}$ as
$\mathit{positive}$.
Its diagram is shown in Figure~\ref{fig:stocks-lastmax}. Its components are
flip-flop semiautomata: Flag, Bit 1, ..., Bit N.
The input function of Flag is defined as follows:
\begin{itemize}
  \item
    if $\mathit{slope} = \mathit{positive}$ and $\mathit{prev} > \mathit{cur}$,
    then return $\mathit{set}$,
  \item
    else return $\mathit{read}$.
\end{itemize}
Note that Bit $i$ additionally receives $\mathit{flag} \in \{ \mathit{low},
\mathit{high} \}$ and the states $q_1, \dots, q_{i-1}$ of the previous bits.
The input function of Bit $i$ is defined as follows:
\begin{itemize}
  \item
    if $\mathit{slope} = \mathit{positive}$ and $\mathit{prev} > \mathit{cur}$,
    then

    {\footnotesize (This is the condition for checking when $\mathit{prev}$ is a
    local maximum.)}
    \begin{itemize}
      \item
        if $\mathit{flag} = \mathit{low}$, then
        \begin{itemize}
          \item
          if $\mathit{prev}_i = \mathit{high}$, then return $\mathit{set}$,
          \item
          if $\mathit{prev}_i = \mathit{low}$, then return $\mathit{reset}$,
        \end{itemize}

        {\footnotesize (It is the first time we encounter a $\mathit{prev}$ that is a valid
        local maximum. At this point, the number stored in the LastMax component
      is an arbitrary initialisation value. We overwrite it.)}
      \item
        otherwise, when $\mathit{flag} = \mathit{high}$,
        \begin{itemize}
          \item
              if $q_{1:i-1} = \mathit{prev}_{1:i-1}$, then

            {\footnotesize ($\mathit{prev}$ and the stored number are equal in
            their first $i-1$ most significant bits)}
            \begin{itemize}
              \item
                if $\mathit{prev}_i = 1$ then return $\mathit{read}$, 

            {\footnotesize (we keep the curently stored $i$-th bit,
            which amounts to taking the minimum since it is smaller than
            or equal to the $i$-th bit of $\mathit{prev}$)}
              \item
                if $\mathit{prev}_i = 0$ then return $\mathit{reset}$, 

            {\footnotesize (we write the $i$-th bit $0$
            of $\mathit{prev}$, which amounts to taking the minimum since it
            is smaller than or equal to the currently stored $i$-th bit)}
            \end{itemize}
          \item
            if 
            $q_{1:i-1} > \mathit{prev}_{1:i-1}$, then

            {\footnotesize ($\mathit{prev}$ is smaller than the stored number,
              so we overwrite the stored number)}
            \begin{itemize}
              \item
                if $\mathit{prev}_i = 1$, then return $\mathit{set}$, 
              \item
                if $\mathit{prev}_i = 0$, then return $\mathit{reset}$, 
            \end{itemize}
          \item
            if 
            $q_{1:i-1} < \mathit{prev}_{1:i-1}$ 
            then reutrn
            $\mathit{read}$,

            {\footnotesize (the stored number is smaller than $\mathit{prev}$,
            so we keep it)}
        \end{itemize}
    \end{itemize}
  \item
    otherwise return $\mathit{read}$.

    {\footnotesize ($\mathit{prev}$ is not a maximum, and hence we keep
      the currently stored maximum.)}
\end{itemize}
The Flag's state is initially set to $\mathit{low}$.
The initial state of the Bits does not matter, as it will be overwritten first.

\paragraph{LastMin.}
It is dual to LastMax.

\paragraph{Count.}
It receives $\mathit{cur}$, $\mathit{prev}$, $\mathit{slope}$,
$\mathit{lastMax}$, $\mathit{lastMin}$.
It checks whether a new extremum has been found, and it adjusts the count
accordingly. The count could increase or decrease, as described in
Algorithm~\ref{algorithm:ttop}.
Its diagram is shown in Figure~\ref{fig:stocks-count}.
It consits of six flip-flop semiautomata: Count 0, ..., Count 5.

Count $i$ receives $\mathit{cur}$, $\mathit{prev}$, $\mathit{lastMax}$,
$\mathit{lastMin}$ as well as the states $q_1, \dots, q_{i-1}$ of the previous
flip-flop semiautomata.  The input function of Count $i$ is defined as follows:
\begin{itemize}
  \item
    if $\mathit{slope} = \mathit{positive}$ and $\mathit{prev} > \mathit{cur}$
    then

    {\footnotesize ($\mathit{prev}$ is a new local maximum)}
    \begin{itemize}
      \item
        if $\mathit{prev} < \mathit{lastMax}$ then

        {\footnotesize ($\mathit{prev}$ can be part of the current sequence of
        extrema. We are going to increment the count, without exceeding $5$.)}

        \begin{itemize}
          \item
            if $q_{i-1} = \mathit{high}$ then return $\mathit{set}$
          \item
            if $q_{i-1} = \mathit{low}$ and $i = 5$, return $\mathit{read}$,
          \item
            if $q_{i-1} = \mathit{low}$ and $i \neq 5$, return $\mathit{reset}$,
        \end{itemize}
        \item
          otherwise, when $\mathit{prev} \geq \mathit{lastMax}$,

        {\footnotesize ($\mathit{prev}$ cannot be part of the current sequence of
        extrema. It is going to be the initial maximum, so we set the count to
      1.)}
      \begin{itemize}
        \item
          if $i = 1$, then return $\mathit{set}$,
        \item
          if $i \neq 1$, then return $\mathit{reset}$,
      \end{itemize}
    \end{itemize}
  \item
    if $\mathit{slope} = \mathit{negative}$ and $\mathit{prev} < \mathit{cur}$
    then

    {\footnotesize ($\mathit{prev}$ is a new local minimum)}
    \begin{itemize}
      \item
        if $\mathit{prev} > \mathit{lastMin}$ then

        {\footnotesize ($\mathit{prev}$ can be part of the current sequence of
        extrema. We are going to increment the count, without exceeding $5$.)}

        \begin{itemize}
          \item
            if $q_{i-1} = \mathit{high}$ then return $\mathit{set}$
          \item
            if $q_{i-1} = \mathit{low}$ and $i = 5$, return $\mathit{read}$,
          \item
            if $q_{i-1} = \mathit{low}$ and $i \neq 5$, return $\mathit{reset}$,
        \end{itemize}
        \item
          otherwise, when $\mathit{prev} \leq \mathit{lastMin}$,

        {\footnotesize ($\mathit{prev}$ cannot be part of the current sequence of
        extrema. It is going to be first minimum, so we set the count to
      2.)}
      \begin{itemize}
        \item
          if $i = 2$, then return $\mathit{set}$,
        \item
          if $i \neq 2$, then return $\mathit{reset}$.
      \end{itemize}
    \end{itemize}
  \item
    otherwise, return $\mathit{read}$.

    {\footnotesize ($\mathit{prev}$ is not a local extremum, we keep the
    current count.)}
\end{itemize}

The initial state encodes a count of zero.
Specifically, the initial state of Count 0 is $\mathit{high}$, and the others
have initial state $\mathit{low}$.

\paragraph{Output function of the automaton.}
To detect whether the current sequence matches the TTOP pattern,
it suffices that the output function returns $1$ when the state of Count 5 is
$\mathit{high}$ and $0$ otherwise.

\subsection{Example from Reinforcement Learning}

\begin{example}
  \label{example:reinforcement_learning}
  Reinforcement Learning agents learn functions of the history of past 
  observations that return the probability of the next observation and reward, 
  in order to act effectively. 
  In the Cookie Domain from \citep{toroicarte2019learning},
  an agent has to learn to collect cookies which appear upon
pushing a button. 
In particular, the \emph{cookie domain}, shown in Figure~\ref{fig:d_cookie}, has three rooms
connected by a hallway. The agent (purple triangle) can move in the four
cardinal directions. There is a button in the orange room that, when pressed,
causes a cookie to randomly appear in the green or blue room. The agent receives
a reward of $+1$ for reaching (and  thus eating) the cookie and may then go and
press the button again. Pressing the button before reaching a cookie will remove
the existing cookie and cause a new cookie to randomly appear. There is no
cookie at the beginning of the episode. This domain is partially observable
since the agent can only see what it is in the room that it currently occupies,
as shown in Figure~\ref{fig:d_view}.
Specifically, at each step the agent receives a set of Boolean properties
holding at that step. The available properties are:
\begin{align*}
\mathcal{P} = \{\mathit{cookie}, \mathit{cookieEaten}, \mathit{buttonPushed},
\mathit{greenRoom},\mathit{orangeRoom},\mathit{blueRoom}, \mathit{hallway} \}
\end{align*}
These properties are true in the following situations:
$\mathit{greenRoom}$, $\mathit{orangeRoom}$, $\mathit{blueRoom}$ are true if the
agent is in a room of that color; 
$\mathit{hallway}$ is true if the agent is in the hallway;
$\mathit{cookie}$ is true if the agent is in the same room as a cookie;
$\mathit{buttonPushed}$ is true if the agent pushed the button with its last
action; and 
$\mathit{cookieEaten}$ is true if the agent ate a cookie with its last action.

In this
domain, the function predicting the probability of the next observation and reward is a group-free
regular function. The optimal policy that the agent has to learn is a group-free regular function as
well.
\end{example}

\begin{figure}[t]
    \centering
    \begin{subfigure}[b]{.49\textwidth}
        \centering
        \includegraphics[width=0.5\textwidth]{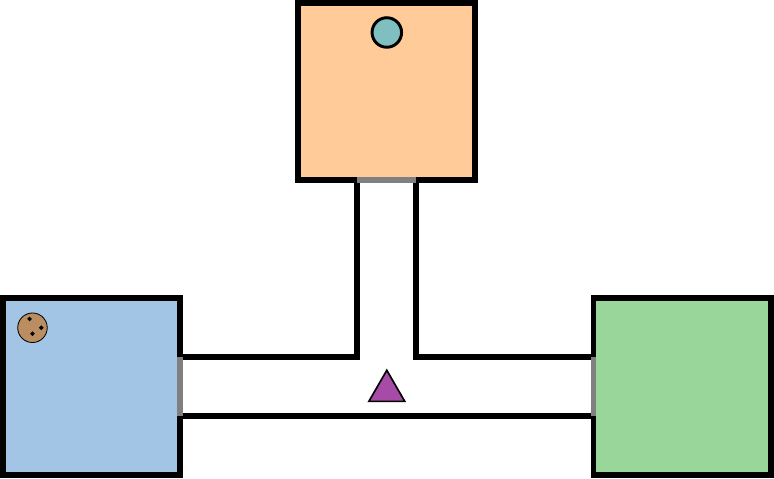}
        \subcaption{Cookie domain.}
        \label{fig:d_cookie}
    \end{subfigure}
    \begin{subfigure}[b]{.49\textwidth}
        \centering
        \includegraphics[width=0.5\textwidth]{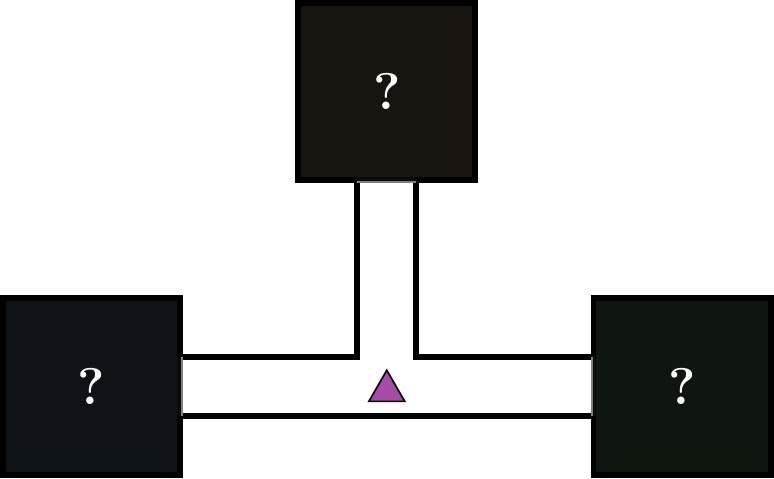}
        \subcaption{Agent's view.}
        \label{fig:d_view}
    \end{subfigure}
    \caption{In the cookie domain, the agent
      can only see what is in the current room.\\ Figures from [Toro Icarte et
        al., 2019].}
    \label{fig:domains}
\end{figure}

Here we show that the dynamics of the Cookie domain are described by a
star-free regular function. In particular, there is a cascade of flip-flops,
whose state along with the current input is sufficient to predict the evolution
of the domain.

\begin{figure}[t]
    \centering
    \includegraphics[width=0.7\textwidth]{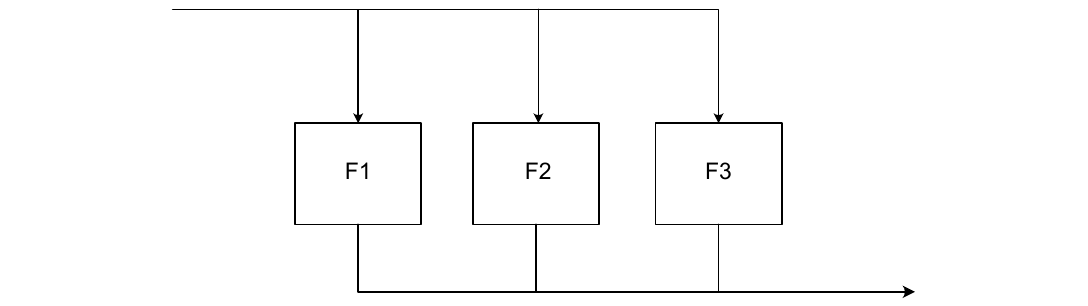}
    \caption{Diagram of the cascade for the Cookie Domain.}
    \label{fig:cookie-cascade}
\end{figure}

We show next that the dynamics of the Cookie domain are group-free regular as
they can be implemented by a cascade of three flip-flops $F_1,F_2,F_3$.
In fact, by three flip-flops that are independent of each other.
The diagram of the cascade is shown in Figure~\ref{fig:cookie-cascade}.

\paragraph{F1: Cookie around.}
The input function of the flip-flop semiautomaton $F_1$ is defined as follows:
\begin{itemize}
  \item
  if $\mathit{buttonPushed}$ is true, then return $\mathit{set}$,
  \item
  if $\mathit{cookieEaten}$ is true, then return $\mathit{reset}$,
  \item
  otherwise return $\mathit{read}$.
\end{itemize}
The initial state is $\mathit{low}$.
It is easy to see that the state of the semiautomaton is $\mathit{high}$ iff
there is a cookie in the blue or green room.

\paragraph{F2: Rooms visited.}
The input function of the flip-flop semiautomaton $F_2$ is defined as follows:
\begin{itemize}
  \item
    if $\mathit{buttonPushed}$ is true, then return $\mathit{reset}$,
  \item
    if $\mathit{greenRoom}$ is true, then return $\mathit{set}$,
  \item
    if $\mathit{blueRoom}$ is true, then return $\mathit{set}$,
  \item
    otherwise return $\mathit{read}$,
\end{itemize}
The initial state is $\mathit{low}$.
It is easy to see that the state of the semiautomaton is $\mathit{high}$ iff
the blue room or the green room have been visited since pushing the button.

\paragraph{F3: Room with cookie.}
The input function of the flip-flop semiautomaton $F_3$ is as follows:
\begin{itemize}
  \item
    if $\mathit{greenRoom}$ is true, and 
    $\mathit{cookie}$ is true, then return $\mathit{set}$,
  \item
    if $\mathit{greenRoom}$ is true, and 
    $\mathit{cookie}$ is false, then return $\mathit{reset}$,
  \item
    if $\mathit{blueRoom}$ is true, and 
    $\mathit{cookie}$ is true, then return $\mathit{reset}$,
  \item
    if $\mathit{blueRoom}$ is true, and 
    $\mathit{cookie}$ is false, then return $\mathit{set}$,
  \item
    otherwise $\mathit{read}$.
\end{itemize}
It is easy to see that, when there is a cookie in the blue or green room, and the
blue or green room have been visited since pushing the button, the state of the
semiautomaton is $\mathit{high}$ if the cookie is in the green room, and it is 
$\mathit{low}$ if the cookie is in the blue room.
The initial state does not matter, since the state of this semiautomaton will
not be used by the output function unless the blue room or the green room have
been visited.

\paragraph{Output function.}
The information provided by the states of the three flip-flops $F_1,F_2,F_3$
together with the current input is sufficient to predict the evolution of the
Cookie Domain.

Here we focus on the central aspect of the prediction, i.e., predicting whether
the agent will find a cookie upon entering a room. We omit predicting other
aspects, which regard location and can be predicted from the current location,
with no need of resorting to automaton's states.

The presence of a cookie is predicted as follows:
\begin{itemize}
  \item
    If the agent enters the hallway or the orange room, then the agent will not
    find a cookie.

  \item
    If the agent enters any room, and the state of $F_1$ is $\mathit{low}$, then
    the agent will not find a cookie.

  \item
    If the agent enters the green room, 
    the state of $F_1$ is $\mathit{high}$, 
    the state of $F_2$ is $\mathit{high}$, and 
    the state of $F_3$ is $\mathit{high}$, then 
    the agent will find a cookie.

  \item
    If the agent enters the blue room, 
    the state of $F_1$ is $\mathit{high}$, 
    the state of $F_2$ is $\mathit{high}$, and 
    the state of $F_3$ is $\mathit{high}$, then 
    the agent will not find a cookie.

  \item
    If the agent enters the green room, 
    the state of $F_1$ is $\mathit{high}$, 
    the state of $F_2$ is $\mathit{high}$, and 
    the state of $F_3$ is $\mathit{low}$, then 
    the agent will not find a cookie.

  \item
    If the agent enters the blue room, 
    the state of $F_1$ is $\mathit{high}$, 
    the state of $F_2$ is $\mathit{high}$, and 
    the state of $F_3$ is $\mathit{low}$, then 
    the agent will find a cookie.

  \item
    If the agent enters the green room, 
    the state of $F_1$ is $\mathit{high}$, and 
    the state of $F_2$ is $\mathit{low}$, then 
    the agent will find a cookie with probability $0.5$.

  \item
    If the agent enters the blue room, 
    the state of $F_1$ is $\mathit{high}$, and 
    the state of $F_2$ is $\mathit{low}$, then 
    the agent will find a cookie with probability $0.5$.
\end{itemize}


\end{document}